\documentclass{article}

\usepackage{microtype}
\usepackage{graphicx}
\usepackage{tabularx}
\usepackage{booktabs} 
\usepackage{amsmath}
\usepackage{amssymb}
\usepackage{amsfonts}
\usepackage{bbm}
\usepackage{subcaption}
\usepackage{algorithm}
\usepackage{algorithmic}
\usepackage{tikz}
\usepackage{multirow}
\usepackage{placeins}

\usepackage{hyperref}

\usepackage{float}

\usepackage[preprint]{icml2026}

\usepackage{mathtools}
\usepackage{amsthm}

\usepackage[capitalize,noabbrev]{cleveref}

\theoremstyle{plain}
\newtheorem{theorem}{Theorem}[section]

\newtheorem{lemma}[theorem]{Lemma}
\newtheorem{corollary}[theorem]{Corollary}
\theoremstyle{definition}
\newtheorem{definition}[theorem]{Definition}
\theoremstyle{remark}

\icmltitlerunning{Manifold-Aware Perturbations for Constrained Generative Modeling}

\begin{document}

\twocolumn[
  \icmltitle{Manifold-Aware Perturbations for Constrained Generative Modeling}



  \icmlsetsymbol{equal}{*}

  \begin{icmlauthorlist}
    \icmlauthor{Katherine Keegan}{yyy}
    \icmlauthor{Lars Ruthotto}{yyy}
  \end{icmlauthorlist}

  \icmlaffiliation{yyy}{Department of Mathematics, Emory University, Atlanta, Georgia, USA}

  \icmlcorrespondingauthor{Katherine Keegan}{katherine.emiri.keegan@emory.edu}

  \icmlkeywords{generative modeling, manifold constraints, diffusion models, normalizing flows}

  \vskip 0.3in
]



\printAffiliationsAndNotice{}  

\begin{abstract}
Generative models have enjoyed widespread success in a variety of applications. However, they encounter inherent mathematical limitations in modeling distributions where samples are constrained by equalities, as is frequently the setting in scientific domains. In this work, we develop a computationally cheap, mathematically justified, and highly flexible distributional modification for combating known pitfalls in equality-constrained generative models. We propose perturbing the data distribution in a constraint-aware way such that the new distribution has support matching the ambient space dimension while still implicitly incorporating underlying manifold geometry. Through theoretical analyses and empirical evidence on several representative tasks, we illustrate that our approach consistently enables data distribution recovery and stable sampling with both diffusion models and normalizing flows.

\end{abstract}

\section{Introduction}

Generative models, such as Denoising Diffusion Probabilistic Models (DDPMs) \cite{ho_denoising_2020}, Score-Based Diffusion Models (SBDMs) \cite{song_score-based_2021}, and Normalizing Flows (NFs) \cite{papamakarios_normalizing_2021}, have become ubiquitous for their ability to obtain high-quality samples from complex unknown distributions given finite data. 
These state-of-the-art paradigms model the data distribution $p_{0}$ as a pushforward of a tractable latent distribution $p_{T}$ (often Gaussian) under some function $f$, which are then connected via the change of variables formula
\begin{equation}
    \label{eq:ChangeOfVariables}
    p_{T}(x) = p_{0}(f^{-1}(x))|\det J_{f}(x)|^{-1}.
\end{equation}
The terminal Gaussian requirement ensures that the support of $p_{T}$ in $\mathbb{R}^{d}$ is \emph{non-degenerate}, or fully $d$-dimensional. However, in many scientific tasks, samples $x \sim p_{0}$ also satisfy some $h(x) = 0$, $h: \mathbb{R}^{d} \to \mathbb{R}^{d-m}$, often arising from physical laws or domain knowledge. The presence of an equality constraint with full-rank Jacobians means that $p_{0}$ lies on a $m$-dimensional manifold $\mathcal{M}$, making it a degenerate distribution embedded in $\mathbb{R}^{d}$. To be precise, $p_{0}$ is singular with respect to the $d$-dimensional Lebesgue measure. 

Generative models under this framework require that the supports of $p_{0}$ and $p_{T}$ have equal dimension. NFs rely on invertible transformations between $p_{0}$ and $p_{T}$ in Equation \eqref{eq:ChangeOfVariables}. With a degenerate $p_{0}$, the Jacobian determinant will be zero, leading to \textit{exploding log-determinants}. Similarly, in diffusion models on manifold-supported $p_{0}$, we encounter the issue of \textit{score explosion}, where the score field guiding samples backwards in virtual time $t \in [0, T]$ explodes near the manifold as $t \to 0$, leading to sampling instability, unusable samples, and inaccurate generated distributions.

In this work, we propose a computationally cheap and mathematically motivated modification to $p_{0}$, making it more amenable to training and sampling under any generative modeling paradigm. Our approach relies on (1) strategically noising data samples into the local normal bundle of $\mathcal{M}$ to form a perturbed distribution $p_{\sigma}$, where $\sigma$ parametrizes the perturbation strength, (2) training any unconstrained generative model, and then (3) projecting generated samples back to $\mathcal{M}$. We illustrate the utility of this approach on multiple datasets on both simple and complex manifolds. This work shows that our approach offers the following:
\begin{itemize}
    \item \textbf{Perfect distribution recovery} for linear constraints
    \item \textbf{Bounded total variation} for nonlinear constraints
    \item Guaranteed \textbf{constraint adherence}
    \item \textbf{Bounded scores and non-zero Jacobian determinants} for diffusion and NF approaches, respectively, combating known pitfalls in equality-constrained generative model training and sampling arising from numerical instability
    \item \textbf{Improved or competitive} distributional fidelity
\end{itemize}

\begin{figure*}[htbp]
\centering
\includegraphics[width=0.7\textwidth]{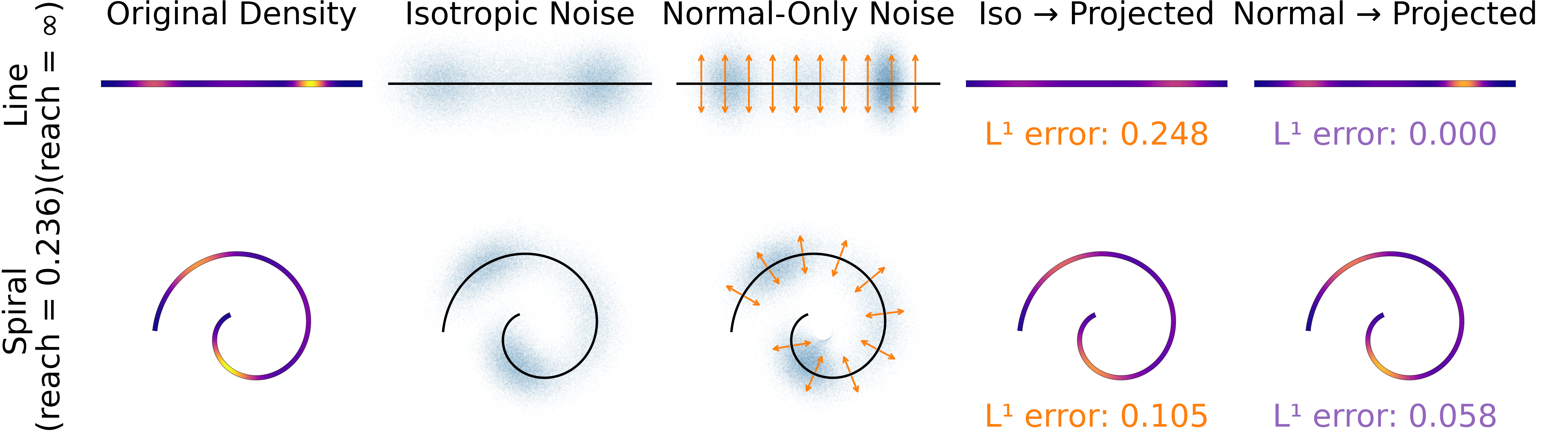}\\
\caption{An illustration of our proposed manifold-aware perturbation and comparison to isotropic perturbations showing exact recovery for linear manifolds and lower error for the spiral dataset.}
\label{fig:Illustrative_Example}
\end{figure*}
This paper shows that for equality-constrained distributions, $p_{\sigma}$ is an attractive alternative target distribution: it removes the need for constraint adherence during sampling, avoids score or Jacobian determinant explosion, and offers mathematical guarantees in the induced projected distribution. We will publish our code upon acceptance.

\section{Background and Related Literature}

Degeneracy of $p_{0}(x)$ prevents the existence of a Lebesgue density and leads to several well-known pathologies in generative modeling. For instance, SBDMs assume differentiability of $\log p_{0}(x)$ in $\mathbb{R}^{d}$, and NFs require evaluation of log-densities under invertible transforms. Both become ill-defined or unbounded if $p_{0}(x)$ is singular with respect to the $d$-dimensional Lebesgue measure. In particular, the dimension mismatch of the distributions in Equation \eqref{eq:ChangeOfVariables} prevents the existence of \emph{any} invertible mapping between $p_{0}(x)$ and $p_{T}(x)$. In this work, we focus on the case where degeneracy is a result of being constrained by an $m$-dimensional manifold $\mathcal{M}$ with full-rank Jacobians at all points $x \in \mathcal{M}$. The challenge of equality-constrained generative modeling has given rise to a variety of techniques, which may loosely be placed into the following two categories.

\paragraph{Modifying the underlying distribution.} Many works directly modify the underlying distribution being learned by a generative model, with several techniques employing isotropic noising. This is present in well-known work on improving diffusion model effectiveness when data distributions lie on a lower-dimensional manifold \cite{song_generative_2019}. Similarly, SoftFlow \cite{kim_softflow_2020} adds various levels of Gaussian noise to combat the dimension mismatch between $p_{0}$ and $p_{T}$ during NF training and sampling. However, such isotropic approaches \emph{immediately} risk distribution distortion if post-hoc projection is used to ensure constraint adherence,  even for the simplest linear constraints.

\paragraph{Modifying the learning/sampling algorithm.} Other works design novel constrained generative model families, develop constraint-aware training objectives, or enforce constraints during the sampling process. In the first case, many works focus on the case where the constraints define a Riemannian manifold. In this setting, one can make use of differential geometry and stochastic processes on manifolds to define manifold-intrinsic SDEs, effectively constructing generative models which operate entirely on the constraint surface \cite{bortoli_riemannian_2022, mathieu_riemannian_2020}. While elegantly ensuring constraint satisfaction, deriving such SDEs for general $\mathcal{M}$ is nontrivial. Moreover, Riemannian generative models generally require many nontrivial manifold-aware computations throughout training and sampling, which can be computationally expensive and require careful and efficient implementation \cite{bortoli_riemannian_2022}. Other recent developments in equality-constrained diffusion modeling include Projected Diffusion Models (PDMs) \cite{christopher_constrained_2024}, wherein the learned scores of an unconstrained model are iteratively projected to $\mathcal{M}$ during sampling, and Physics-Informed Diffusion Models (PIDMs) \cite{bastek_physics-informed_2024}, in which constraint residuals are computed at $t = 0$ to form a physics-informed regularization loss term. However, these approaches necessitate high additional computational time: for PDM, this is needed during sampling, and for PIDM, the need to sample during training drastically affects training time, and neither approach offers guarantees of $p_{0}(x)$ recovery. 

\section{Method}
\label{sec:Method}

In this paper, we present a novel contribution to works which aim to modify the underlying distribution being learned by a generative model. We propose an elegant distributional modification wherein $p_{0}$ is strategically distorted in a manifold-aware manner. In particular, we perturb $p_{0}(x)$ strictly in the normal direction(s) to $\mathcal{M}$ at each $x$, implicitly incorporating underlying manifold geometry while setting up the modified distribution to have minimal error upon post-hoc manifold projection. Formally, this is as follows:

\begin{definition}[$p_{\sigma}$]
\label{def:p_sigma}
Draw $Z \sim p_{0}$ and $N | Z = z \sim \mathcal{N}(0, \sigma^{2} I_{N_{z}\mathcal{M}})$, where $N_{z}\mathcal{M}$ denotes the normal space of $\mathcal{M}$ at $z$. Set $X: = Z + N \in \mathbb{R}^{d}$, and write $p_{\sigma} = \textup{Law}(X)$. We refer to $p_{\sigma}$ as the perturbed or lifted distribution associated with $p_{0}$ at noise scale $\sigma > 0$.
\end{definition}

Operationally, one may implement the formation of $p_{\sigma}$ by computing local Jacobians of the manifold surface around each sample $x \sim p_{0}$, using these to form bases for the normal space, and perturbing each sample with Gaussian noise in the normal directions. 

The reason for the intentionally anisotropic construction of $p_{\sigma}$ is twofold: first, noising strictly in normal directions \emph{ensures that $p_{\sigma}$ is no longer degenerate} in the sense of its support in $\mathbb{R}^{d}$, vastly improving its compatibility with state-of-the-art (SOTA) out-of-the-box unconstrained generative modeling paradigms. Second, and unique to our work, anisotropic noising in this manner produces limited distortion and reduces artifacts in the induced distribution after projection, since tangential distortion along the manifold is minimal in curved regions and zero for linear regions.

To return to $\mathcal{M}$, we consider a (possibly multi-valued) nearest-point projection $\Pi: \mathbb{R}^{d} \to \mathcal{M}, \Pi(x) \in \arg \min_{z \in \mathcal{M}} \| x - z\|_{2}$, where ties are broken arbitrarily when the minimizer is not unique.

\begin{algorithm}
\begin{algorithmic}[1]
\REQUIRE Constraint manifold $\mathcal{M}$, samples $x_{0}^{i} \sim p_{0}$
\STATE Perturb each sample $x_{0}^{i}$ strictly in normal direction of $\mathcal{M}$ to obtain $x_{\sigma}^{i} \sim p_{\sigma}$
\STATE Train generative model on $\{ x_{\sigma}^{i} \}$ to model $p_{\sigma}$
\STATE Sample $\hat{x}_{\sigma}^{i} \sim p_{\sigma}$
\STATE Project $\hat{x}_{\sigma}^{i}$ to $\mathcal{M}$ using either analytic constraint projector or iterative minimization of $\arg \min_{z \in \mathcal{M}} \|x - z \|_{2}$
\end{algorithmic}
\caption{Procedure for forming the modified distribution $p_{\sigma}$ and sampling under our approach.}
\end{algorithm}


\section{Error Analysis}
\label{sec:theory}

The goal of this section is as follows: first, we discuss the nondegeneracy of $p_{\sigma}$ and its advantages for generative modeling. Then, we prove that if constraints are linear, applying $\Pi$ to samples from $p_{\sigma}(x)$ yields exactly the desired original intrinsic distribution. Finally, we bound the total variation (TV) distance \emph{on the manifold} between the original distribution and the pushforward of $p_{\sigma}$ upon nearest-point projection. We have constructed this section to be fairly self-contained so that a mathematically-inclined reader may read this section rigorously, while other readers can read the preceding section to understand our approach. For clarity, all variables and abbreviations are also described in the notation glossary provided in Appendix \ref{app:notation_glossary}.

Let $X: \Omega \to \mathbb{R}^{d}$ define a random variable on a probability space $(\Omega, \mathcal{F}, \mathbb{P})$. Its law is the measure $\mu(A) = \mathbb{P}(X \in A)$, with $A$ being an element of the Borel $\sigma$-algebra $\mathcal{B}(\mathbb{R}^{d})$. In order for the law $\mu$ to admit a density, one must select a reference measure $\nu$ on $\mathbb{R}^{d}$. If $\mu$ is absolutely continuous with respect to $\nu$, then the Radon-Nikodym theorem guarantees the existence of a measurable function $p = \frac{d\mu}{d\nu}$ such that
\begin{align*}
    \mu(A) = \int_{A} p(x) d\nu(x), A \in \mathcal{B}(\mathbb{R}^{d}).
\end{align*}

Here, $p$ is interpreted as the \emph{density} of $\mu$ with respect to $\nu$.

The standard reference measure for distributions over Euclidean space is the $d$-dimensional Lebesgue measure $\lambda^{d}$. A distribution $\mu$ is said to be \emph{full-dimensional} if $\mu \ll \lambda^{d}$, in which case it admits a Lebesgue density $p(x)$. However, we say a law $\mu$ is \emph{singular} if $\mu$ and $\lambda^{d}$ are mutually singular, meaning that there exists a measurable set $A \subseteq \mathbb{R}^{d}$ such that $\mu(A) = 1$ and $\lambda^{d}(A) = 0$. Singular distributions arise naturally in many applications where data are constrained to lie on a lower-dimensional subset of the ambient space.

In this work, we refer to such singular laws as \emph{degenerate distributions.} A canonical example of degeneracy arises when the support of $\mu$ is contained in an $m$-dimensional subset (e.g. a manifold $\mathcal{M}$) of $\mathbb{R}^{d}$ with $m < d$. In this case, $\mu$ may admit a density with respect to an $m$-dimensional reference measure (e.g., the $m$-dimensional Hausdorff measure $\mathcal{H}^{m}$), but critically, \emph{no density exists with respect to Lebesgue measure in the ambient space.} Such situations appear frequently in scientific domains where physical constraints restrict the degrees of freedom of the data.

To clarify the kinds of manifolds we study here, we briefly introduce some definitions from \cite{aamari_estimating_2019}.

\begin{definition}[Medial Axis]
    The \emph{medial axis}, denoted as $\Sigma(\mathcal{M})$, is the set of points $x \in \mathbb{R}^{d}$ where the nearest-point projection is multi-valued. 
\end{definition}

\begin{definition}[Reach]
    The \emph{reach} of a manifold $\mathcal{M}$ is the minimal distance from any point on $\mathcal{M}$ to $\Sigma(\mathcal{M})$.
\end{definition}

Throughout our theoretical analyses, we assume $\mathcal{M}$ to be \textbf{compact, $\mathcal{C}^{2}$ (twice-differentiable), and closed with positive reach.} In practice, some gentle smoothing on non-smooth regions (e.g. mesh edges) may be needed. However, the only requirement is the ability to compute local Jacobians on $\mathcal{M}$ to extract normal vectors (where we assume Jacobians are full-rank at each $x \in \mathcal{M}$), so regularity may be relaxed from $\mathcal{C}^{2}$ to $\mathcal{C}^{1}$ in implementation. In fact, for mere implementation and not theoretical justification, $\mathcal{M}$ need only be differentiable at the points where $p_{\sigma}$ is nonzero.

We assume $p_{0}(x)$ lies on $\mathcal{M}: \{ x \in \mathbb{R}^{d} : h(x)= 0\}$, where $h$ is analytically known. In particular, $p_{0}$ is a density with respect to the $m$-dimensional Hausdorff measure $\mathcal{H}^{m}$, with law $\mu_{0}$ where $d \mu_{0} = p_{0}$. We assume  $p_{0}$ to be Lipschitz on $\mathcal{M}$. We write the codimension of $\mathcal{M}$ as $k = d - m$. We also assume $p_{0}$ to vary in all tangential directions to ensure that $p_{0}$ does not live on a proper submanifold of $\mathcal{M}$ itself.

\begin{theorem}[Geometric Non-degeneracy of $p_{\sigma}$]
    The new distribution $p_{\sigma}$ is not strictly supported on a lower-dimensional manifold.
\end{theorem}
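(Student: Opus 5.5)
Our plan is to show the stronger statement that $p_{\sigma}$ is absolutely continuous with respect to $\lambda^{d}$. This implies that it cannot be concentrated on any set of dimension $<d$, since such sets are $\lambda^{d}$-null. The natural tool is the normal-bundle parametrization. Let $N\mathcal{M} = \{(z,v): z\in\mathcal{M},\, v\in N_z\mathcal{M}\}$, a $d$-dimensional manifold (of class $\mathcal{C}^{1}$, since $\mathcal{M}$ is $\mathcal{C}^{2}$). Consider the endpoint map $E(z,v) = z+v$. By Definition~\ref{def:p_sigma}, $p_{\sigma} = E_{\#}\,\rho$, where $\rho$ is the measure on $N\mathcal{M}$ with density
\[
p_0(z)\,(2\pi\sigma^2)^{-k/2}\exp\!\bigl(-\|v\|^2/(2\sigma^2)\bigr)
\]
with respect to $\mathcal{H}^{m}(dz)\otimes\mathcal{H}^{k}_{N_z\mathcal{M}}(dv)$, i.e.\ the natural $d$-dimensional volume on $N\mathcal{M}$. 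The measure $\rho$ is therefore absolutely continuous with respect to $d$-dimensional volume on a $d$-dimensional manifold.

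\textbf{Tube region.} Fix $r$ smaller than the reach $\tau$ of $\mathcal{M}$. Work in a local orthonormal frame $\{n_1,\dots,n_k\}$ of the normal bundle over a chart $\phi:U\subset\mathbb{R}^m\to\mathcal{M}$, and write $E(u,s) = \phi(u)+\sum_j s_j n_j(u)$. The Jacobian has the form $[\,(I - \sum_j s_j S_{n_j})D\phi \mid n_1 \cdots n_k\,]$ up to normal components of $\partial_u n_j$, which do not affect the determinant after block elimination. Here $S_{n}$ is the shape operator. Its determinant equals $\det(I-\sum_j s_jS_{n_j})$ times the area factor of the chart. For $\|s\|<\tau$, the eigenvalues of $\sum_j s_jS_{n_j}$ have modulus $<1$, because principal curvatures are bounded by $1/\tau$. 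So the Jacobian is nonsingular there. Moreover, $E$ is injective on $\{\|v\|<\tau\}$, since points within distance $\tau$ of $\mathcal{M}$ have a unique nearest point and $v$ is recovered as $x-\Pi(x)$. Hence $E$ restricted to $\{\|v\|<r\}$ is a $\mathcal{C}^1$ diffeomorphism onto the tube $T_r$. The change of variables formula then gives $E_{\#}(\rho|_{\|v\|<r})$ an explicit Lebesgue density on $T_r$:
\[
p_{\sigma}(x)\propto p_0(\Pi x)\,\varphi_\sigma(x-\Pi x)\,/\det\bigl(I-\textstyle\sum_j s_jS_{n_j}\bigr).
\]

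\textbf{Outside the tube.} For $\|v\|\ge r$ we need the image of each $\rho$-null set to be Lebesgue-null. We will show the stronger fact that $E$ maps $\rho$-absolutely continuous measures to Lebesgue-absolutely continuous measures. To do so we invoke the area formula for the $\mathcal{C}^{1}$ map $E$ between $d$-manifolds. The set where $\det DE=0$ (focal points) has image of Lebesgue measure zero by Sard's theorem. On its complement, $E$ is a local diffeomorphism, so it pushes absolutely continuous measures to absolutely continuous ones. Covering by countably many charts handles the lack of global injectivity.

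One subtlety remains: the set $C$ of critical points itself may have positive $\rho$-mass, and its image is null. Such mass would then be singular, so we must also argue $\rho(C)=0$. Here $\det DE=\det(I-\sum_j s_jS_{n_j})\cdot(\text{area factor})$ is a polynomial in $s$ that is not identically zero for each fixed $u$, since it equals $1$ at $s=0$. Its zero set in each fiber is therefore $\mathcal{H}^k$-null, and $\rho(C)=0$ by Fubini. We expect this step, namely controlling the focal set and combining Sard with the fiberwise polynomial argument, to be the main technical obstacle. The tube computation is routine given positive reach. The conclusion is $p_{\sigma}\ll\lambda^d$, so $p_\sigma$ assigns zero mass to every lower-dimensional manifold, proving the theorem.
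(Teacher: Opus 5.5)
Your proof is correct, and it takes a genuinely different and considerably stronger route than the paper. The paper's proof is a one-line support argument: Gaussian noise in each normal fiber thickens $p_0$ into a tube whose support contains the affine fibers $z+N_z\mathcal{M}$, so the support is $d$-dimensional. That conveys the intuition but only addresses the topological support. It never shows that $p_\sigma$ has a Lebesgue density, which is what the generative-modeling motivation and the later use of $\int p_\sigma(x)\,dx$ in Lemma~\ref{lemma:Gaussian_Tails} actually require. It also leaves implicit where $p_0\ll\mathcal{H}^m$ enters, even though the statement fails without it (e.g.\ for atomic $p_0$).

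Your endpoint-map argument proves $p_\sigma\ll\lambda^d$ and gives an explicit density in the reach tube. The step you flagged as the main obstacle is in fact short and complete. Since $\det DE$ is fiberwise a polynomial in $s$ equal to the area factor at $s=0$, the critical set $C$ is $\rho$-null by Fubini. Off $C$, $E$ is a local $\mathcal{C}^1$ diffeomorphism, so preimages of Lebesgue-null sets are $\rho$-null. This global argument already covers $\|v\|<r$, so positive reach and the tube computation are needed only for the explicit density formula, not for non-degeneracy itself. Likewise Sard's theorem is superfluous, because what matters is $\rho(C)=0$, not $\lambda^d(E(C))=0$.

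Two small phrasing fixes. First, the property you need is that preimages of Lebesgue-null sets are $\rho$-null, not that images of $\rho$-null sets are Lebesgue-null. Second, injectivity of $E$ on $\{\|v\|<\tau\}$ rests on the normal-segment property $\Pi(z+v)=z$ for $v\in N_z\mathcal{M}$ with $\|v\|<\textup{reach}(\mathcal{M})$ (Federer), not merely on uniqueness of the nearest point.
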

\begin{proof}
Adding Gaussian noise in $N_{z}\mathcal{M}$ at each $z\in\mathcal{M}$ thickens $p_{0}$ into a tubular distribution centered around $\mathcal{M}$ whose support contains all affine normal fibers $z + N_{z}\mathcal{M}$, yielding $d$-dimensional support.
\end{proof}

We now discuss the law of the projected samples.

\begin{definition}[Distribution of the Projected Samples]
Define the pushforward of $p_{\sigma}$ under $\Pi$ as $\Pi_{\#}p_{\sigma}$. We will often refer to $\Pi_{\#}p_{\sigma}$ as $\tilde{p}_{\sigma}$ for conciseness.
\end{definition}

The nearest-point projection $\textup{Proj}_{\mathcal{M}}(x)$ is unique if $x$ lies in specific tubes around $\mathcal{M}$. Our bounds for nonlinear $\mathcal{M}$ are executed entirely within such tubes and simply bound possible contributions from outside the tubes, and therefore the choice of tie-breaking in the selection of $\Pi$ is arbitrary.

We first prove that $\Pi_{\#}p_{\sigma}$ and $p_{0}$ are exactly identical when $h$ is linear (i.e. when $\mathcal{M}$ has no curvature). 

\begin{theorem}[Perfect Recovery under Linear Constraints]
\label{thm:LinearConstraints}
    Let $h: \mathbb{R}^{d} \to \mathbb{R}^{d-m}$ be linear, e.g. $h(x) = Ax-b, A \in \mathbb{R}^{m \times d}, b \in \mathbb{R}^{m}$. Then, $p_{0}(x) = \tilde{p}_{\sigma}(x)$ for all $x \in \mathcal{M}$.
\end{theorem}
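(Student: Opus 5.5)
The plan is to show that, for an affine $\mathcal{M}$, the nearest-point projection $\Pi$ exactly undoes the normal perturbation of Definition~\ref{def:p_sigma} almost surely. Recovery of $p_{0}$ then follows at the level of laws, with no density computation needed.

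First we fix the geometry. Write $h(x) = Ax - b$ with $A \in \mathbb{R}^{k \times d}$ of full row rank $k = d-m$; this is the paper's full-rank Jacobian assumption, and the statement's $A \in \mathbb{R}^{m\times d}$ should be read with $k$ rows. Then $\mathcal{M} = \{x : Ax = b\}$ is an affine subspace whose tangent space $T_{z}\mathcal{M} = \ker A$ and normal space $N_{z}\mathcal{M} = \operatorname{range}(A^{\top})$ are the same at every $z \in \mathcal{M}$. Consequently the conditional law of $N$ given $Z=z$ is the fixed Gaussian $\mathcal{N}(0, \sigma^{2} P_{N})$, where $P_{N} = A^{\top}(AA^{\top})^{-1}A$. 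This law does not depend on $z$.

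Next we compute $\Pi$ explicitly. An affine subspace has empty medial axis, so $\Pi$ is single-valued everywhere and the tie-breaking rule is irrelevant. It is given by the orthogonal projection formula
\[
\Pi(x) = x - A^{\top}(AA^{\top})^{-1}(Ax - b).
\]
This follows by minimizing $\|x - z\|_{2}^{2}$ subject to $Az = b$ with Lagrange multipliers, or simply by noting that $x - \Pi(x) \in \operatorname{range}(A^{\top}) \perp \ker A$. Now take $X = Z + N$ with $Z \in \mathcal{M}$ and $N \in \operatorname{range}(A^{\top})$. Write $N = A^{\top} w$ for some $w$; then $AX - b = AN$, so
\[
\Pi(X) = Z + N - A^{\top}(AA^{\top})^{-1}AA^{\top}w = Z + N - N = Z.
\]
This identity holds pointwise on $\Omega$, not merely almost surely.

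Finally, $\Pi(X) = Z$ pointwise gives $\Pi_{\#}\mathrm{Law}(X) = \mathrm{Law}(Z)$, that is, $\tilde{p}_{\sigma} = \Pi_{\#} p_{\sigma} = \mu_{0}$ as measures on $\mathcal{M}$. Hence the two measures have the same density with respect to $\mathcal{H}^{m}$ on $\mathcal{M}$, so $\tilde{p}_{\sigma} = p_{0}$. To obtain the pointwise equality "for all $x \in \mathcal{M}$" rather than $\mathcal{H}^{m}$-a.e., we use the Lipschitz continuity of $p_{0}$ to select the continuous representative.

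There is no real analytic obstacle in this argument. The main point needing care is the one just used: turning equality of laws into the stated pointwise equality of densities. This works by agreeing that $\tilde{p}_{\sigma}$ denotes the density of $\Pi_{\#}p_{\sigma}$ with respect to $\mathcal{H}^{m}$, which is defined only up to null sets, and taking its continuous version.
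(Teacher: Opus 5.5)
Your proof is correct and follows the same idea as the paper's one-line argument: for an affine $\mathcal{M}$, the nearest point on $\mathcal{M}$ to $z+n$ with $n \in N_z\mathcal{M}$ is $z$ itself, so $\Pi(X)=Z$ and $\Pi_{\#}p_{\sigma}=\mu_0$. You also fill in details the paper leaves implicit: the explicit formula $\Pi(x)=x-A^{\top}(AA^{\top})^{-1}(Ax-b)$ and single-valuedness of $\Pi$, the row-count typo in $A$, and the choice of the continuous representative that makes the density equality hold pointwise.
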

\begin{proof}
    Recall that $p_{\sigma}$ is generated by adding noise strictly in $N_{x'}\mathcal{M}$ to samples $x'$ to obtain ambient samples $y$. For linear $\mathcal{M}$, the nearest point from such $y$ to $\mathcal{M}$ is $x'$ itself.
\end{proof}

We remark that this is already immediately an advantage of our proposed approach compared to existing isotropic techniques: at any nonzero noising scale, isotropic noising followed by post-projection \emph{immediately} induces distortion, as is seen in the linear manifold example in Figure \ref{fig:Illustrative_Example}.

We now bound the total variation (TV) for general nonlinear $\mathcal{M}$ using the (intrinsic) TV as it is defined for probability densities with the $\mathcal{H}^{m}$ measure on $\mathcal{M}$ \cite{gibbs_choosing_2002}:

\begin{align*}
    \textup{TV}(p, q) = \frac{1}{2}\int_{\mathcal{M}} |p(x) - q(x)| d \mathcal{H}^{m}(x).
\end{align*}

\begin{theorem}[Total Variation Bound]
\label{thm:TVBound}
    The total variation distance (on $\mathcal{M}$, in $\mathcal{H}^{m}$) between $\tilde{p}_{\sigma}$ and $p_{0}$ is bounded as 
    \begin{align*}
        \textup{TV}(\tilde{p}_{\sigma}, p_{0}) \leq C_{1} e^{\frac{-C_{2}r^{2}}{\sigma^{2}}},
    \end{align*}
    with $r < \textup{reach}(\mathcal{M})$ and constants depending only on $k$.
\end{theorem}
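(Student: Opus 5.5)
The idea is to split on whether the normal perturbation stays inside a tube of radius $r$ around $\mathcal{M}$. On that event the projection undoes the perturbation exactly; off it, the error is controlled by a Gaussian tail. Fix $r < \textup{reach}(\mathcal{M})$ and consider the event $E = \{\|N\| < r\}$, where $X = Z + N$ is as in Definition~\ref{def:p_sigma}. The key geometric fact is the standard property of sets of positive reach (Federer): for any $z \in \mathcal{M}$ and any $v \in N_{z}\mathcal{M}$ with $\|v\| < \textup{reach}(\mathcal{M})$, the nearest-point projection of $z + v$ is unique and equals $z$. Consequently, on $E$ we have $\Pi(X) = Z$ exactly, whatever tie-breaking rule is used, which is the same mechanism as in Theorem~\ref{thm:LinearConstraints}. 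On $E^{c}$ we know nothing about $\Pi(X)$ beyond $\Pi(X) \in \mathcal{M}$.

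Next comes a coupling bound on total variation. Writing $\tilde\mu_\sigma = \textup{Law}(\Pi(X))$ and $\mu_0 = \textup{Law}(Z)$, both measures live on $\mathcal{M}$. For any Borel $A \subseteq \mathcal{M}$,
\begin{align*}
|\tilde\mu_\sigma(A) - \mu_0(A)| = |\mathbb{P}(\Pi(X)\in A) - \mathbb{P}(Z \in A)| \le \mathbb{P}(\Pi(X) \neq Z) \le \mathbb{P}(E^{c}).
\end{align*}
Taking the supremum over $A$ gives $\textup{TV}(\tilde p_\sigma, p_0) \le \mathbb{P}(\|N\| \ge r)$. I will also note that $\tilde\mu_\sigma$ is absolutely continuous with respect to $\mathcal{H}^{m}$, so the density formulation used in the paper's definition of TV agrees with the supremum formulation. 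This holds because $\tilde\mu_\sigma$ is at most $\mu_0$ (which has a density) plus a sub-probability measure. If that remainder piece is not obviously absolutely continuous, I would define TV via the supremum over sets. Alternatively, one can interpret $\tilde p_\sigma$ as the Radon--Nikodym derivative of the absolutely continuous part, which only decreases the integral of $|p - q|$ relative to the total variation norm of the measures.

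The remaining step is a Gaussian tail estimate. Conditionally on $Z = z$, $N$ is a standard Gaussian of scale $\sigma$ on the $k$-dimensional space $N_{z}\mathcal{M}$. Hence $\|N\|^{2}/\sigma^{2} \sim \chi^{2}_{k}$ regardless of $z$, and so unconditionally $\mathbb{P}(\|N\|\ge r) = \mathbb{P}(\chi^{2}_{k} \ge r^{2}/\sigma^{2})$. A Chernoff bound gives
\begin{align*}
\mathbb{P}(\chi^2_k \ge t) \le \inf_{0<\lambda<1/2} (1-2\lambda)^{-k/2} e^{-\lambda t}.
\end{align*}
Choosing $\lambda = 1/4$ yields $\mathbb{P}(\chi^2_k \ge t) \le 2^{k/2} e^{-t/4}$. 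This gives $C_{1} = 2^{k/2}$ and $C_{2} = 1/4$, both depending only on $k$, which completes the bound.

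I expect the main obstacle to be the geometric step: justifying that $\Pi(z+v) = z$ for normal $v$ with $\|v\|$ less than the reach, and that $\Pi$ is single-valued there. I would cite Federer's characterization of reach, namely that $\textup{Proj}_{\mathcal{M}}$ is unique on the open tube of radius $\textup{reach}(\mathcal{M})$ and that $z + tv$ projects to $z$ for $t$ below the reach. Alternatively, in the $\mathcal{C}^{2}$ compact setting, I would invoke the tubular neighborhood theorem. The rest is bookkeeping; the measurability of $\Pi$ under arbitrary tie-breaking is harmless because $\Pi$ is single-valued off the medial axis, and the bound only uses $\Pi$ on $E$.
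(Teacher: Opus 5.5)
Your proposal is correct and follows the paper's proof essentially step for step: the coupling $(Z,\Pi(Z+N))$, the reach property forcing $\Pi(Z+N)=Z$ on $\{\|N\|<r\}$, and the Chernoff bound with $\lambda=1/4$, which gives $C_{1}=2^{k/2}$ and $C_{2}=1/4$. Your remark that the density-based TV is still bounded by the measure-level TV even if $\Pi_{\#}p_{\sigma}$ has a part singular to $\mathcal{H}^{m}$ is a worthwhile refinement; the paper simply asserts that the projected law has a density.
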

\begin{proof}
    See Appendix \ref{app:theory}. The idea is to construct a coupling between $\tilde{p}_{\sigma}$ and $p_{0}$. 
\end{proof}

\begin{corollary}[Local Reach Bound]
\label{cor:local_reach_tube_bound}
    Define $B_{\rho}(z) =\{ z + n : \|n\| < \rho\}$. Let $\tau(\cdot)$ denote the \emph{pointwise reach} function
    \begin{align*}
        \tau(z):= \sup \{ \rho > 0 : B_{\rho}(z)\textup{ uniquely projects to } z \}.
    \end{align*}
    Then, for each $z \in \mathcal{M}$,
    \begin{align*}
    D_{\sigma}(z) := \mathbb{E} \left [ \textup{dist}\left(z, \Pi(z+n) \right) \right] \leq C_{1} e^{\frac{-C_{2}\tau(z)^{2}}{\sigma^{2}}}
    \end{align*}
and consequently
\begin{align*}
    \textup{TV}(\tilde{p}_{\sigma}, p_{0}) \leq 2 L \sigma C_{1} \int_{\mathcal{M}} e^{-C_{2} \frac{\tau(z)^{2}}{\sigma^{2}}} d \mathcal{H}^{m}(z).
\end{align*}
\end{corollary}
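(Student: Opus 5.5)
The plan is to work pointwise in $z\in\mathcal{M}$, following the coupling argument behind Theorem~\ref{thm:TVBound} but with the global radius $r$ replaced by the local radius $\tau(z)$.

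\textbf{Step 1: bound $D_{\sigma}(z)$ by the escape probability.} Fix $z$ and let $n\sim\mathcal{N}(0,\sigma^{2}I_{N_{z}\mathcal{M}})$ be a $k$-dimensional Gaussian in the normal space. By the definition of $\tau(z)$, on the event $\{\|n\|<\tau(z)\}$ the point $z+n$ projects uniquely to $z$. Hence $\textup{dist}(z,\Pi(z+n))=0$ on that event. On the complement, compactness of $\mathcal{M}$ gives $\textup{dist}(z,\Pi(z+n))\le \textup{diam}(\mathcal{M})$. Therefore
\begin{align*}
D_{\sigma}(z)\le \textup{diam}(\mathcal{M})\,\mathbb{P}\left(\|n\|\ge\tau(z)\right).
\end{align*}

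\textbf{Step 2: chi tail bound.} The variable $\|n\|^{2}/\sigma^{2}$ is $\chi^{2}_{k}$. A standard Chernoff bound, for instance via $\mathbb{E}e^{\lambda\chi^{2}_{k}}=(1-2\lambda)^{-k/2}$ with $\lambda=1/4$, gives
\begin{align*}
\mathbb{P}(\|n\|\ge t)\le 2^{k/2}e^{-t^{2}/(4\sigma^{2})}.
\end{align*}
This yields the first claim with $C_{2}=1/4$ and $C_{1}$ proportional to $2^{k/2}$. The diameter factor must be absorbed into $C_{1}$; alternatively, one can measure $D_\sigma$ by the indicator of displacement, as the TV coupling needs.

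\textbf{Step 3: from the pointwise bound to TV.} Use the coupling $(Z,\Pi(Z+N))$. By the coupling inequality,
\begin{align*}
\textup{TV}\le\mathbb{P}\left(\Pi(Z+N)\ne Z\right)=\int_{\mathcal{M}}p_{0}(z)\,\mathbb{P}\left(\|n\|\ge\tau(z)\right)\,d\mathcal{H}^{m}(z).
\end{align*}
Bound $p_{0}\le L$, interpreting $L$ as the Lipschitz constant together with the supremum of $p_0$, which is finite on compact $\mathcal{M}$. Inserting the tail bound from Step~2 then gives the stated integral form.

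\textbf{Main obstacle.} The extra factor $2\sigma$ does not come out of the crude coupling. I expect it to arise instead from the paper's finer argument, in which TV is controlled by expected displacement times the Lipschitz constant, i.e. $\int |p_0(z)-p_0(z')|$. Small escapes are then weighted by $\sigma$-scale displacement. I would first try to reproduce that bound: write $\tilde p_\sigma-p_0$ via the displacement $z\mapsto\Pi(z+n)$, apply the Lipschitz bound to obtain $L\,D_\sigma(z)$, and then bound $D_\sigma$ through the Gaussian moment $\mathbb{E}[\|n\|\mathbf{1}_{\|n\|\ge\tau}]\lesssim\sigma e^{-C_2\tau^2/\sigma^2}$. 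That moment is what produces the factor $\sigma$. Measurability of $\tau$ also needs checking; lower semicontinuity of $\tau$ should suffice.
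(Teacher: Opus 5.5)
\textbf{The gap is in the total-variation part, and the repair you plan would not close it.} Your Step~3 actually proves
\[
\textup{TV}(\tilde p_\sigma,p_0)\le\mathbb{P}\bigl(\|N\|\ge\tau(Z)\bigr)\le\|p_0\|_\infty\,C_1\int_{\mathcal{M}}e^{-C_2\tau(z)^2/\sigma^2}\,d\mathcal{H}^m(z).
\]
Note that $\mathbb{P}(\Pi(Z+N)\neq Z)$ is only bounded by $\mathbb{P}(\|N\|\ge\tau(Z))$, not equal to it, since $\|n\|\ge\tau(z)$ does not force $\Pi(z+n)\neq z$. This bound has no factor $\sigma$, so it is weaker than the claim by a factor of order $1/\sigma$ as $\sigma\to0$. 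Also, the constant you substitute for $L$ is not the paper's $L$; in the paper's proof $L$ is the Lipschitz constant of $\Pi$ on the pointwise reach tube.

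Your proposed fix, bounding TV by $L$ times the expected displacement, is exactly the paper's last step. The paper only asserts it (``a pointwise displacement $D_\sigma$ induces an $LD_\sigma(z)$ deviation in densities''). Its constants also do not follow from its own chain: the paper's pointwise bound is $D_\sigma(z)\le2\sigma\sqrt{k}\,(C_1e^{-C_2\tau(z)^2/\sigma^2})^{1/2}$, and the $\sqrt{k}$ and the square root vanish in the integrated form. More importantly, the inference is not valid in general. An expected displacement controls a transport (Wasserstein-type) distance, not total variation. By the coarea formula the density of $\Pi_\#p_\sigma$ carries a factor $1/J\Pi$. Neither a Lipschitz bound on $p_0$ nor a Lipschitz bound on $\Pi$ (which bounds $J\Pi$ from above, not below) prevents escaped mass from piling up where $J\Pi$ is small. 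Bounding $|p_0(\Pi(z+n))-p_0(z)|$ is not bounding $|\tilde p_\sigma-p_0|$.

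\textbf{How to close it.} Contrary to your ``main obstacle'' remark, the factor $\sigma$ does come out of the crude coupling. The lemma's choice $\lambda=1/4$ discards half of the Gaussian exponent, and you can spend part of that slack. Taking $\lambda=3/8$ and using $\sup_{u>0}\sqrt{u}\,e^{-u/8}=2/\sqrt{e}$,
\[
\mathbb{P}(\|N\|\ge\tau)\le2^{k}e^{-3\tau^2/(8\sigma^2)}\le\frac{2^{k+1}}{\sqrt{e}}\,\frac{\sigma}{\tau}\,e^{-\tau^2/(4\sigma^2)}.
\]
Since $\tau(z)\ge\textup{reach}(\mathcal{M})>0$ (this is the reach characterization used in the proof of Theorem~\ref{thm:TVBound}),
\[
\textup{TV}(\tilde p_\sigma,p_0)\le\frac{2^{k+1}\|p_0\|_\infty}{\sqrt{e}\,\textup{reach}(\mathcal{M})}\,\sigma\int_{\mathcal{M}}e^{-\tau(z)^2/(4\sigma^2)}\,d\mathcal{H}^m(z).
\]
This is the stated inequality with $C_1=2^{k/2}$, $C_2=1/4$ and $L=2^{k/2}\|p_0\|_\infty/(\sqrt{e}\,\textup{reach}(\mathcal{M}))$. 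Here $L$ is a constant built from $\sup p_0$ and the reach, not a Lipschitz constant. Beyond measurability of $\tau$, which you already flagged, nothing else is needed, and this gives a fully rigorous version of the corollary. I would replace your ``main obstacle'' paragraph with it.

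\textbf{The pointwise bound.} Here your route is legitimately different from the paper's. The paper uses $\textup{dist}(z,\Pi(z+n))\le2\|n\|$: since $z$ competes in the nearest-point problem, $\|z+n-\Pi(z+n)\|\le\|n\|$. It then applies Cauchy--Schwarz, which gives a prefactor $2\sigma\sqrt{k}$ but halves the exponent. Your $\textup{diam}(\mathcal{M})$ bound keeps $C_2$ exactly, at the price of a geometric constant. Neither reproduces the displayed inequality with the lemma's constants literally. In both cases $C_1$ must be renamed, and in the paper's case $C_2$ as well.
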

\begin{proof}
    See Appendix \ref{app:theory}. The proof is similar to that of Theorem \ref{thm:TVBound}, but with a less restrictive global reach tube.
\end{proof}

We also remark that Theorem \ref{thm:LinearConstraints} can also be understood as an immediate consequence of Theorem \ref{thm:TVBound}, as the reach of a linear manifold is infinity.

In practice, one may not need to worry about high curvature regions if probability mass is concentrated in low curvature regions. We find that $\sigma < \textup{reach}(\mathcal{M})$ is a safe initial recommendation. We remark that the manifold reach is inversely proportional to its curvature, with \cite{aamari_estimating_2019} being a good reference for further details. Ultimately, our theoretical analysis indicates that for applications where either (a) manifold global curvature is minimal, inducing high global or local reach, or (b) probability mass is estimated to be concentrated in regions of low local curvature, one can anticipate little distortion from learning and projecting $p_{\sigma}$. 

Given the above results, one may reliably train any generative model (DDPM, SBDM, NF, etc.) to model $p_{\sigma}$, not $p_{0}$, and project generated samples knowing that this procedure induces little to zero error. One can anticipate that generative modeling paradigms that operate in the manner of \eqref{eq:ChangeOfVariables} will greatly benefit from our proposed approach, which directly targets the issue of dimension mismatch between the supports of the latent and the target distributions. Our experiments show both this minimal distortion as well as the expected improved sampling stability. 

\section{Numerical Experiments}
\label{sec:NumericalExperiments}

We present experiments in order of ascending task complexity and scientific relevance. We first study distributions of 3-D points on 2-D manifolds (a plane, sphere, and mesh surface), followed by a 784-D image task and a 90-D application in protein backbone generation. Experimental configurations are delineated in Appendix \ref{app:experimentalsetup}, and evaluation metrics are precisely described in Appendix \ref{app:metrics}. For diffusion models, we implement PIDM, PDM, and DDPM on $p_{0}$, with DDPM approaches also being used to model $p_{\sigma}$ and an isotropically noised $p_{0}$ with the standard deviation matching the $\sigma$ of the corresponding $p_{\sigma}$. We present NF results on the first two tasks to demonstrate improved Jacobian determinant stability, with $p_{\sigma}$, $p_{0}$, and the isotropically noised $p_{0}$ modeled by RealNVP \cite{dinh_density_2017} and Glow \cite{kingma_glow_2018} architectures. We report sampling time in seconds and training time in seconds per batch. The label ``$p_{\sigma}$" in figures refers to projected samples from a DDPM trained on $p_{\sigma}$. 

\subsection{Plane}
\label{sec:smileyfaceonplane}

We first discuss a simple distribution of 3-D points supported strictly on a 2-D plane. Precisely, samples $x$ lie on
$$
\mathcal{M}_{\textup{plane}}:= \{ x\in \mathbb{R}^{3}: Ax=b\},
$$
where in our case, $ A = [1,2,3]$ and $b=0$. 

\begin{table}[H]
\centering
\scriptsize
\setlength{\tabcolsep}{3pt}
\renewcommand{\arraystretch}{1.1}

\begin{tabular}{
    c  
    c  
    l  
    >{\centering\arraybackslash}p{0.8cm}  
    >{\centering\arraybackslash}p{1.0cm}  
    >{\centering\arraybackslash}p{0.8cm}  
    >{\centering\arraybackslash}p{0.8cm}  
    >{\centering\arraybackslash}p{0.8cm}  
}
\hline
 & & Method & Train time & Sampling time & COV $\uparrow$ & JSD $\downarrow$ & TVD $\downarrow$ \\
\hline
\multirow{12}{*}{\rotatebox{90}{\textbf{Plane}}}%
 & \multirow{6}{*}{\rotatebox{90}{\textbf{DMs}}}
&PDM & 0.0014 & 0.5173 & 0.8243 & 0.0715 & 0.2459 \\ 
&&PIDM & 0.0025 & 0.1983 & 0.2256 & 0.4529 & 0.7998 \\ 
&&$p_{\sigma}$ & 0.0012 & 0.4426 & \textbf{0.8852} & \textbf{0.0442} & 0.1834 \\ 
&&DDPM & 0.0012 & 0.1993 & 0.8787 & 0.0472 & 0.1844 \\ 
&&DDPM (proj.) & 0.0012 & 0.1999 & 0.8772 & 0.0470 & \textbf{0.1833} \\ 
&&DDPM (proj., iso.) & 0.0012 & 0.1999 & 0.8062 & 0.0793 & 0.2703 \\ 
\cline{2-8}
 & \multirow{9}{*}{\rotatebox{90}{\textbf{NFs}}}
&Glow (iso.) & 0.0097 & 0.5474 & 0.9439 & 0.1111 & 0.3262 \\
&&Glow ($p_{\sigma}$, ours) & 0.0097 & 0.5466 & \textbf{0.9491} & \textbf{0.1084} & \textbf{0.3198} \\
&&Glow (proj.) & 0.0102 & 0.5488 & 0.9018 & 0.2704 & 0.5701 \\
&&Glow & 0.0102 & 0.5460 & 0.8842 & 0.2782 & 0.5836 \\
&&RealNVP (iso.) & 0.0118 & 0.1651 & \textbf{0.9368} & 0.0718 & 0.2461 \\
&&RealNVP ($p_{\sigma}$, ours) & 0.0118 & 0.1649 & \textbf{0.9368} & \textbf{0.0690} & \textbf{0.2411} \\
&&RealNVP (proj.) & 0.0114 & 0.1650 & 0.9000 & 0.3299 & 0.6567 \\
&&RealNVP & 0.0114 & 0.1644 & 0.8842 & 0.3298 & 0.6576 \\
\hline
\multirow{12}{*}{\rotatebox{90}{\textbf{Sphere}}}%
 & \multirow{5}{*}{\rotatebox{90}{\textbf{DMs}}}
&PDM & 0.0014 & 0.3872 & 0.8273 & 0.0974 & 0.3092 \\ 
&&PIDM & 0.0023 & 0.1984 & 0.1293 & 0.5998 & 0.9276 \\ 
&&$p_{\sigma}$ & 0.0012 & 0.2447 & \textbf{0.8853} & \textbf{0.0651} & \textbf{0.2382} \\ 
&&DDPM & 0.0012 & 0.1989 & 0.8608 & 0.0805 & 0.2639 \\ 
&&DDPM (proj.) & 0.0012 & 0.1993 & 0.8701 & 0.0739 & 0.2512 \\ 
&&DDPM (proj., iso.) & 0.0012 & 0.1993 & 0.8700 & 0.0743 & 0.2579 \\ 
\cline{2-8}
 & \multirow{9}{*}{\rotatebox{90}{\textbf{NFs}}}
&Glow & 0.0097 & 0.3617 & 0.8170 & 0.3496 & 0.6783 \\
&&Glow ($p_{\sigma}$, ours) & 0.0097 & 0.2835 & \textbf{0.8898} & \textbf{0.1009} & 0.3037 \\
&&Glow (proj.) & 0.0097 & 0.3624 & 0.8877 & 0.1029 & \textbf{0.3033} \\
&&Glow (iso.) & 0.0097 & 0.3624 & 0.8669 & 0.2928 & 0.5992 \\
&&RealNVP & 0.0115 & 0.0842 & 0.8264 & 0.3158 & 0.6357 \\
&&RealNVP ($p_{\sigma}$, ours) & 0.0115 & 0.0867 & \textbf{0.9075} & \textbf{0.0733} & \textbf{0.2398} \\
&&RealNVP (proj.) & 0.0115 & 0.0846 & 0.7297 & 0.2984 & 0.6207 \\
&&RealNVP (iso.) & 0.0115 & 0.0846 & 0.6143 & 0.3753 & 0.7089 \\
\hline
\end{tabular}
\caption*{Extrinsic metrics.}


\begin{tabular}{
    c
    c
    l
    >{\centering\arraybackslash}p{0.9cm}  
    >{\centering\arraybackslash}p{0.9cm}  
    >{\centering\arraybackslash}p{0.9cm}  
}
\hline
 & & Method & COV $\uparrow$ & JSD $\downarrow$ & TVD $\downarrow$ \\
\hline

\multirow{7}{*}{\rotatebox{90}{\textbf{Plane}}}%
 & \multirow{4}{*}{\rotatebox{90}{\textbf{DMs}}}
&PDM & 0.8243 & 0.0602 & 0.2267 \\ 
&&$p_{\sigma}$ & \textbf{0.8852} & \textbf{0.0336} & 0.1631 \\ 
&&DDPM (proj.) & 0.8772 & 0.0363 & \textbf{0.1600} \\ 
&&DDPM (proj., iso.) & 0.8062 & 0.0682 & 0.2577 \\ 
\cline{2-6}

 & \multirow{6}{*}{\rotatebox{90}{\textbf{NFs}}}
&Glow (iso.) & \textbf{0.8598} & 0.0742 & 0.2543 \\
&&Glow ($p_{\sigma}$, ours) & 0.8551 & \textbf{0.0718} & \textbf{0.2529} \\
&&Glow (proj.) & 0.6464 & 0.2200 & 0.5187 \\
&&RealNVP (iso.) & \textbf{0.8928} & 0.0448 & \textbf{0.1878} \\
&&RealNVP ($p_{\sigma}$, ours) & 0.8897 & \textbf{0.0429} & 0.1922 \\
&&RealNVP (proj.) & 0.5688 & 0.2793 & 0.6019 \\
\hline

\multirow{8}{*}{\rotatebox{90}{\textbf{Sphere}}}%
 & \multirow{4}{*}{\rotatebox{90}{\textbf{DMs}}}
&PDM & 0.8311 & 0.0663 & 0.2566 \\ 
&&$p_{\sigma}$ & \textbf{0.8861} & \textbf{0.0364} & \textbf{0.1725} \\ 
&&DDPM (proj.) & 0.8706 & 0.0459 & 0.1981 \\ 
&&DDPM (proj., iso.) & 0.8727 & 0.0437 & 0.2047 \\ 
\cline{2-6}

 & \multirow{7}{*}{\rotatebox{90}{\textbf{NFs}}}
&Glow ($p_{\sigma}$, ours) & \textbf{0.8815} & \textbf{0.0707} & \textbf{0.2465} \\
&&Glow (proj.) & 0.8775 & 0.0749 & 0.2636 \\
&&Glow (iso.) & 0.6447 & 0.2508 & 0.5539 \\
&&RealNVP ($p_{\sigma}$, ours) & \textbf{0.9019} & \textbf{0.0454} & \textbf{0.1787} \\
&&RealNVP (proj.) & 0.6853 & 0.2672 & 0.5979 \\
&&RealNVP (iso.) & 0.5274 & 0.3315 & 0.6565 \\
\hline

\end{tabular}

\caption*{Intrinsic metrics (using 2-D representations).}

\caption{Metrics for plane and sphere tasks at $\sigma = 0.05$. Among these experiments, learning $p_{\sigma}$ \emph{always} improves the sampled distribution compared to either learning $p_{0}$ or modifying the learning or sampling algorithms for the diffusion models. For NF approaches, we highlight the best-performing method within each architecture.}
\label{tab:PlaneSphereMetrics}
\end{table}

Due to computationally equivalent training objectives, it is expected that training time for $p_{\sigma}$, DDPM, projected DDPM (trained identically to DDPM), and PDM are close to identical. Table \ref{tab:PlaneSphereMetrics} includes the same metrics computed both extrinsically (all methods) and intrinsically (if applicable), where the latter is included for alignment with Section \ref{sec:theory}. We see that learning $p_{\sigma}$ always leads to optimal or competitive distribution quality compared to other techniques. Figure \ref{fig:SpherePlaneCombinedMetrics} confirms this is consistent across various $\sigma$, and Figure \ref{fig:PlaneSphereSamplingStability} shows that learning $p_{\sigma}$ indeed offers the expected benefits of reduced score and log-determinant magnitude.

\subsection{Sphere}
\label{sec:smileyfaceonsphere}

We then study the same distribution on a sphere, introducing global curvature while maintaining similar task complexity:
\begin{align}
    \mathcal{M}_{\textup{sphere}} = \{ x \in \mathbb{R}^{3}: \| x\|_{2}=1\}
\end{align}

Table \ref{tab:PlaneSphereMetrics} shows that $p_{\sigma}$ yields improved results across all metrics and is robust in providing competitive or improved performance within each paradigm. We also remark that in Figure \ref{fig:SpherePlaneCombinedMetrics}, performance worsens as $\sigma$ approaches $1.0$. This aligns with results in Section \ref{sec:theory}, as $\textup{reach}(\mathcal{M}_{\textup{sphere}})=1.0$. This is notably not as dramatic for the plane task, as $\textup{reach}(\mathcal{M}_{\textup{plane}}) = \infty$, and any performance decrease is likely due to worsening data normalization with too-large $\sigma$. 

\begin{figure}[H]
\scriptsize
\centering

\begin{minipage}[c]{0.48\columnwidth}
    \centering
    \begin{subfigure}{\linewidth}
        \centering
        \includegraphics[width=\columnwidth]{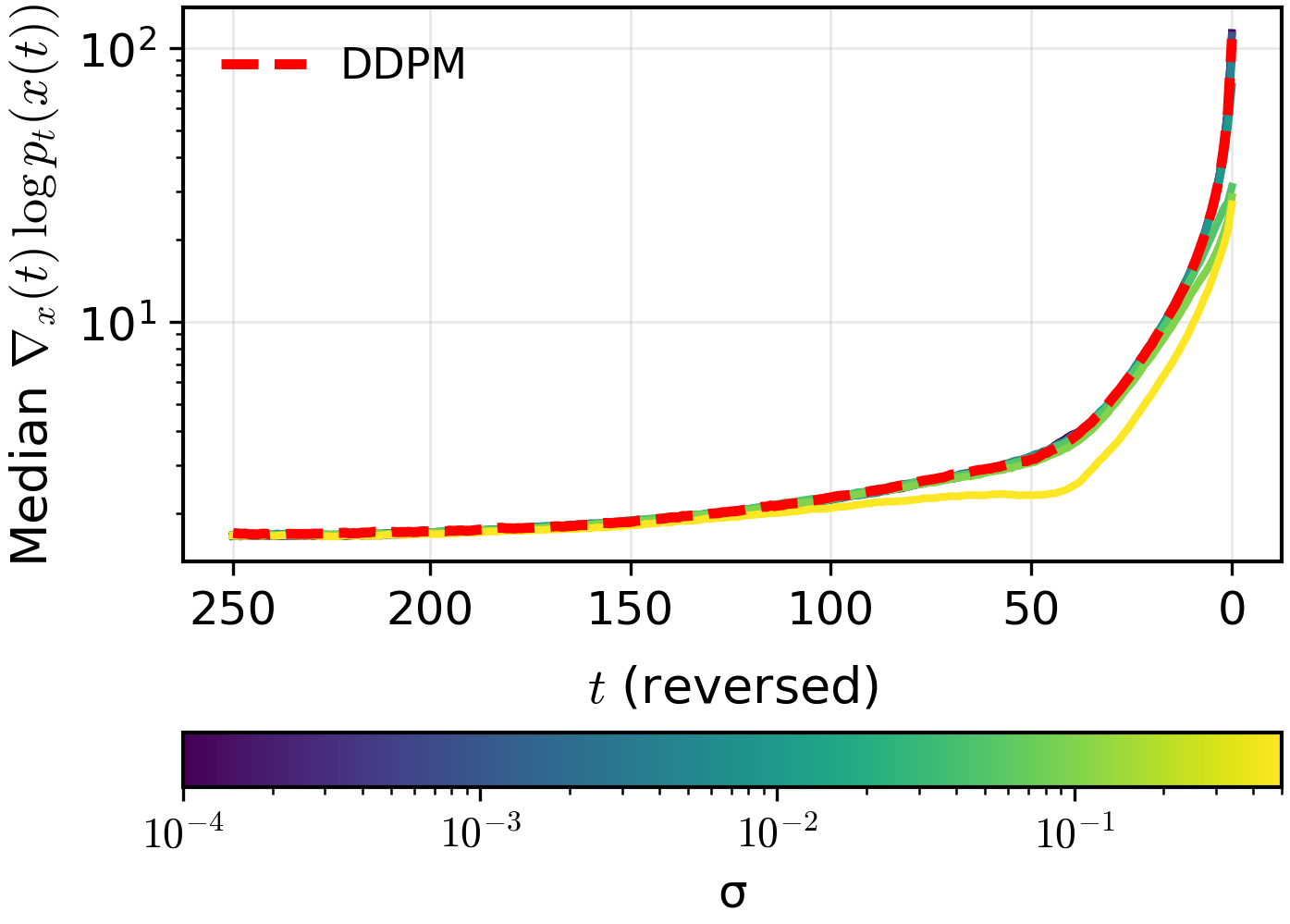}
    \end{subfigure}

    \vspace{0.5em}

    \begin{subfigure}{\linewidth}
        \centering
        \includegraphics[width=\columnwidth]{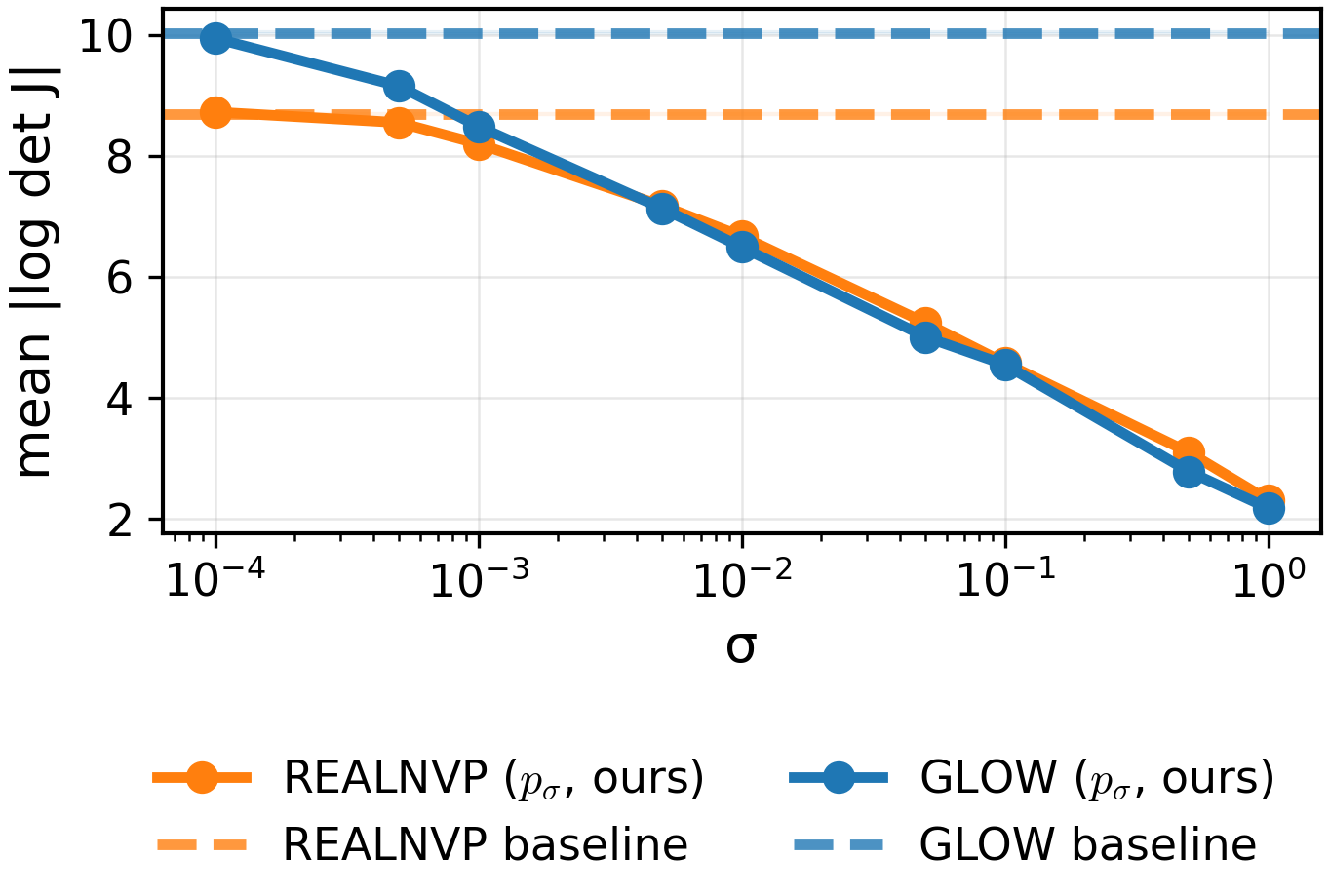}
        \caption*{Plane}
    \end{subfigure}
\end{minipage}
\hfill
\begin{minipage}[c]{0.48\columnwidth}
    \centering
    \begin{subfigure}{\linewidth}
        \centering
        \includegraphics[width=\columnwidth]{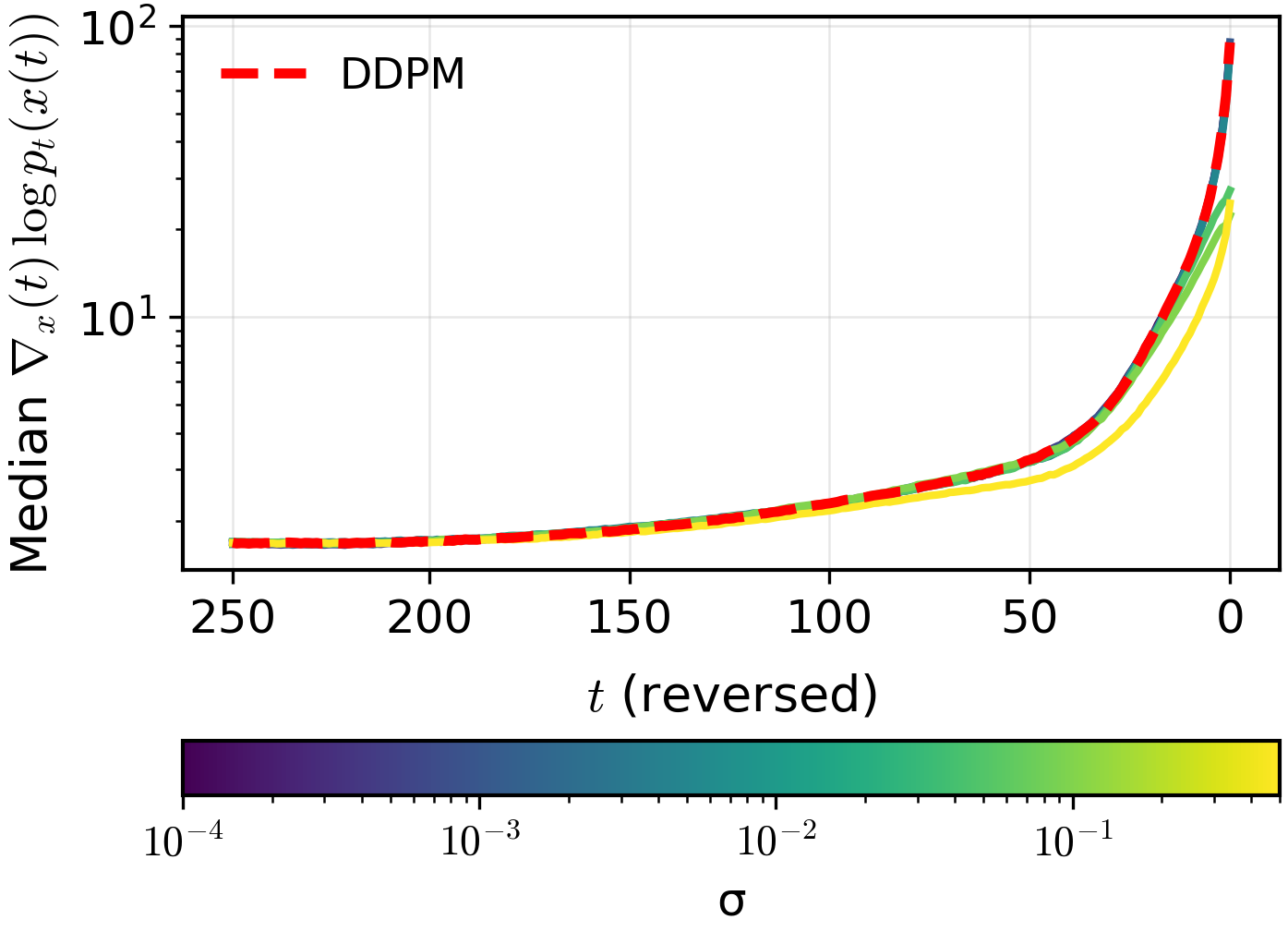}
    \end{subfigure}

    \vspace{0.5em}

    \begin{subfigure}{\linewidth}
        \centering
        \includegraphics[width=\columnwidth]{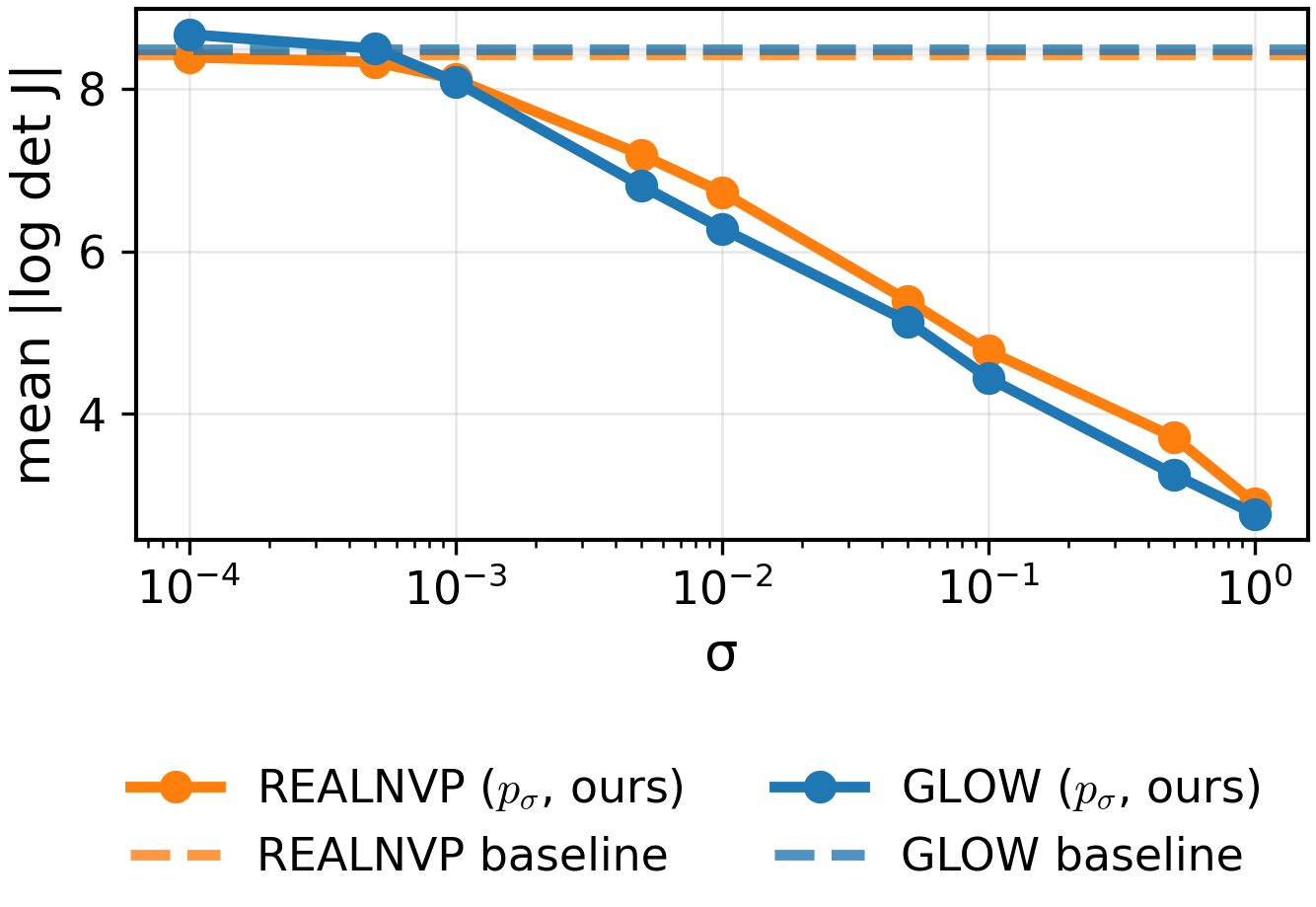}
        \caption*{Sphere}
    \end{subfigure}
\end{minipage}

\caption{Sampling stability for plane and sphere tasks. We observe the expected reduction in score and Jacobian log-determinant magnitude across both tasks and generative modeling paradigms. We present similar results for the complex tasks in Appendix \ref{app:sampling_stability}.}
\label{fig:PlaneSphereSamplingStability}
\end{figure}

\begin{figure*}[t]
    \scriptsize
    \centering
    \begin{minipage}[t]{0.02\textwidth}
        \centering
        \raisebox{6.0em}{\rotatebox{90}{Plane}}
    \end{minipage}%
\begin{subfigure}{0.48\textwidth}
\includegraphics[width=\textwidth]{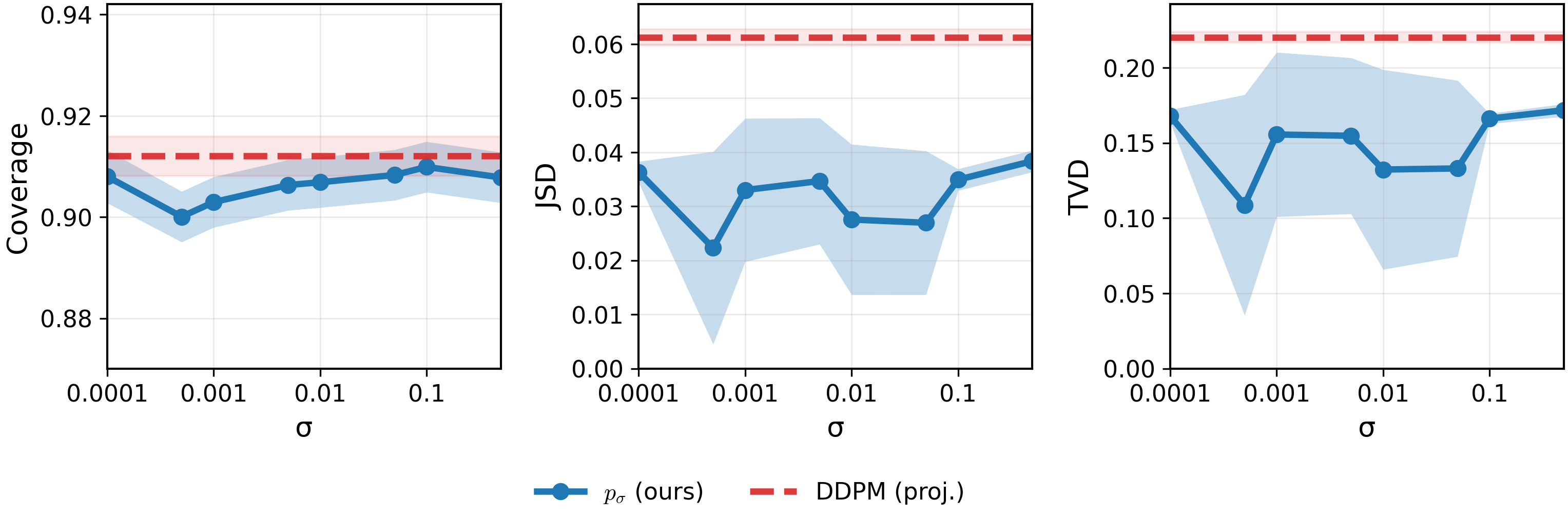}
    \end{subfigure}
    \begin{subfigure}{0.48\textwidth}        \includegraphics[width=\textwidth]{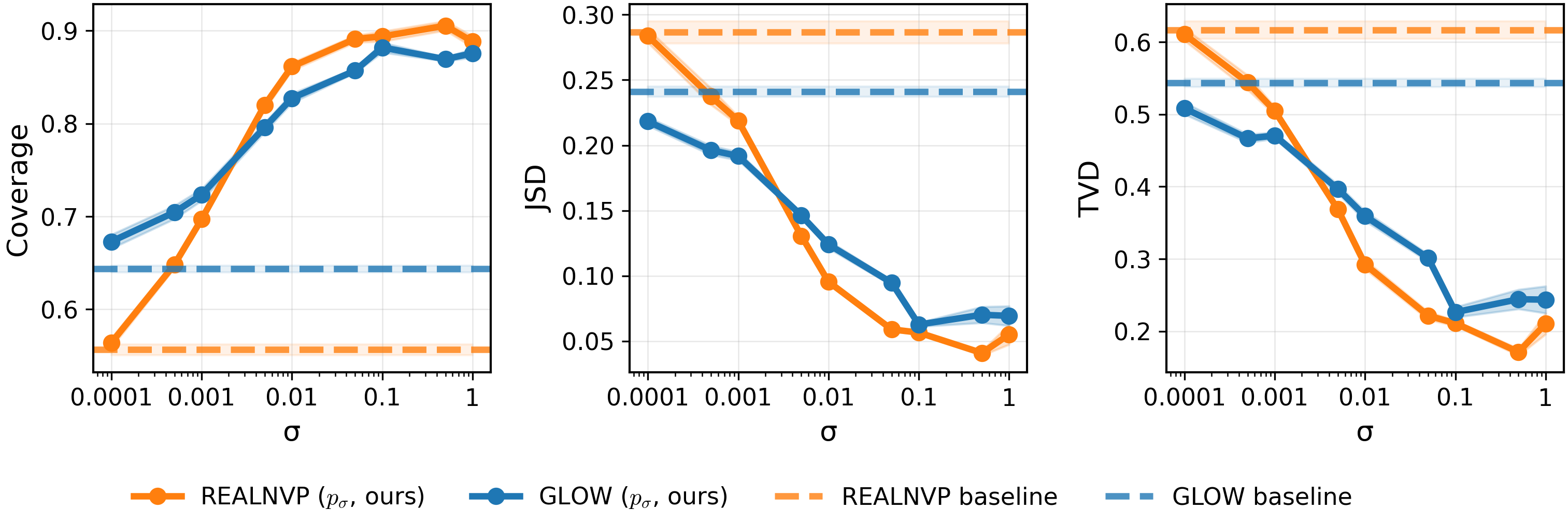}
    \end{subfigure}\\
        \begin{minipage}[t]{0.02\textwidth}
        \centering
        \raisebox{5.5em}{\rotatebox{90}{Sphere}}
    \end{minipage}%
    \begin{subfigure}{0.48\textwidth}
\includegraphics[width=\textwidth]{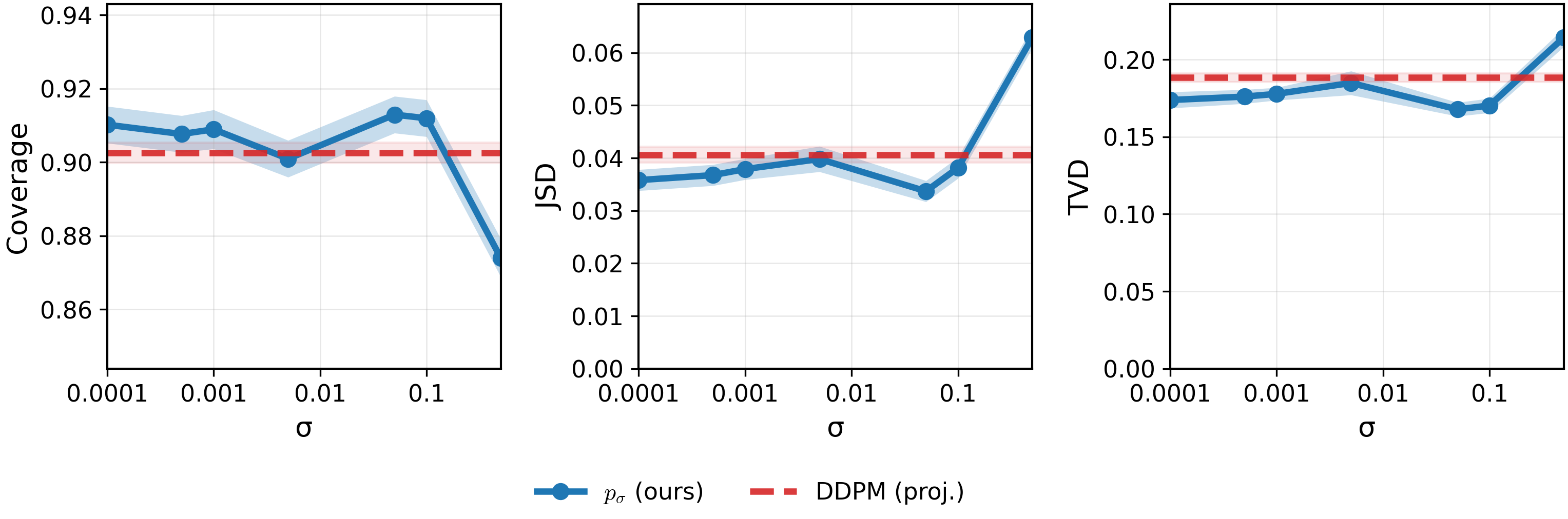}
    \end{subfigure}
    \begin{subfigure}{0.48\textwidth}        \includegraphics[width=\textwidth]{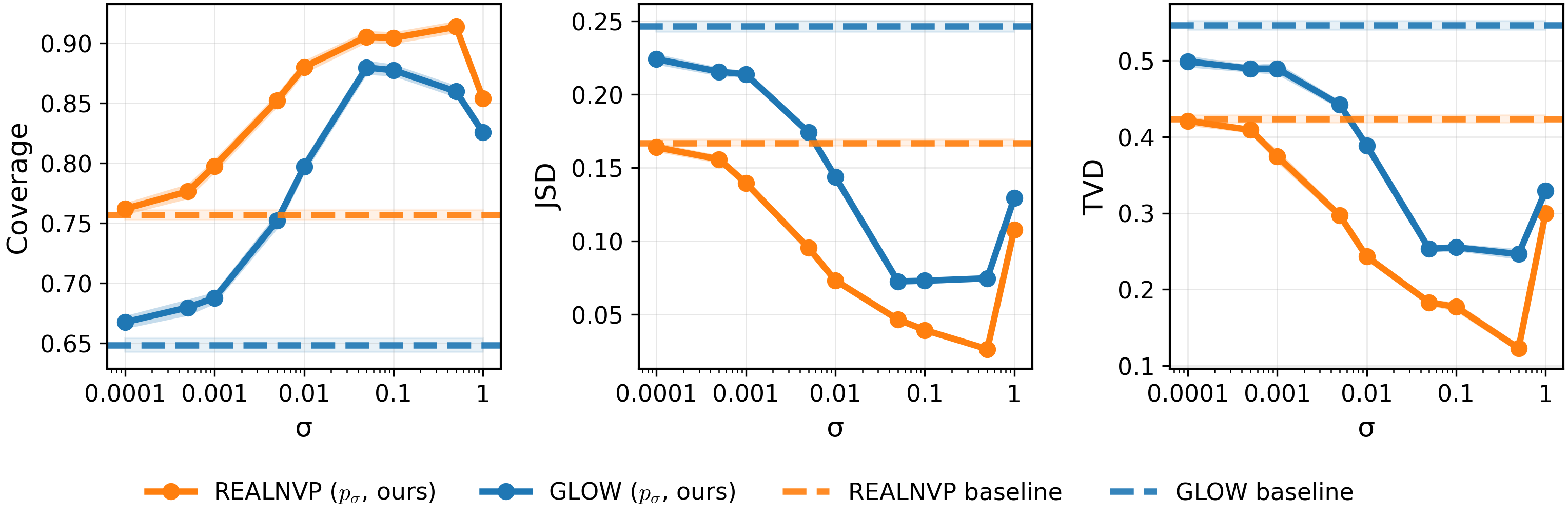}
    \end{subfigure}
    \caption{Metrics across varied $\sigma$ for diffusion model and NF approaches on toy tasks. In all cases, learning $p_{\sigma}$ consistently outperforms learning $p_{0}$ and post-projecting samples, with the expected possible performance decrease as $\sigma \to \textup{reach}(\mathcal{M})$.}
    \label{fig:SpherePlaneCombinedMetrics}
\end{figure*}

\begin{figure*}[t]
\centering

\newlength{\smileyimg}
\setlength{\smileyimg}{0.08\textwidth} 

\begin{minipage}{\textwidth}
\centering
\hspace*{-6.0em}
\begin{tabular}{@{}c@{\hspace{0.01em}}c@{}}
    \raisebox{-2.0em}{
    \begin{minipage}[c]{\smileyimg}
        \centering
        \includegraphics[width=\smileyimg]{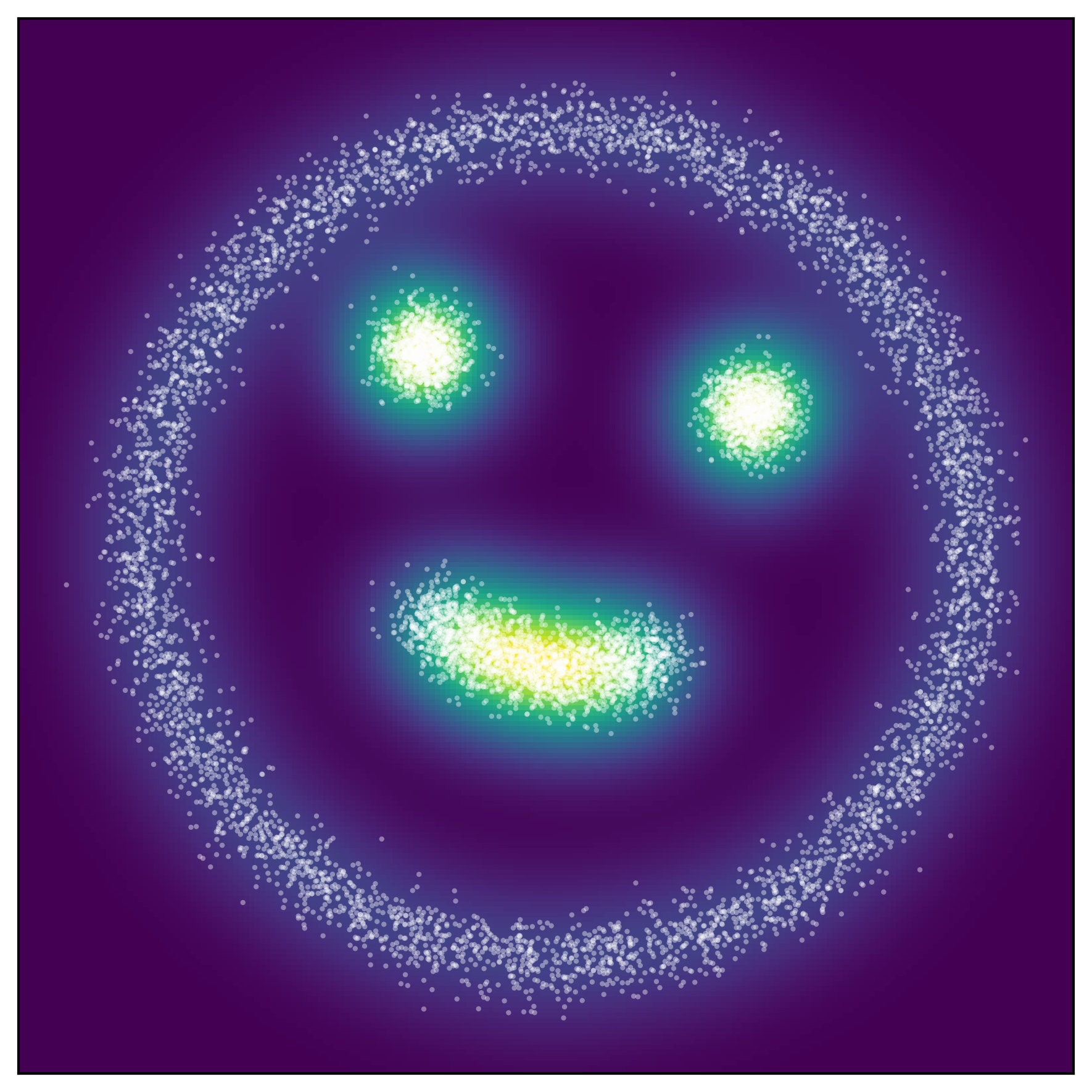}%
        \par\vspace{0.3em}
        \subcaption*{\scriptsize Data}
    \end{minipage}
    }
    &
    \begin{minipage}[c]{0.80\textwidth}
        \centering

        \begin{tabular}{@{}c@{\hspace{0.2em}}c@{\hspace{0.5em}}c@{}}
            & \textbf{Diffusion Models} & \textbf{Normalizing Flows} \\
            [0.8em]

            \rotatebox[origin=c]{90}{\scriptsize Plane} &
            \begin{tabular}{@{}c@{}c@{}c@{}c@{}c@{}}
                \scriptsize PDM &
                \scriptsize PIDM &
                \scriptsize DDPM &
                \scriptsize $p_{\sigma}$ &
                \scriptsize Iso. 
                \\
                [0.25em]

                \includegraphics[width=\smileyimg]{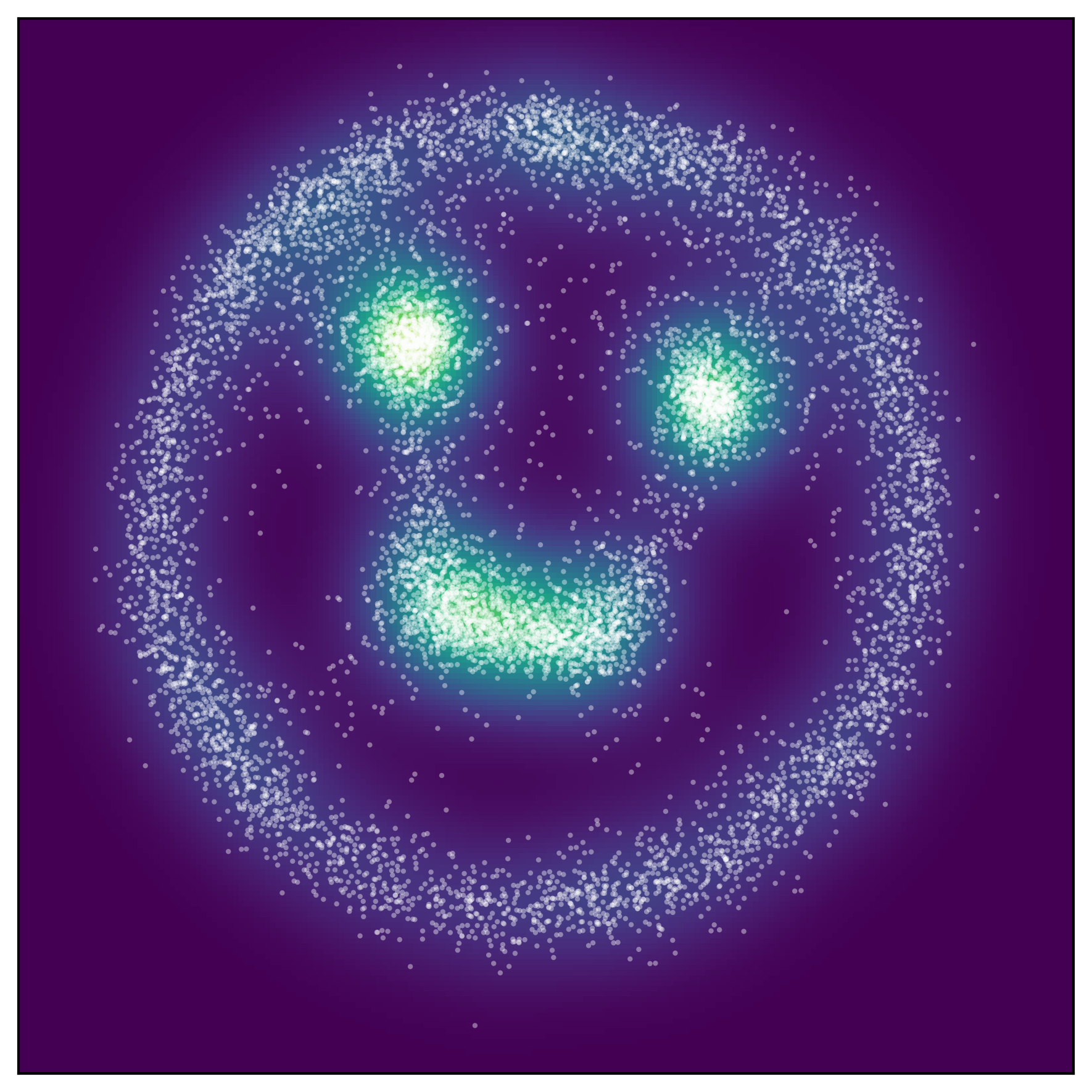} &
                \includegraphics[width=\smileyimg]{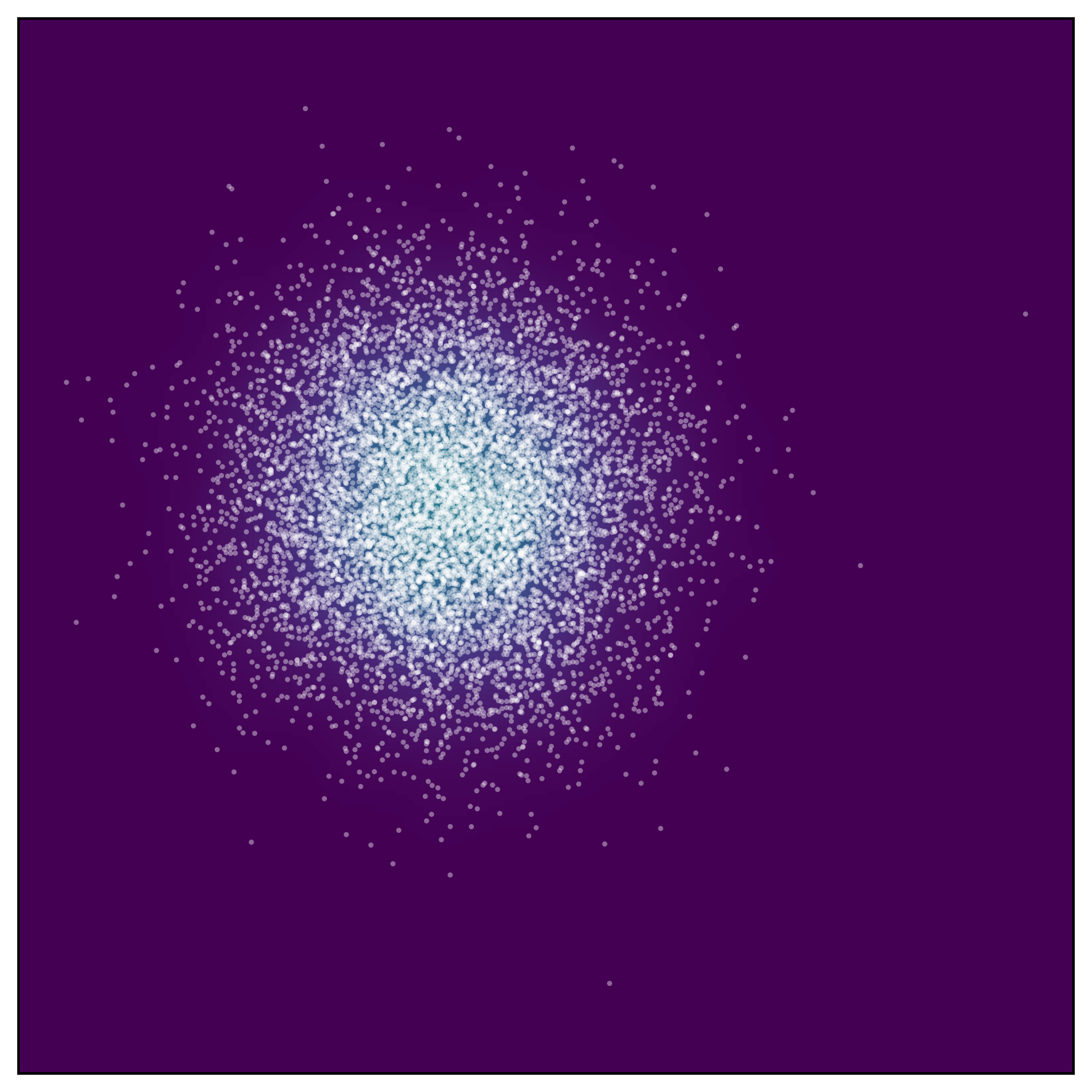} &
                \includegraphics[width=\smileyimg]{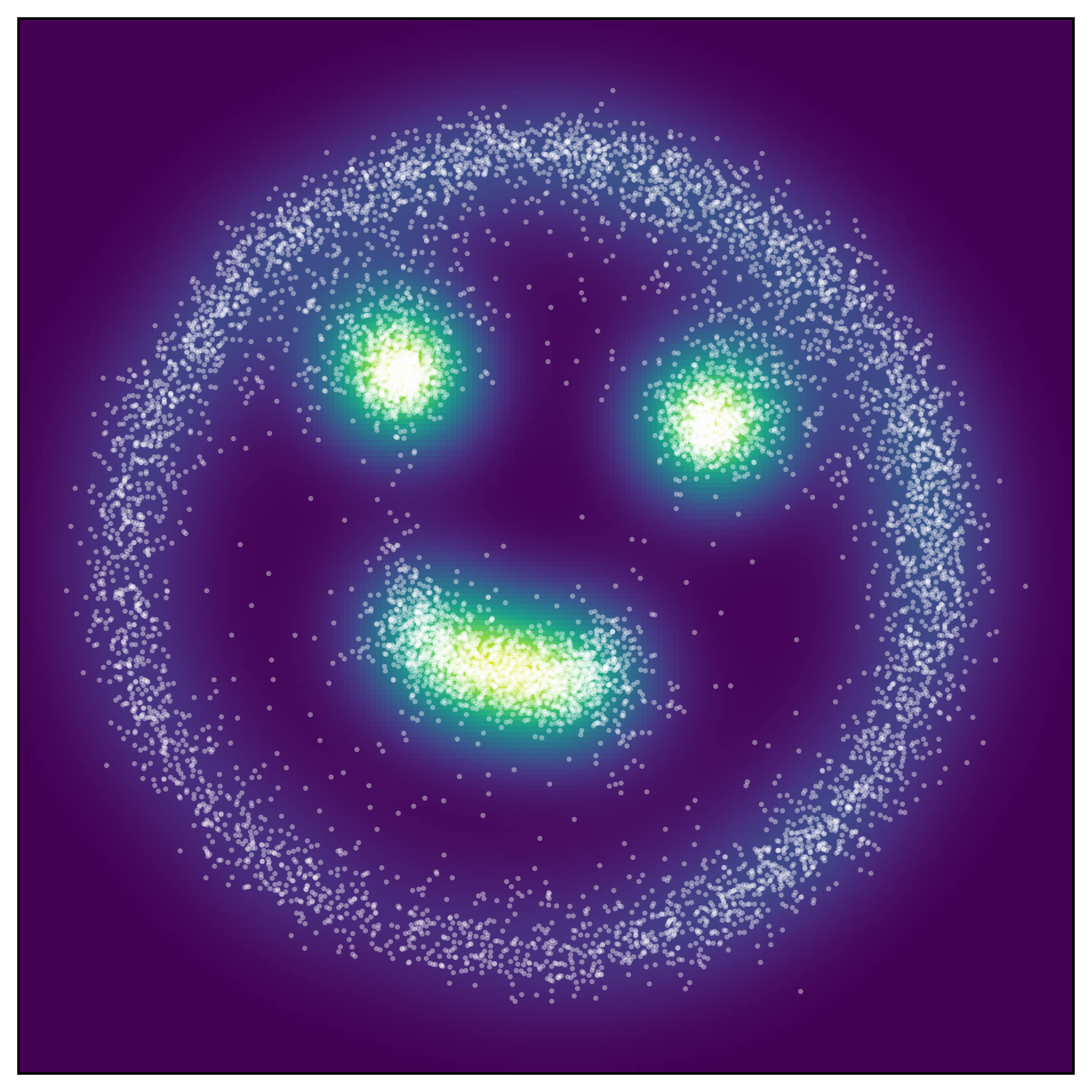} &
                \includegraphics[width=\smileyimg]{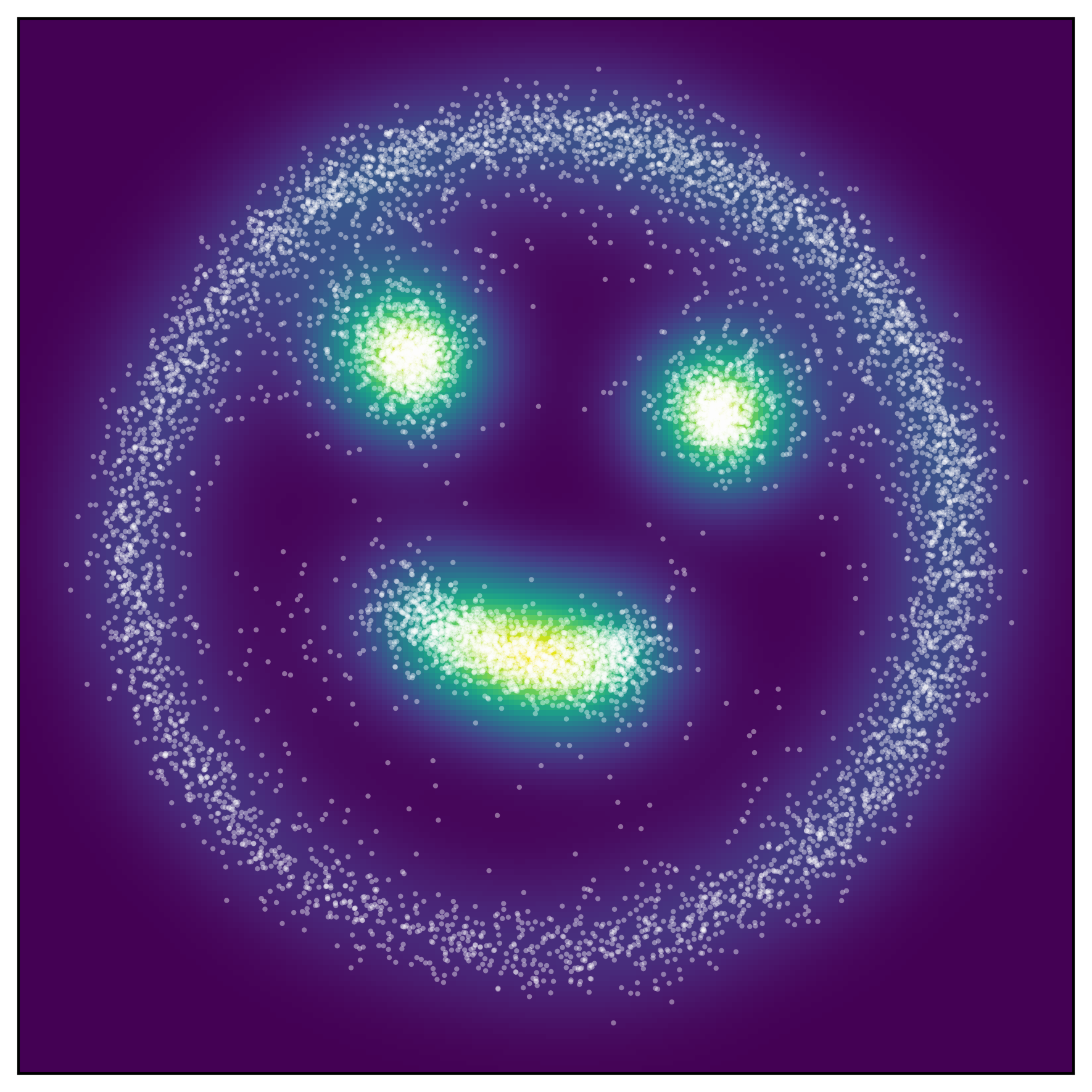} &
                \includegraphics[width=\smileyimg]{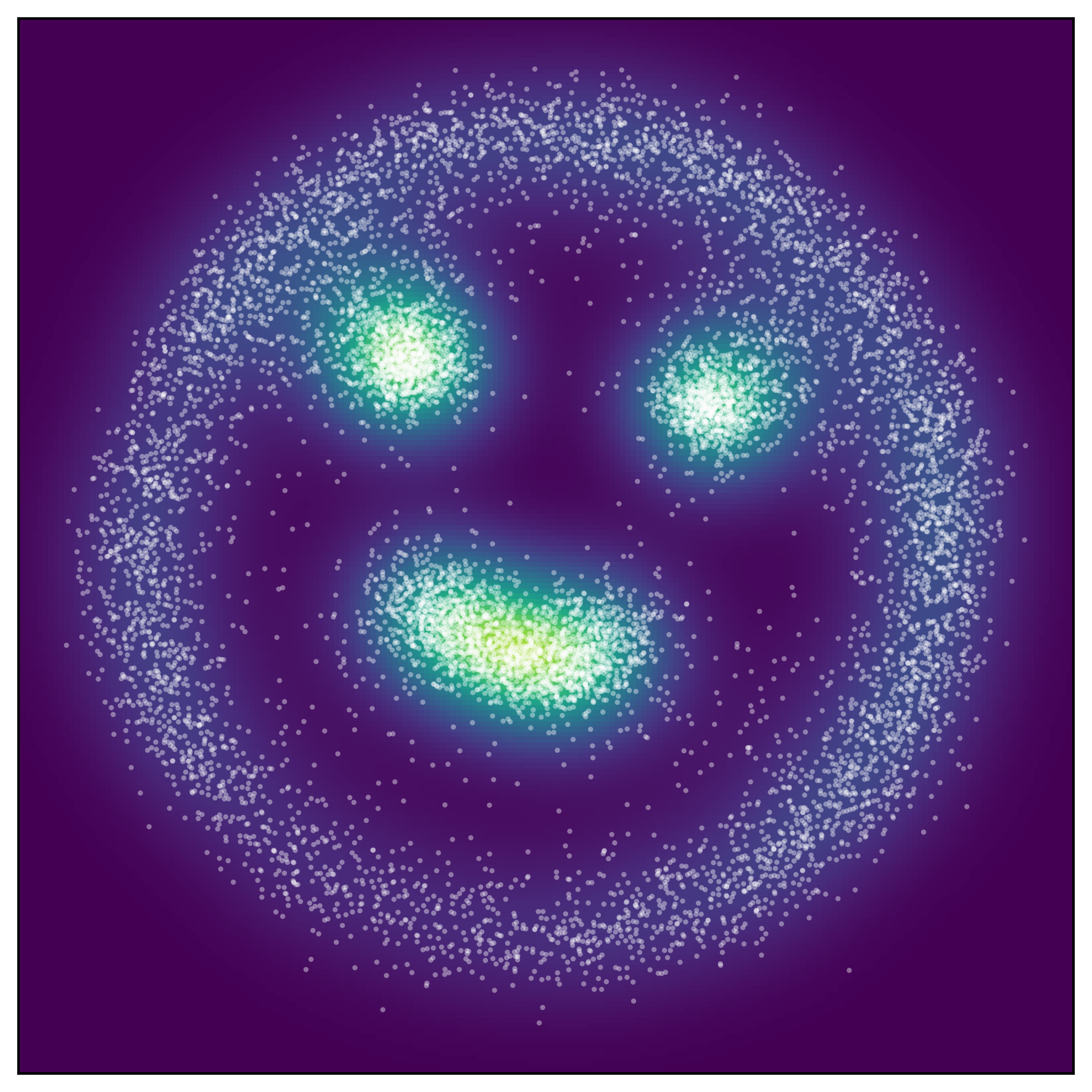} 
                \\
            \end{tabular}
            &
            \begin{tabular}{@{}c@{}c@{}c@{}c@{}c@{}c@{}}
                \scriptsize $p_{0}$ (G) &
                \scriptsize $p_{\sigma}$ (G) &
                \scriptsize Iso. (G) & 
                \scriptsize $p_{0}$ (R)  &
                \scriptsize $p_{\sigma}$ (R) &
                \scriptsize Iso. (R) \\ 
                [0.25em]

                \includegraphics[width=\smileyimg]{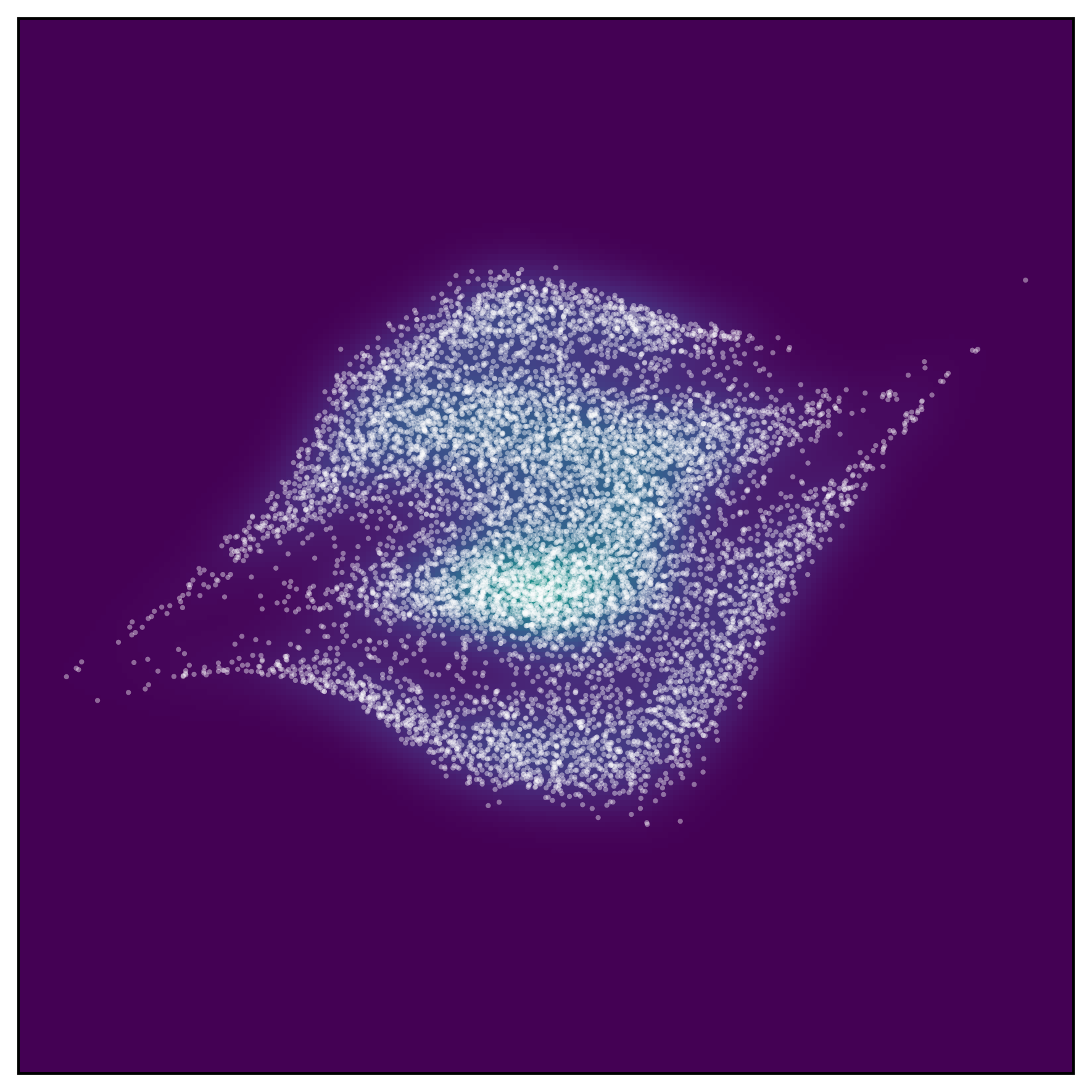} &
                \includegraphics[width=\smileyimg]{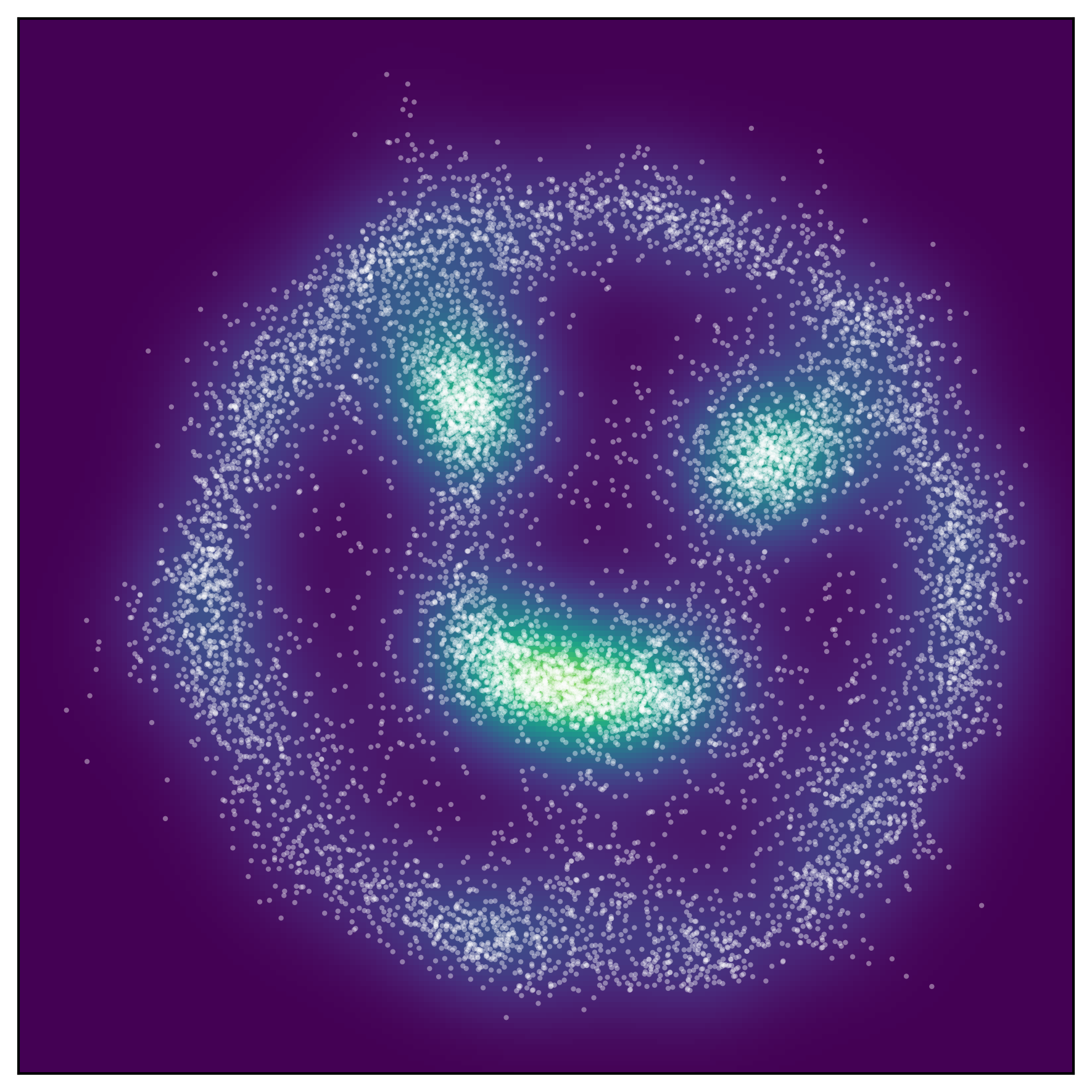} &
                \includegraphics[width=\smileyimg]{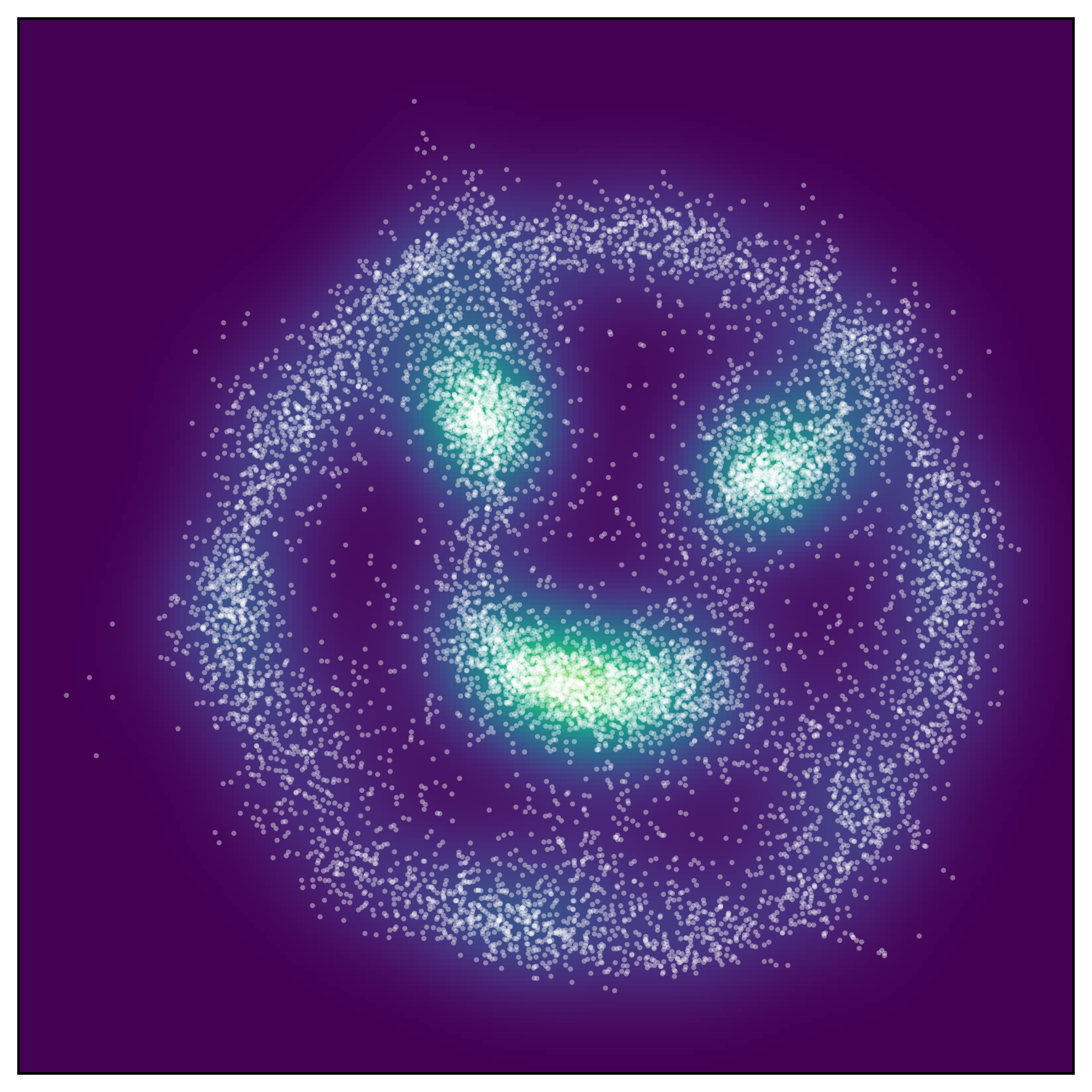} & 
                \includegraphics[width=\smileyimg]{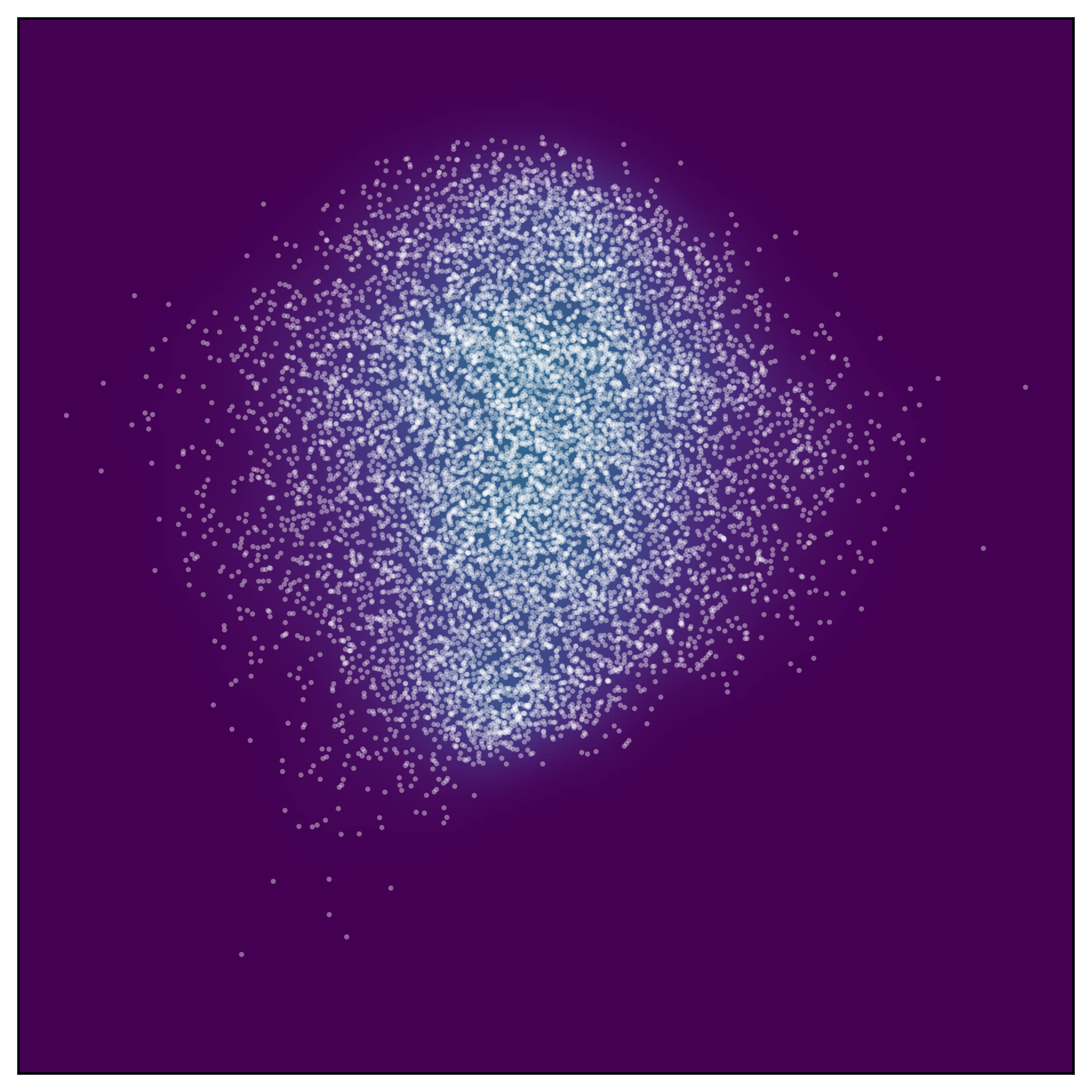} &
                \includegraphics[width=\smileyimg]{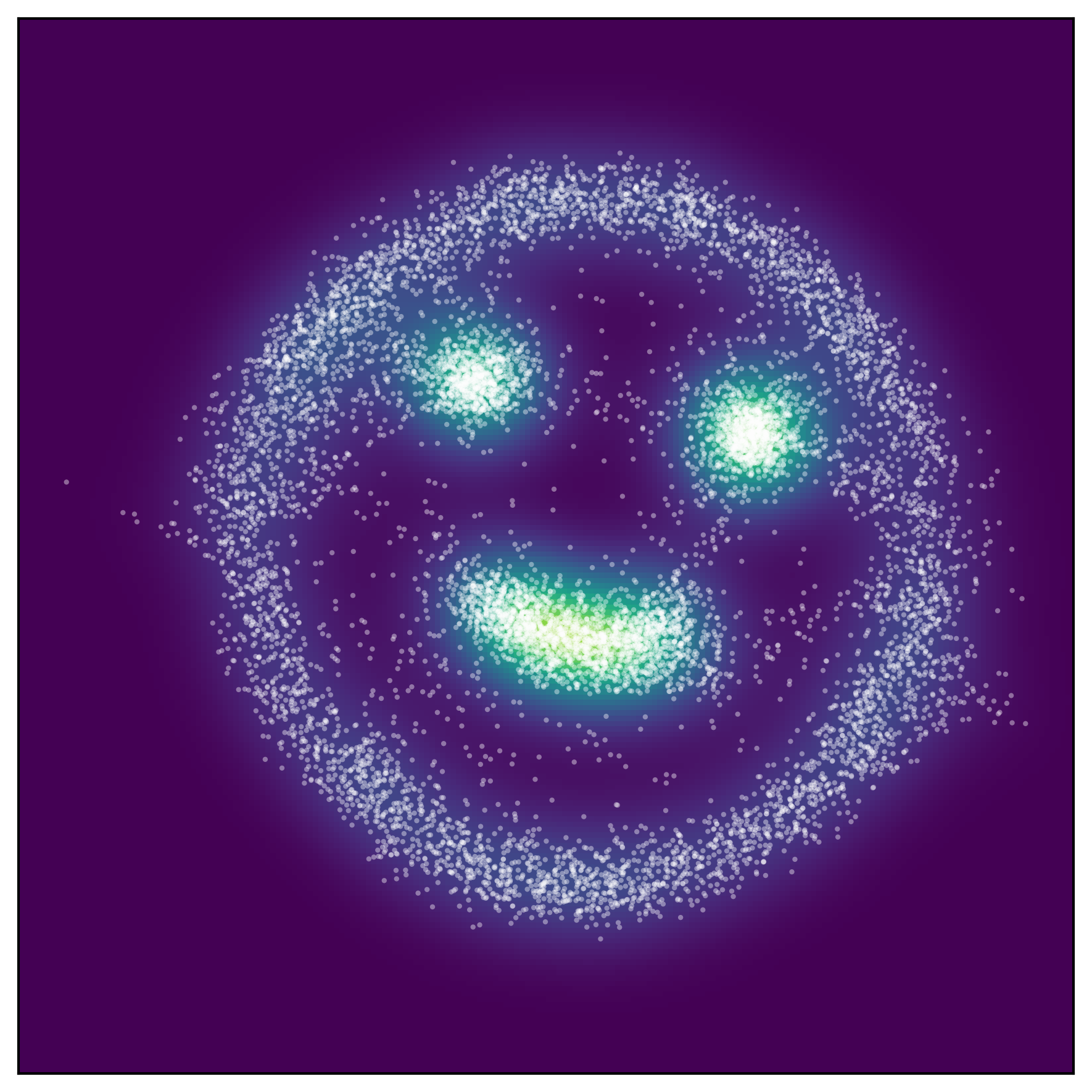} &
                \includegraphics[width=\smileyimg]{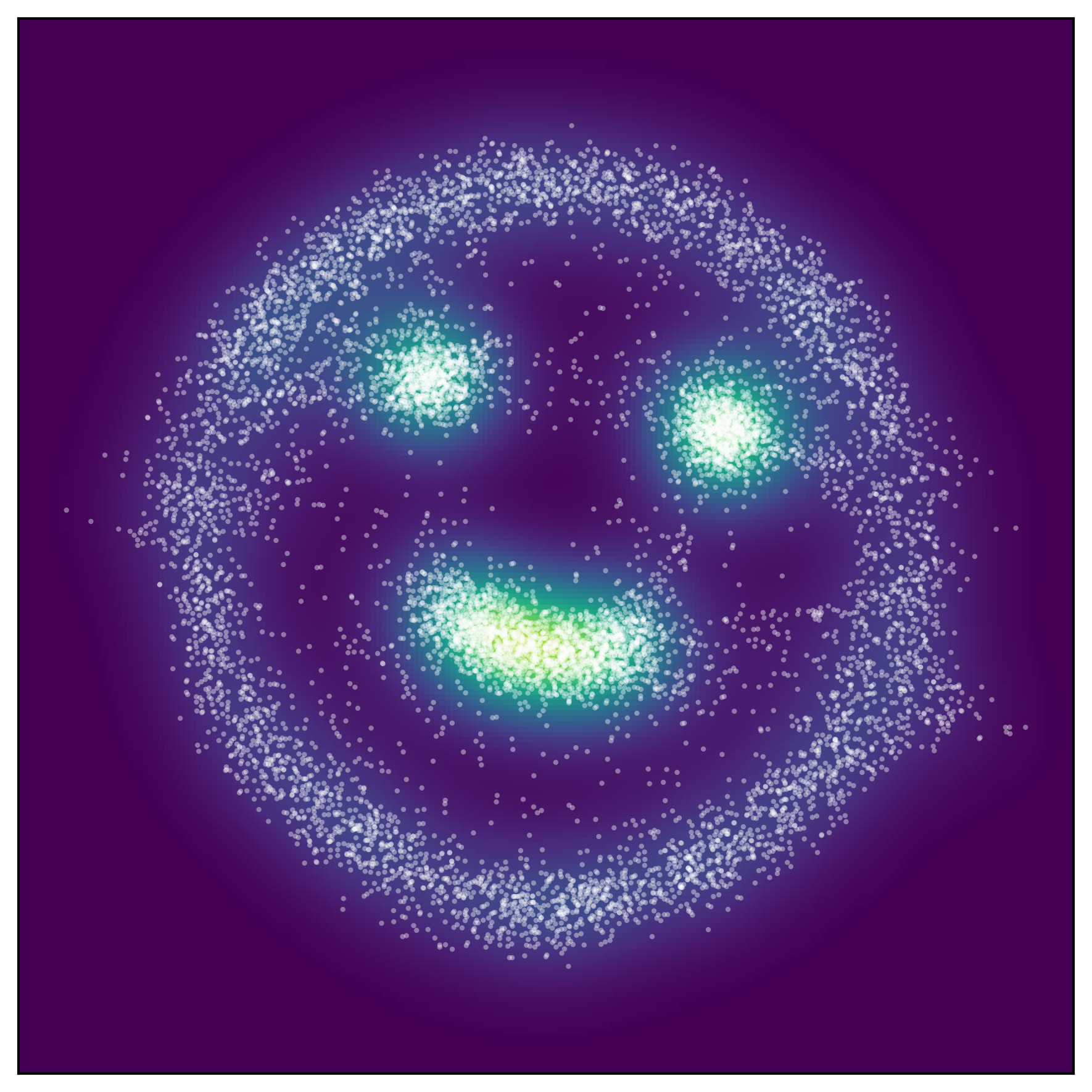} 
                \\
            \end{tabular}
            \\
            [1.3em]

            \rotatebox[origin=c]{90}{\scriptsize Sphere} &
            \begin{tabular}{@{}c@{}c@{}c@{}c@{}c@{}}
                \includegraphics[width=\smileyimg]{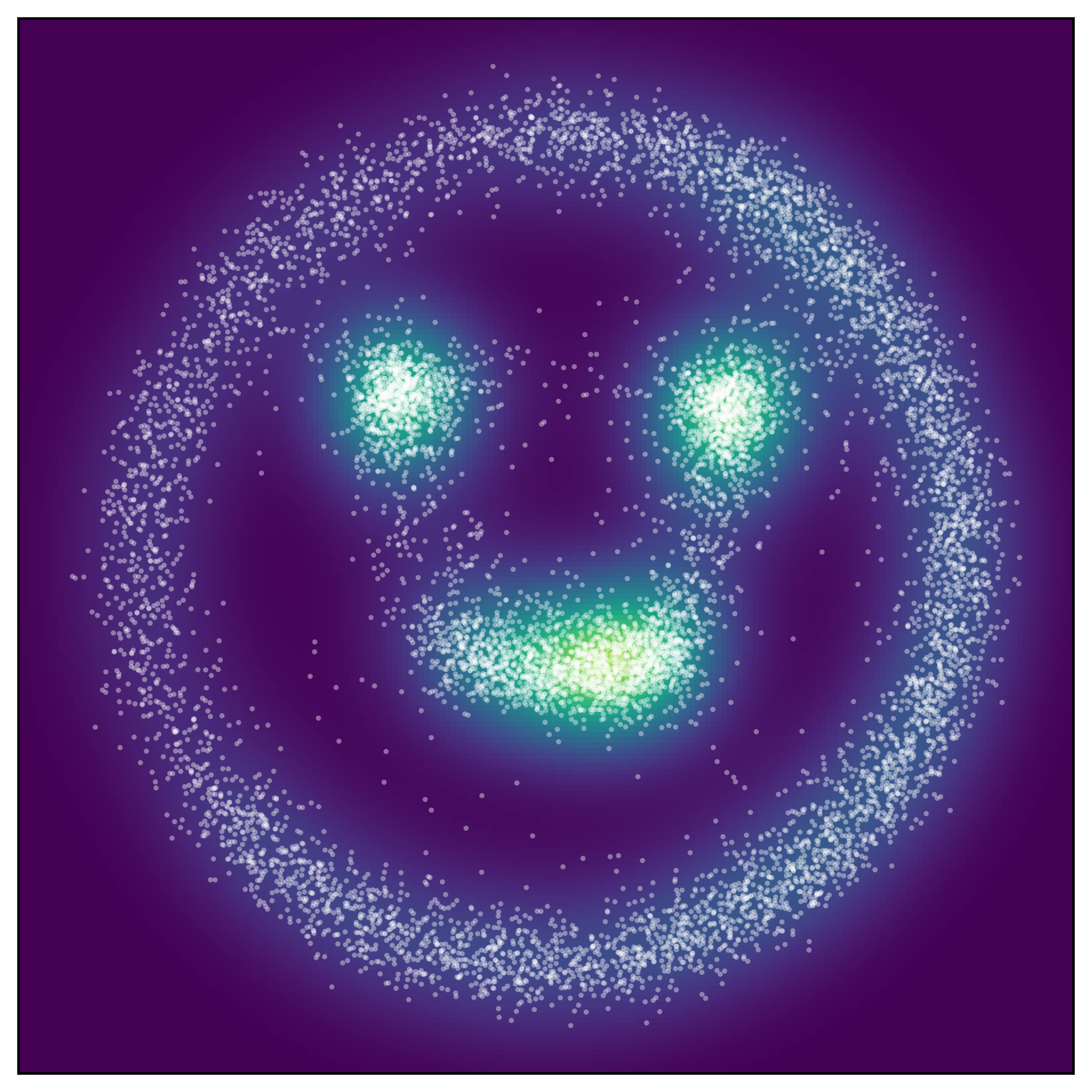} &
                \includegraphics[width=\smileyimg]{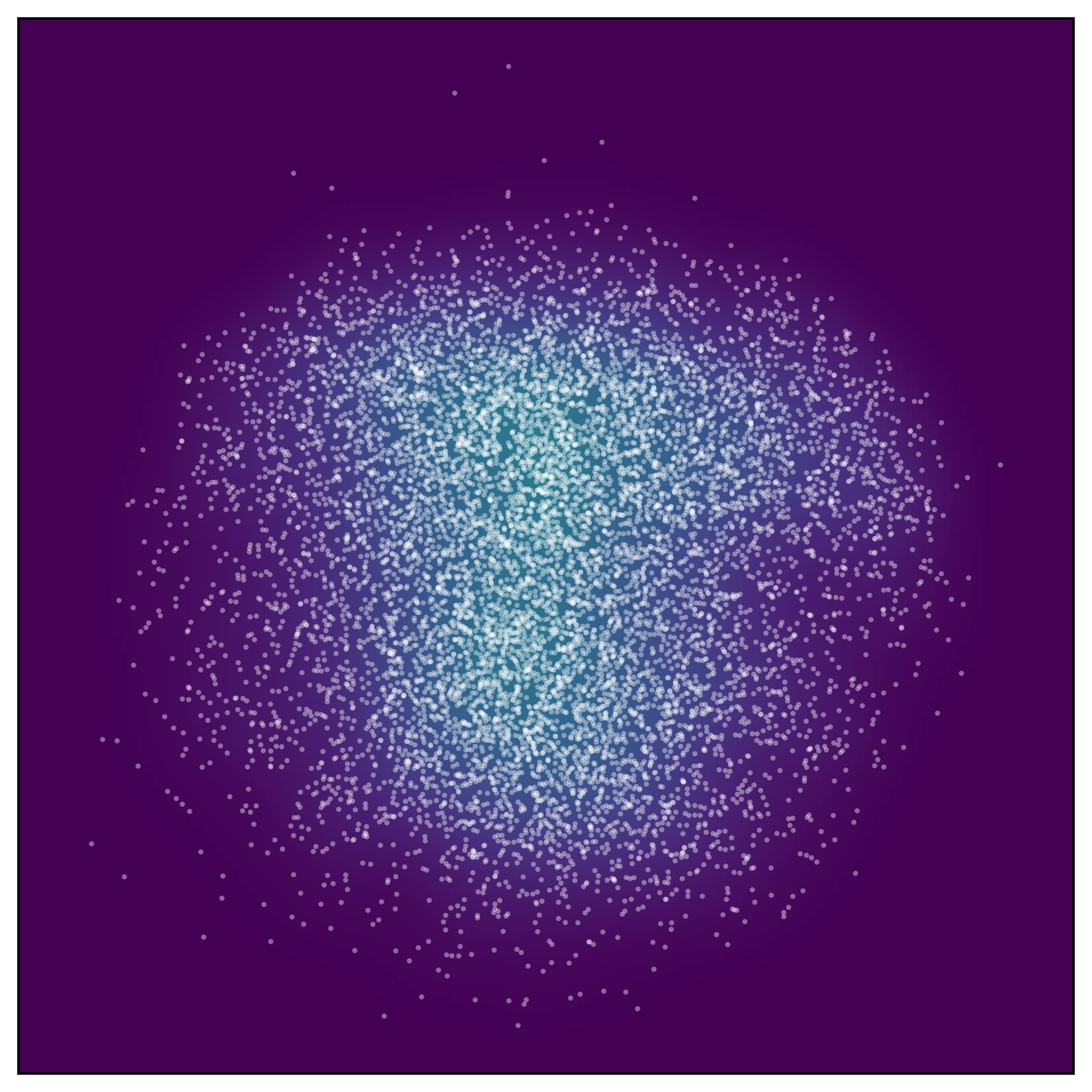} &
                \includegraphics[width=\smileyimg]{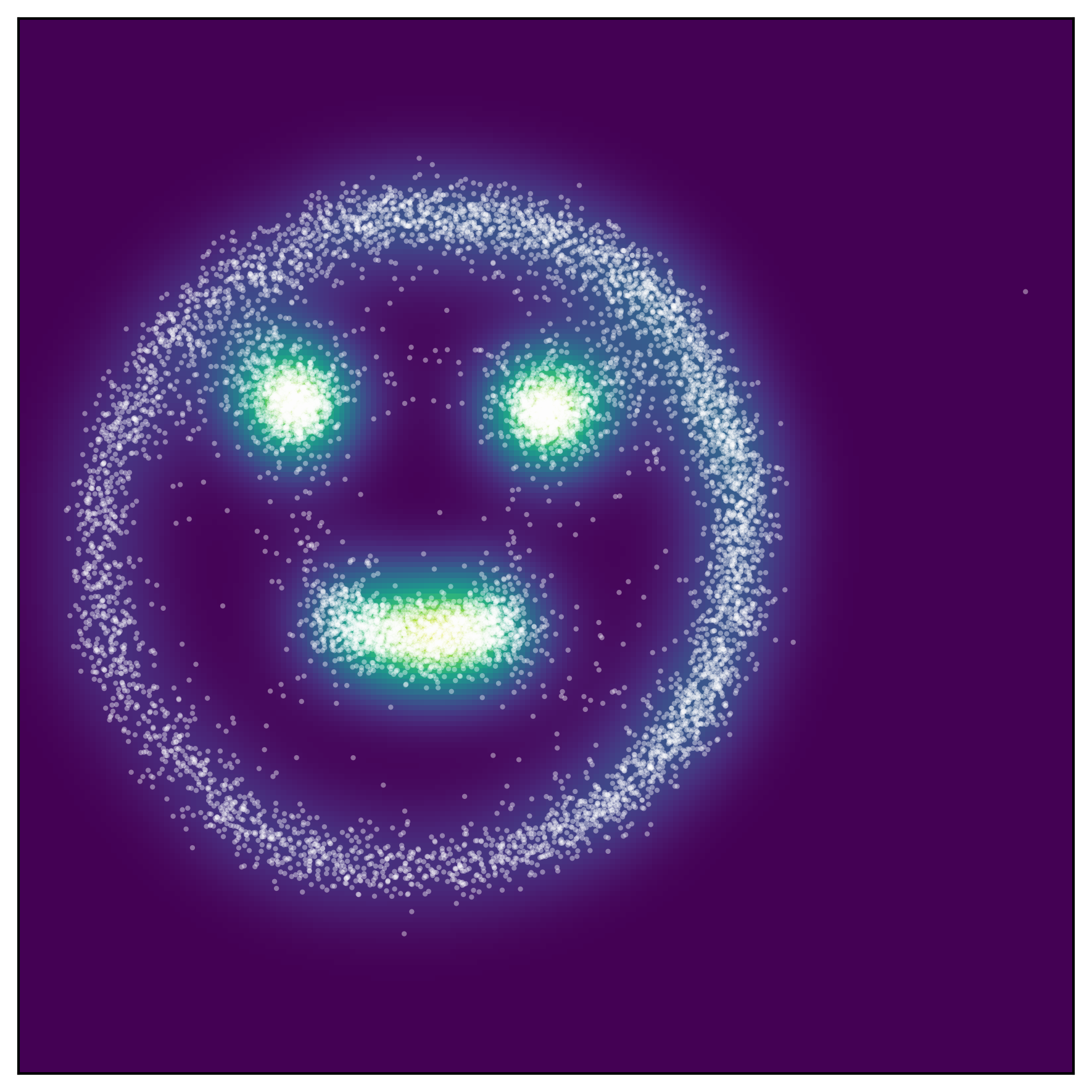} &
                \includegraphics[width=\smileyimg]{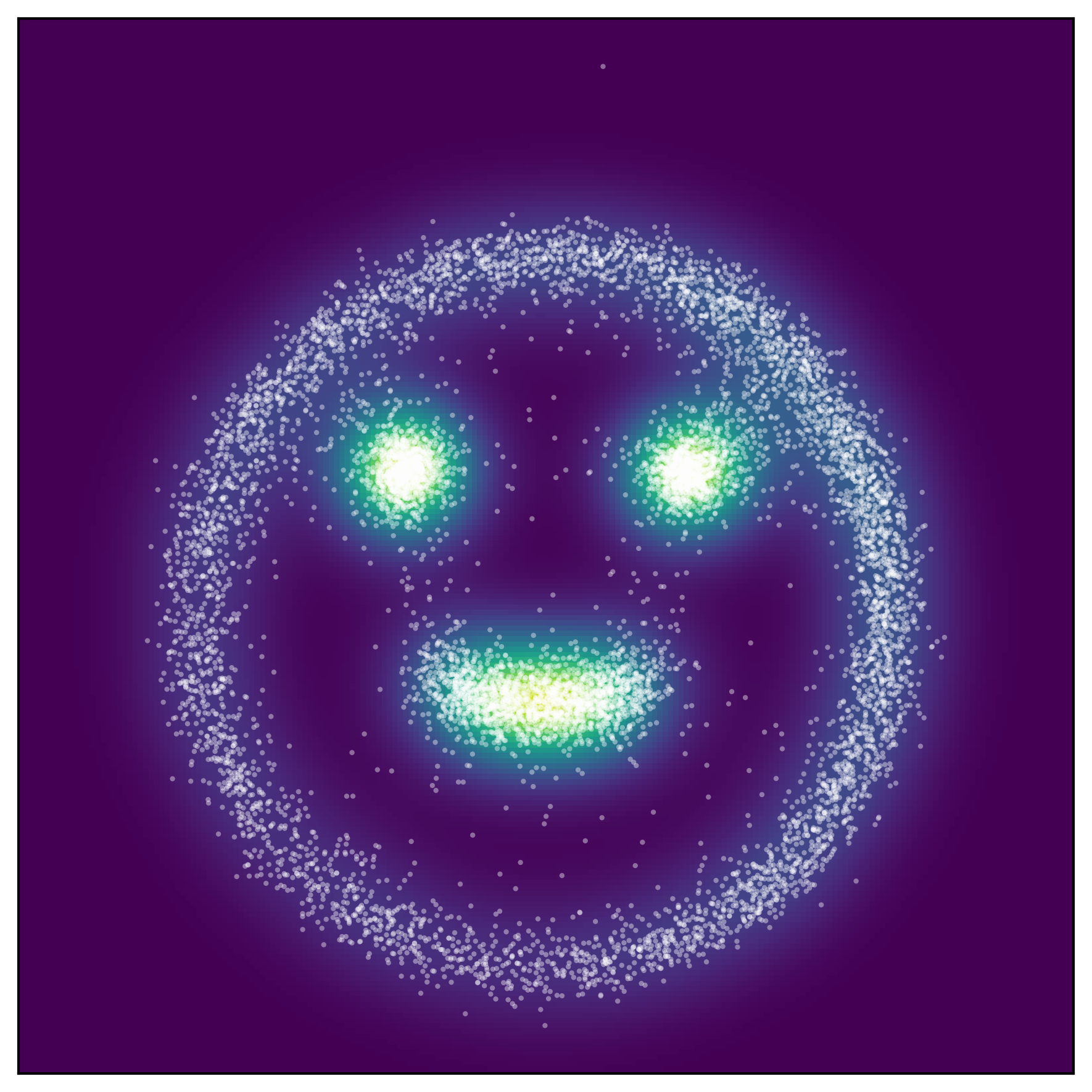} &
                \includegraphics[width=\smileyimg]{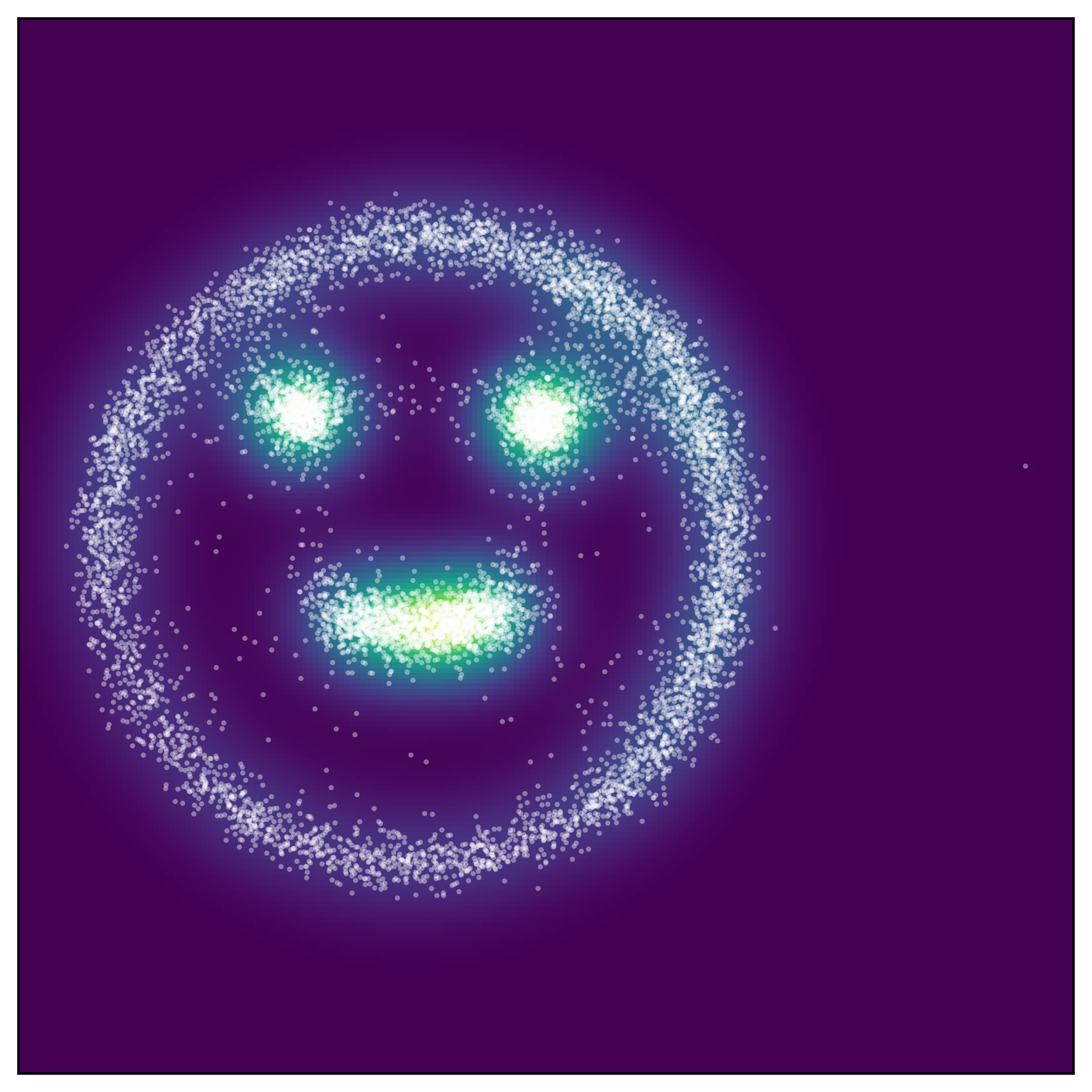} 
                \\
            \end{tabular}
            &
            \begin{tabular}{@{}c@{}c@{}c@{}c@{}c@{}c@{}}
                \includegraphics[width=\smileyimg]{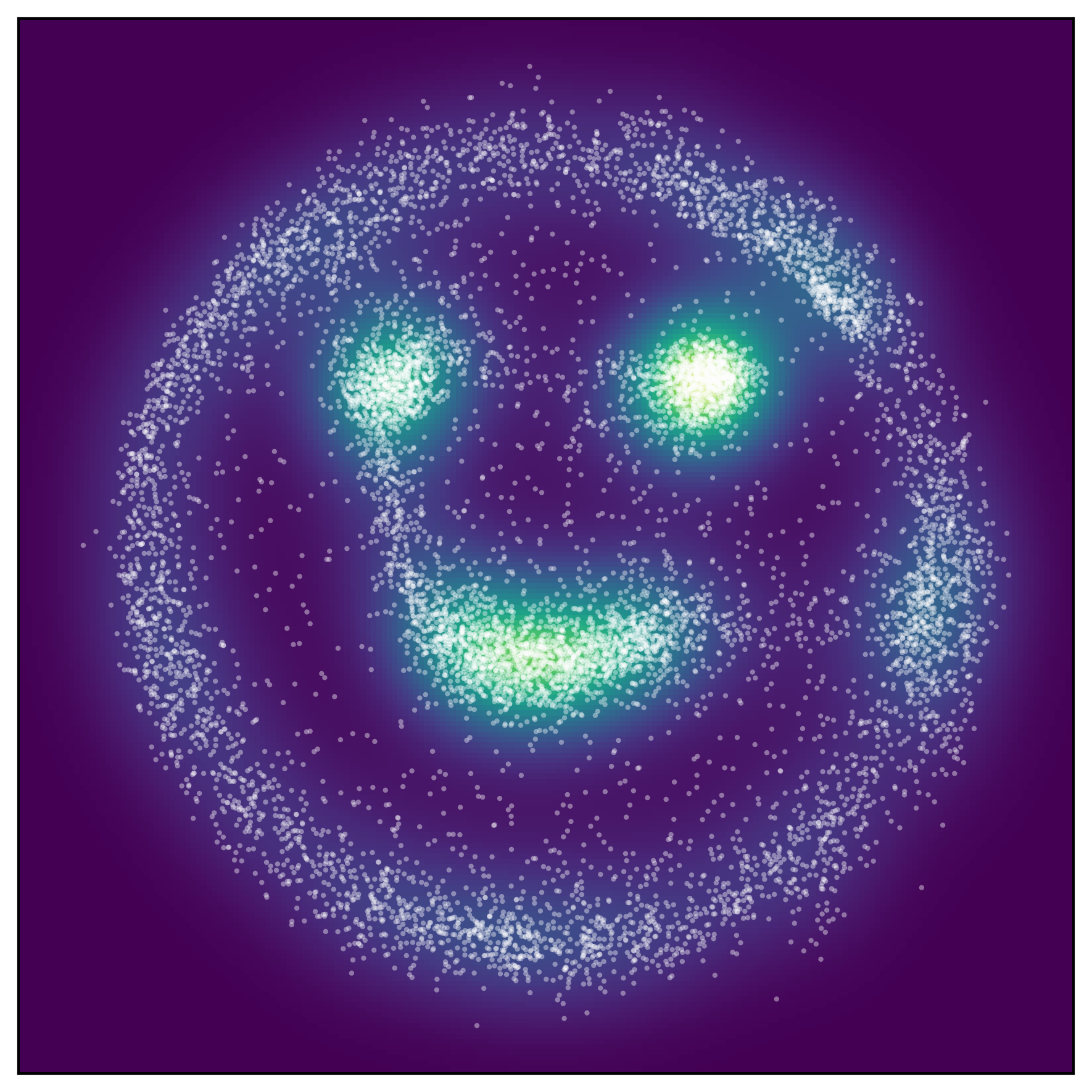} &
                \includegraphics[width=\smileyimg]{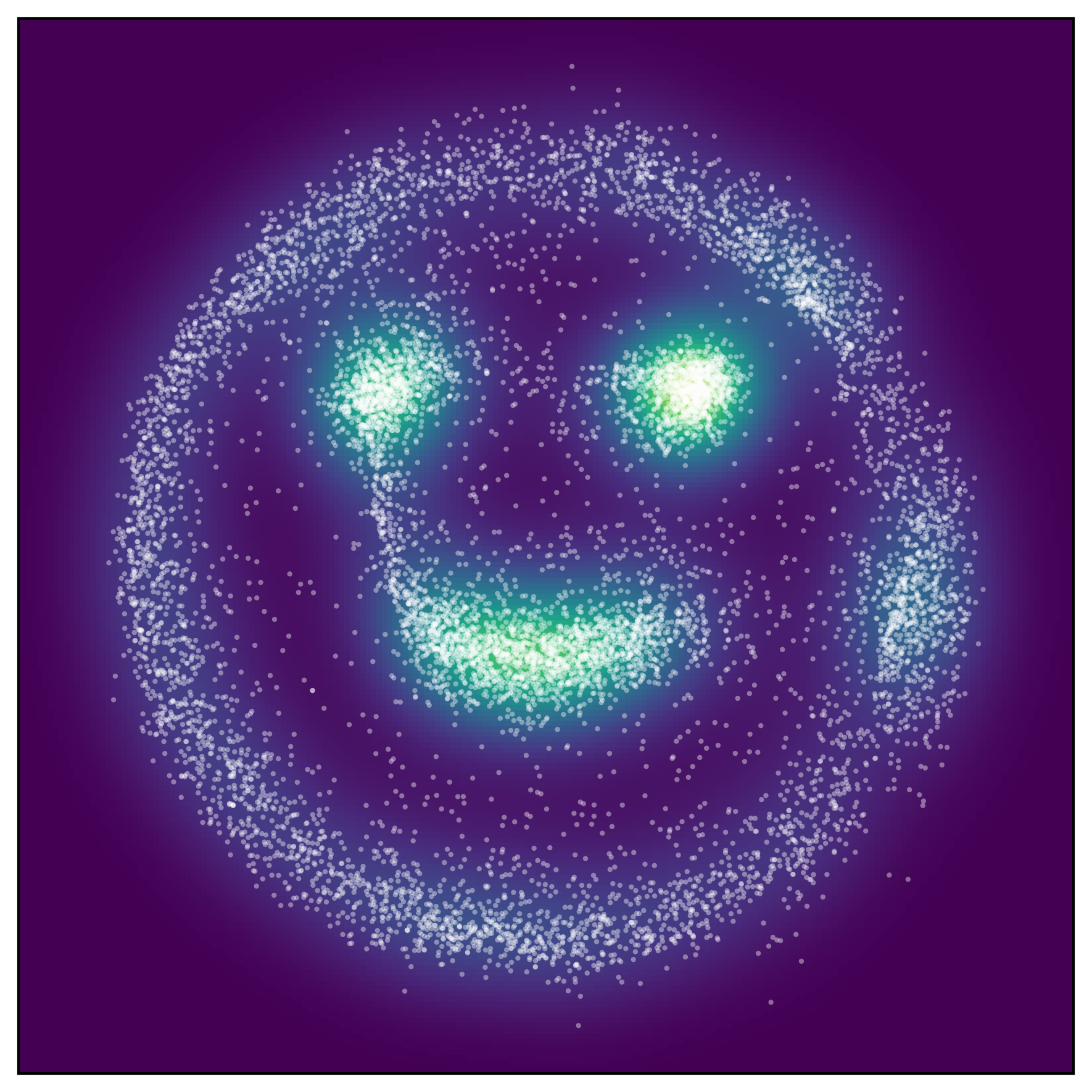} &
                \includegraphics[width=\smileyimg]{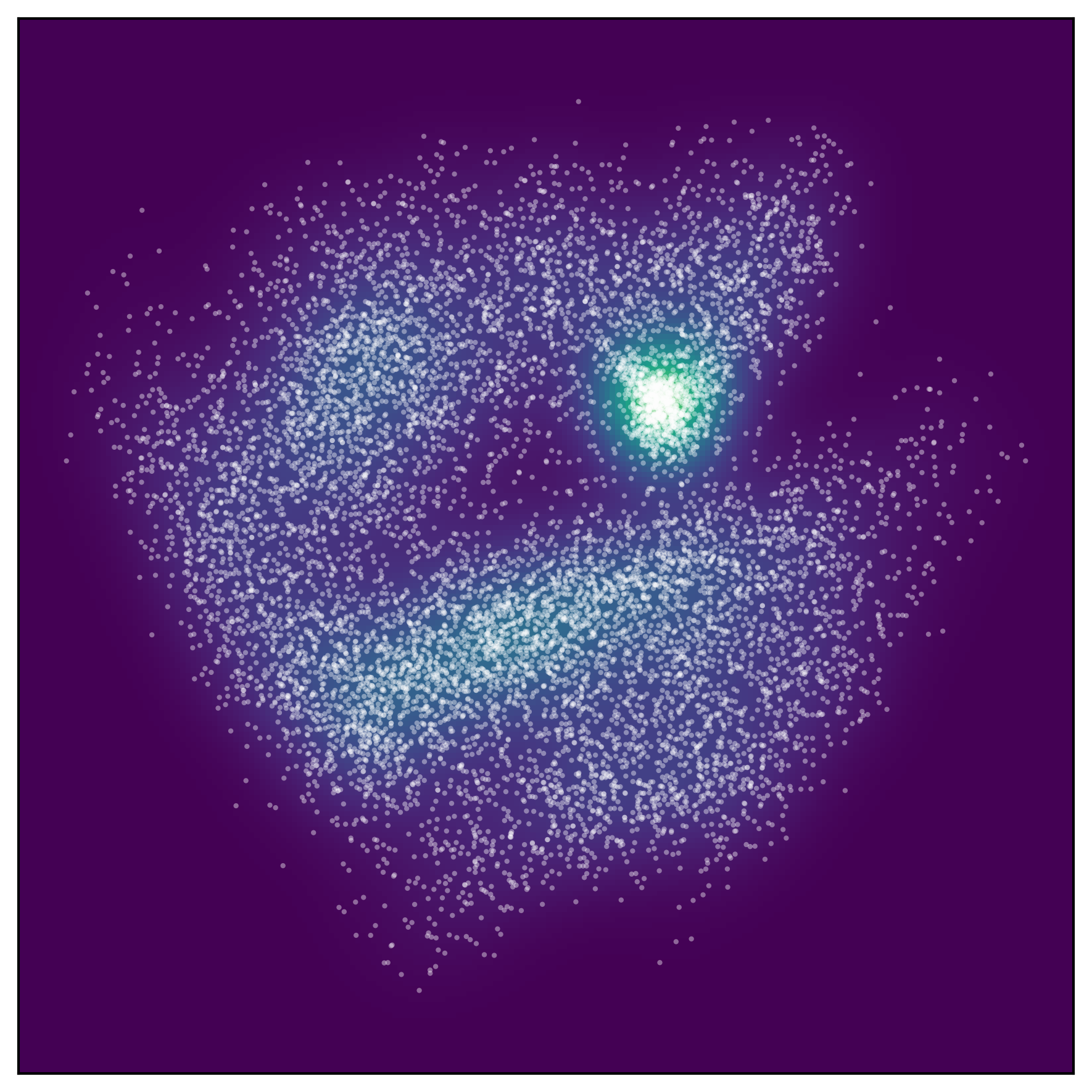} & 
                \includegraphics[width=\smileyimg]{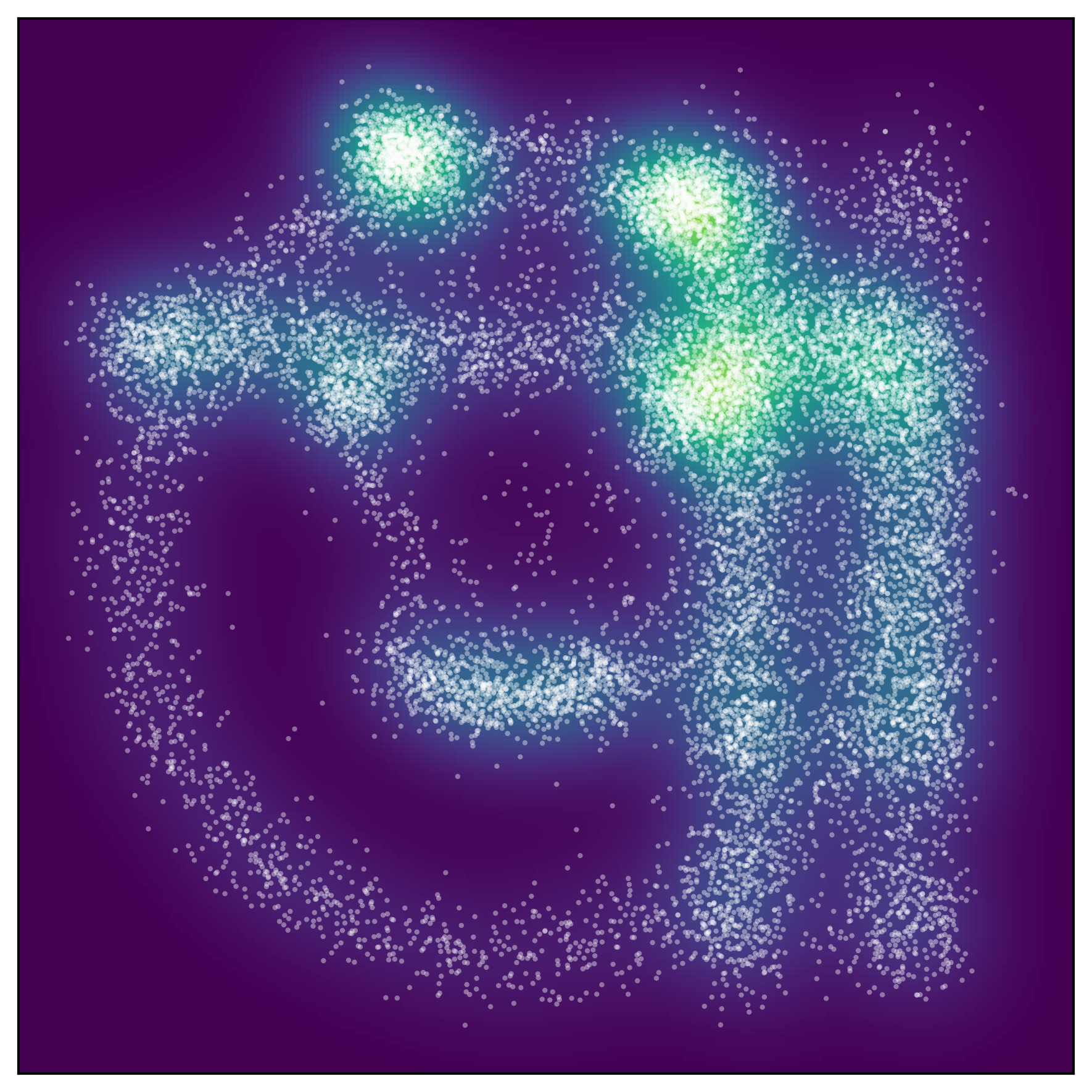} &
                \includegraphics[width=\smileyimg]{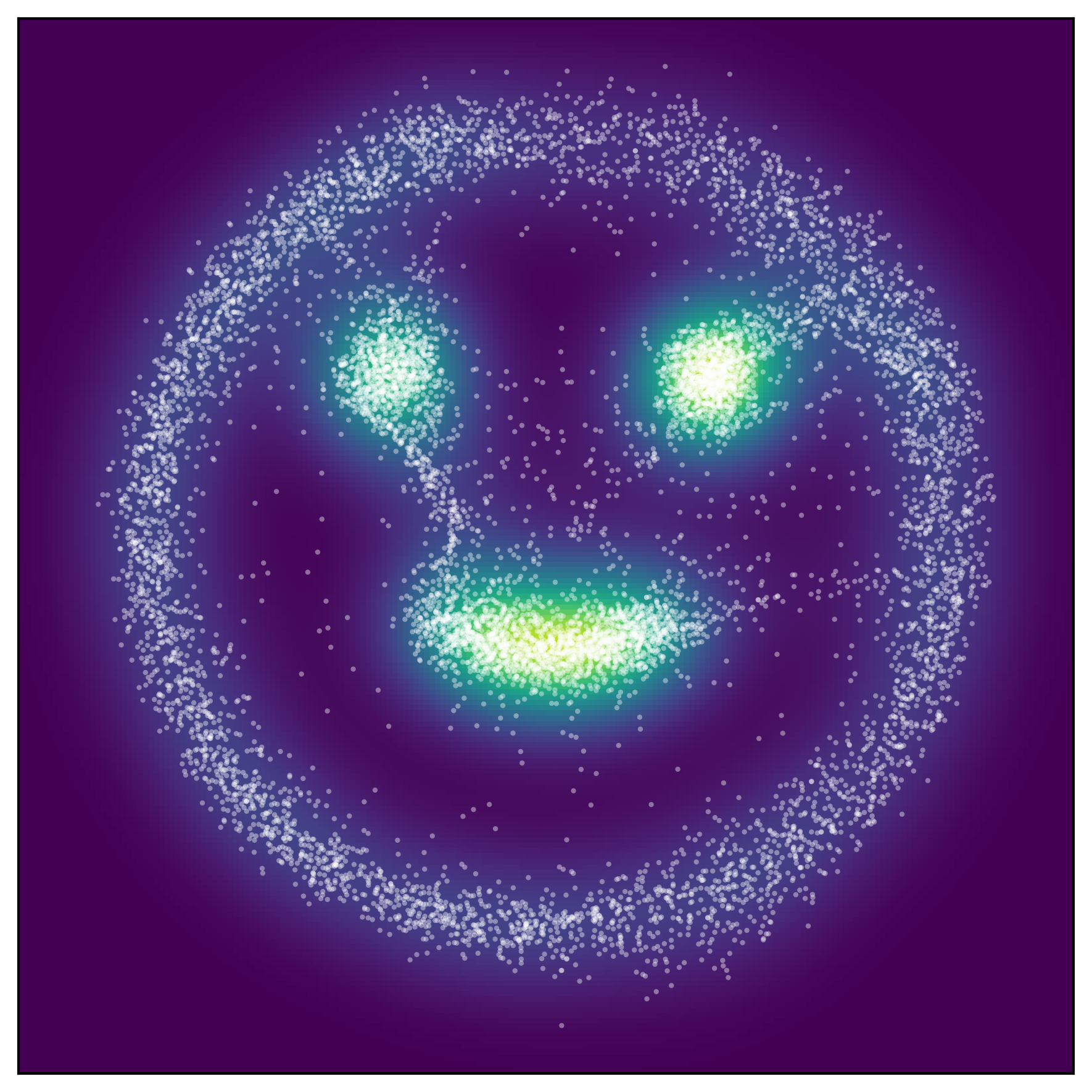} &
                \includegraphics[width=\smileyimg]{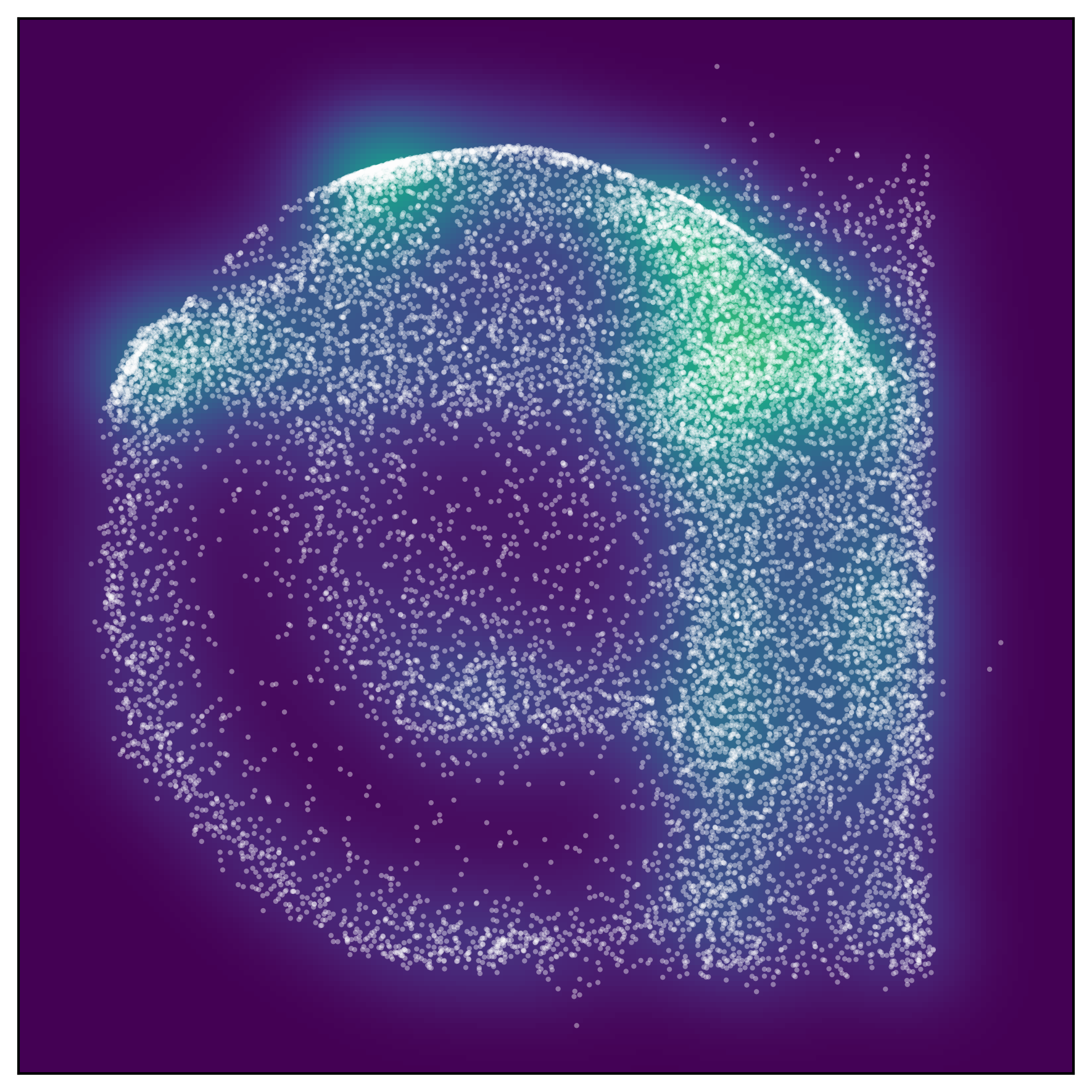} 
                \\
            \end{tabular}
            \\
        \end{tabular}
    \end{minipage}
\end{tabular}
\end{minipage}
\caption{Image examples for plane and sphere tasks. In all cases, $p_{\sigma}$ samples consistently visually outperform or are competitive against the other techniques. The label ``(G)" refers to Glow and the label ``(R)" refers to RealNVP.}
\end{figure*}

\subsection{Mesh}
\label{sec:gaussianonbunny}

Here, we present results for a standard Gaussian supported strictly on the surface of the Stanford Bunny mesh \cite{stanfordbunny}. This experiment is inspired by results in \cite{elhag_manifold_2023}. We choose this task as it maintains the same dimensionalities of the tasks in Sections \ref{sec:smileyfaceonplane}-\ref{sec:smileyfaceonsphere} while introducing greater and more irregular local curvature. 

\begin{figure}[H]
\scriptsize
    \centering
    \begin{subfigure}{0.25\columnwidth}
        \centering
        \includegraphics[width=\columnwidth]{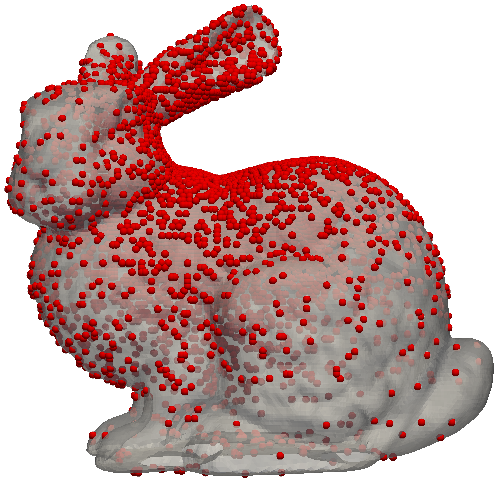}
        \caption{Data}
    \end{subfigure}
    \begin{subfigure}{0.25\columnwidth}
        \centering
        \includegraphics[width=\columnwidth]{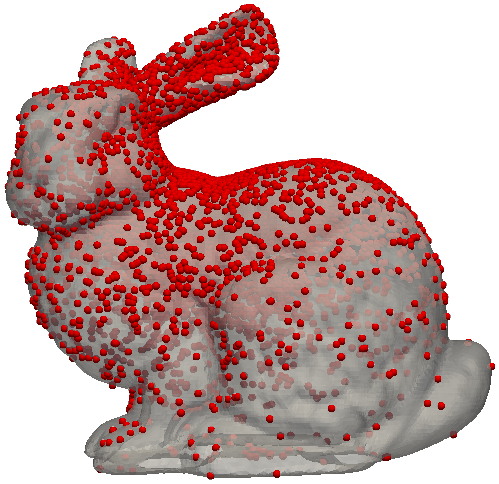}
        \caption{$p_{\sigma}$ (ours)}    
    \end{subfigure}
    \begin{subfigure}{0.25\columnwidth}
        \centering
        \includegraphics[width=\columnwidth]{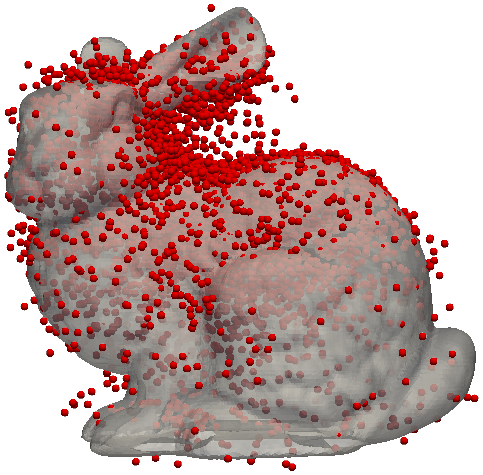}
        \caption{DDPM}    
    \end{subfigure}\\
    \begin{subfigure}{0.25\columnwidth}
        \centering
        \includegraphics[width=\columnwidth]{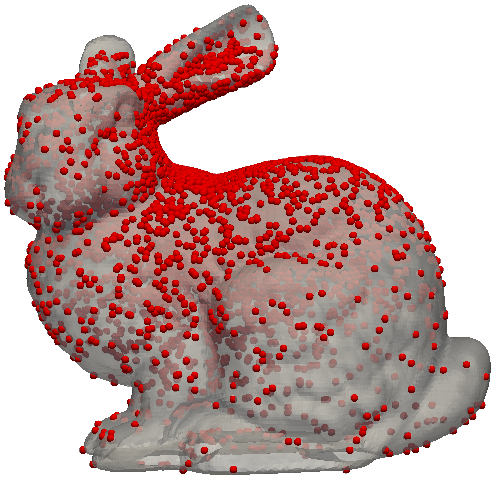}
        \caption{Proj. DDPM}    
    \end{subfigure}
    \begin{subfigure}{0.25\columnwidth}
        \centering
        \includegraphics[width=\columnwidth]{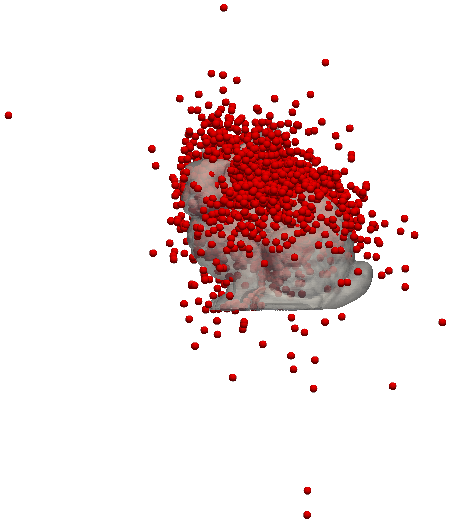}
        \caption{PIDM}    
    \end{subfigure}
    \begin{subfigure}{0.25\columnwidth}
        \centering
        \includegraphics[width=\columnwidth]{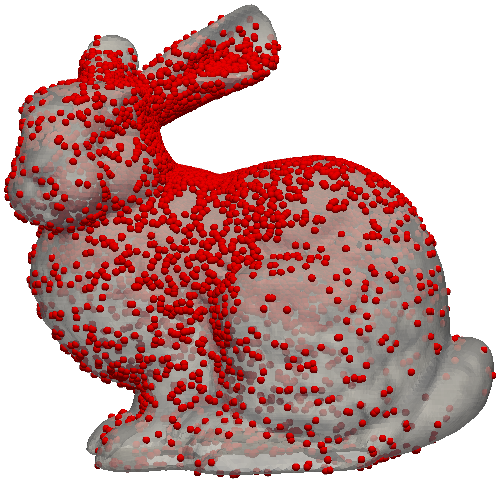}
        \caption{PDM}    
    \end{subfigure}
    \caption{Mesh task samples. The projected $p_{\sigma}$ samples are close to $p_{0}$ and do not risk off-manifold samples as others do.}
    \label{fig:BunnyExamples}
\end{figure}

In Figure \ref{fig:BunnyExamples}, we see that the DDPM and PIDM approaches frequently produce off-manifold samples. Although projected DDPM samples visually align well with $p_{0}(x)$, they consistently violate constraints prior to projection, suggesting that $p_{\sigma}$ may be more reliable when both distributional fidelity and adherence to constraints are critical.

\begin{table}[h]
\scriptsize
\centering
\begin{tabular}{l     
>{\centering\arraybackslash}p{0.8cm}
    >{\centering\arraybackslash}p{0.8cm}
    >{\centering\arraybackslash}p{0.8cm}
    >{\centering\arraybackslash}p{0.8cm}
    >{\centering\arraybackslash}p{0.8cm}
    }
\toprule
Method & Train time   & Sampling time   & COV $\uparrow$ & JSD $\downarrow$ & TVD $\downarrow$\\
\midrule
PDM & 0.0014 & 9.5618 & 0.8636 & 0.1827 & 0.4216 \\ 
PIDM & 0.0036 & 0.2201 & 0.2825 & 0.3351 & 0.6425 \\ 
$p_{\sigma}$ (ours) & 0.0013 & 0.4234 & 0.8541 & \textbf{0.1468} & \textbf{0.3430} \\ 
DDPM & 0.0013 & 0.2340 & 0.7224 & 0.2617 & 0.4992 \\ 
DDPM (proj.) & 0.0013 & 0.2535 & \textbf{0.8643} & 0.1512 & 0.3563 \\ 
\bottomrule
\end{tabular}
\caption{Mesh task metrics at $\sigma = 0.0005$. Learning $p_{\sigma}$ is consistently competitive or an improvement upon other methods.}
\end{table}

\subsection{Image Generation with Total Flux Constraint}
\label{sec:MNIST}

It is often the case in scientific inverse imaging tasks that the total flux of an image, or the sum of all pixel intensities, is constrained \cite{feng_neural_2024, TheEHTCollaboration}. To that end, we present a task of generating images with fixed total flux, where the sum of all pixel intensities is 100.0. We implement this task using $28 \times 28$ MNIST images \cite{deng_mnist_2012}, which may be represented as vectors $x \in \mathbb{R}^{784}$. This example is presented for its clear visual notion of sample quality. The constraint manifold is thus 
\begin{align*}
\mathcal{M}_{\textup{flux}} := \left \{ x : \sum_{i=1}^{784} x_{i} = 100 \right \}.
\end{align*}

Due to the challenge of histogram-based metrics in higher dimensions, we use learned embedding-based metrics inspired by the popular Frechét Inception Distance \cite{heusel_gans_2017}, as well as an empirical JSD over digit class distributions as inferred over a trained classifier. The precise procedure used to obtain these distances is described in thorough detail in Appendix \ref{app:metrics}.

\begin{figure}[H]
    \centering
    \begin{minipage}[t]{0.02\textwidth}
        \centering
        \raisebox{3.8em}{\rotatebox{90}{\scriptsize Mesh}}
    \end{minipage}%
\includegraphics[width=0.95\columnwidth]{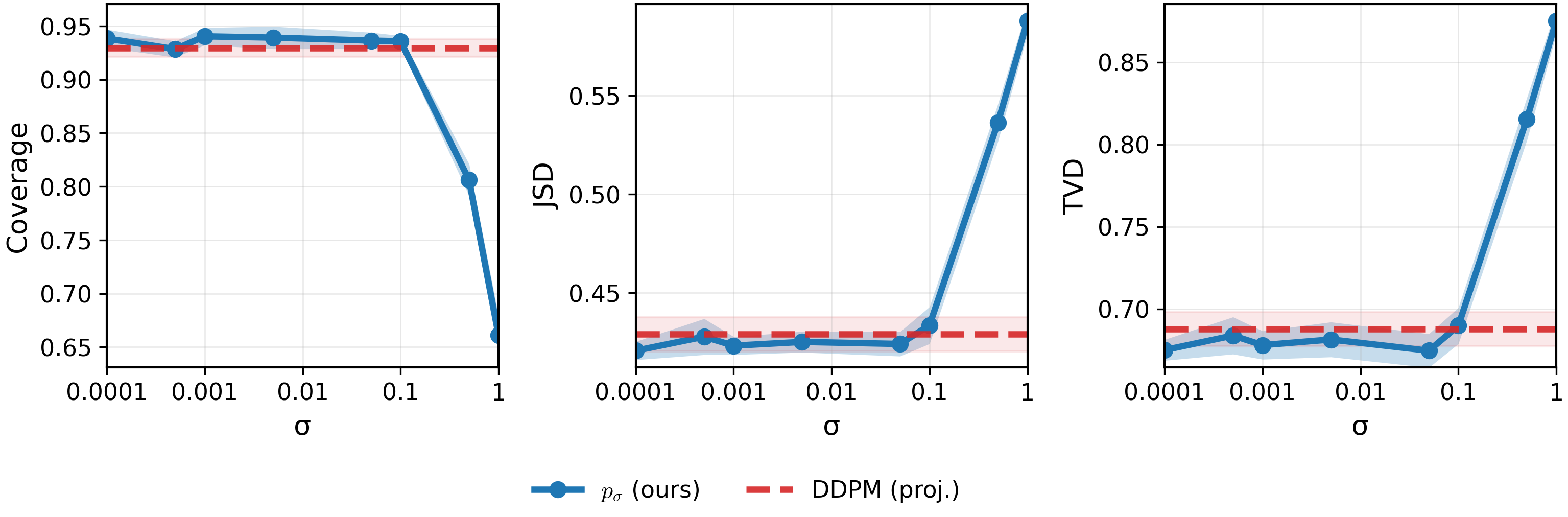}\\
    \begin{minipage}[t]{0.02\textwidth}
        \centering
        \raisebox{3.8em}{\rotatebox{90}{\scriptsize Image}}
    \end{minipage}%
    \includegraphics[width=0.95\linewidth]{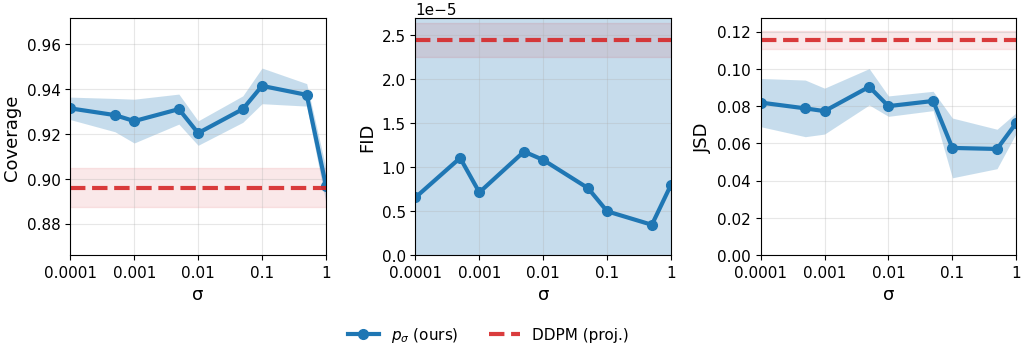}
\\
    \begin{minipage}[t]{0.02\textwidth}
        \centering
        \raisebox{3.7em}{\rotatebox{90}{\scriptsize Protein}}
    \end{minipage}%
    \includegraphics[width=0.95\linewidth]{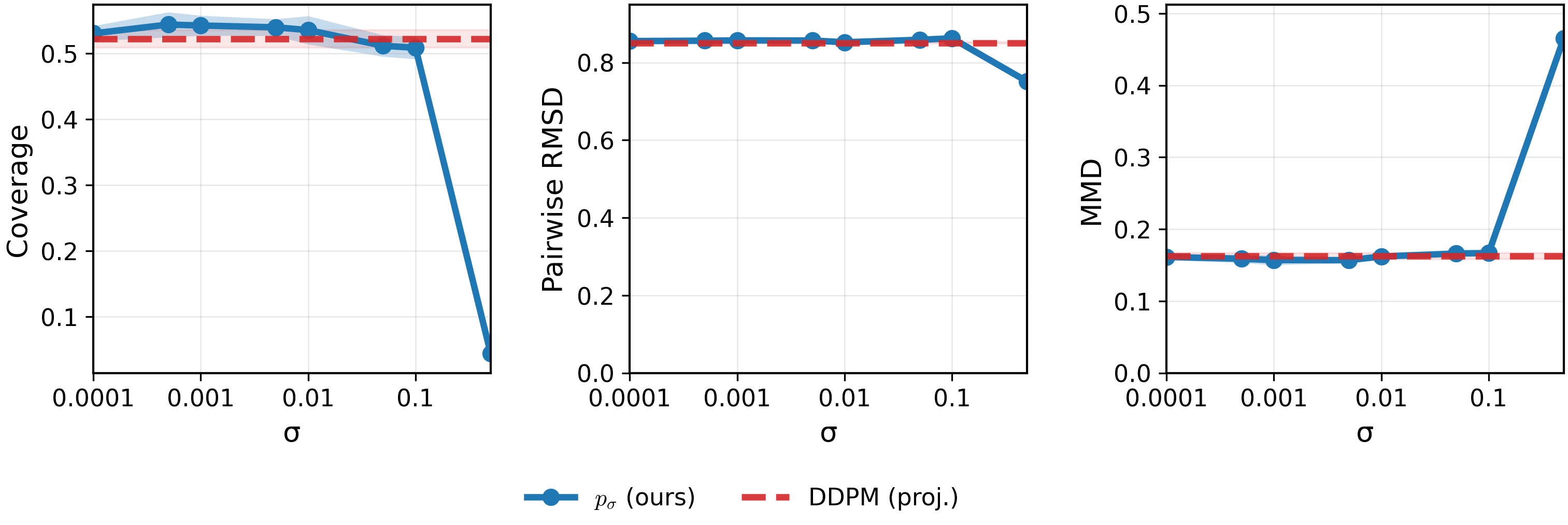}
    \caption{Metrics across varied $\sigma$ on complex tasks. Learning $p_{\sigma}$ consistently improves upon the projected DDPM baseline on all advanced problems.}
    \label{fig:ComplexCombinedMetrics}
\end{figure}

\begin{figure}[H]
\scriptsize
\centering
\includegraphics[width=0.99\columnwidth]{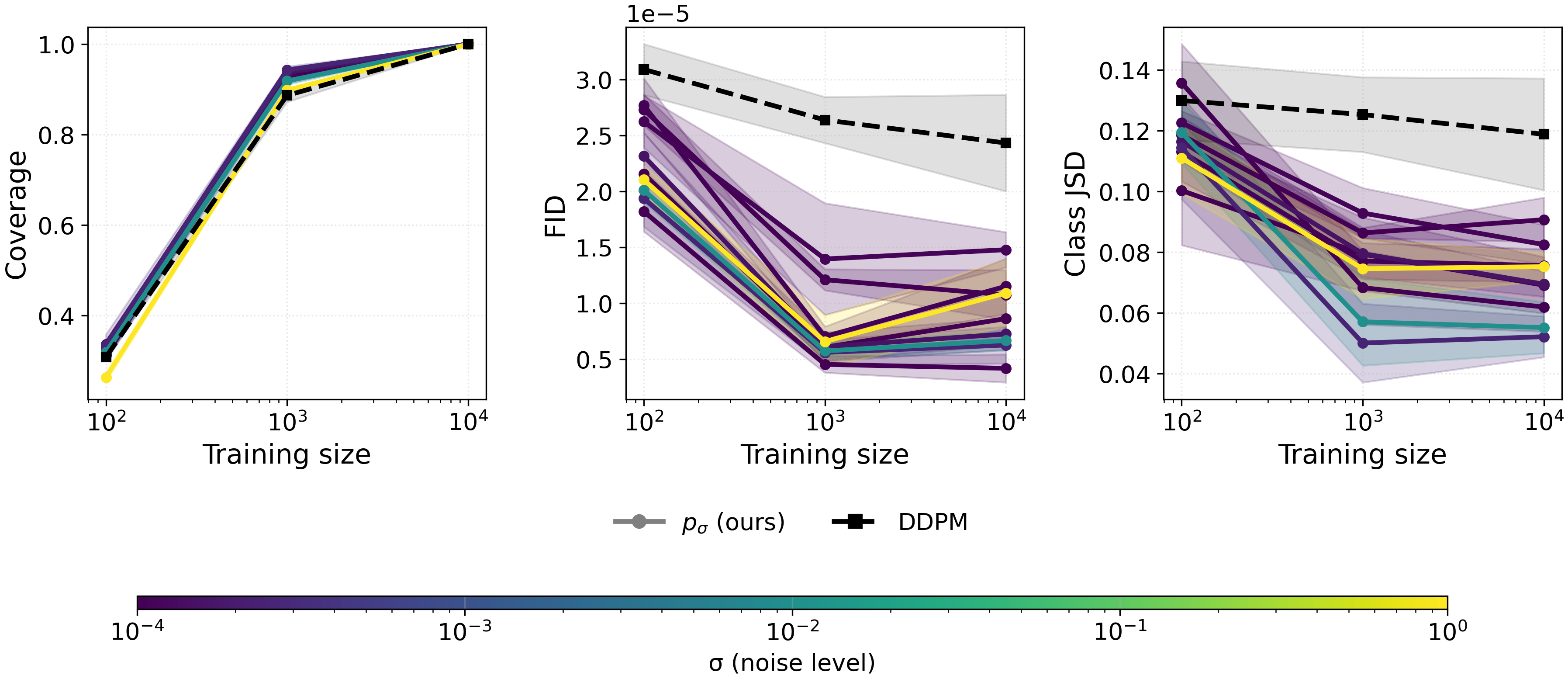}
    \caption{Performance improvement with respect to number of samples and $\sigma$ on MNIST task. Learning $p_{\sigma}$ at the tested $\sigma$ levels consistently improves performance on all metrics, with an expected increase as the number of training samples increases.}
\label{fig:MNIST_Combined_Metrics}
\end{figure}

\begin{table}[h]
\scriptsize
\centering
\begin{tabular}{l     >{\centering\arraybackslash}p{0.4cm}     >{\centering\arraybackslash}p{0.8cm}     >{\centering\arraybackslash}p{0.8cm}     >{\centering\arraybackslash}p{1.5cm} >{\centering\arraybackslash}p{0.8cm}}
\toprule
Method & Train time   & Sampling time   & COV $\uparrow$ & FID $\downarrow$ & Class JSD $\downarrow$ \\ 
\midrule
$p_{\sigma}$ (ours) & 0.0081 & 4.7107 & \textbf{0.9040} & $\mathbf{8.311\times10^{-6}}$ & \textbf{0.0634} \\ 
PDM & 0.0080 & 9.3909 & 0.5127 & $6.736\times10^{-4}$ & 0.3315 \\ 
DDPM & 0.0080 & 4.7070 & 0.8453 & $2.411\times10^{-5}$ & 0.1124 \\ 
PIDM & 0.0096 & 4.7085 & 0.8043 & $2.477\times10^{-5}$ & 0.0886 \\ 
DDPM (proj.) & 0.0080 & 4.7081 & 0.8453 & $2.421\times10^{-5}$ & 0.1124 \\ 
\bottomrule
\end{tabular}
\caption{MNIST metrics at $\sigma = 0.01$ with 10K training samples.}
\end{table}

In ablations against $\sigma$ and the total training dataset size presented simultaneously in Figure \ref{fig:MNIST_Combined_Metrics}, we see that our approach universally improves performance on the MNIST task on all dataset sizes and choices of $\sigma$. We also remark that such results are consistent for fixed 10K training samples in Figure \ref{fig:ComplexCombinedMetrics}, wherein learning $p_{\sigma}$ uniformly outperforms the projected DDPM baseline on all metrics.

\subsection{Protein Backbone Generation}
\label{sec:protein}

To demonstrate the effectiveness of our approach on real scientific data, we apply our technique to the generation of protein backbone fragments, which are the repeated N-CA-C atom structures present at the core of protein samples. Using generative AI to model the distribution of realistic protein backbones is an active area of research \cite{wu_protein_2024, zhang_improving_2024, xu_accurate_2021}, with downstream utility for prior-building in \textit{de novo} and inverse design of proteins. 

Protein backbone fragments abide by several underlying physical constraints: in particular, bond angles and lengths are rigidly constrained due to known chemical properties, imposing a geometric constraint on how the structures appear in 3-D space. Thus, applying out-of-the-box generative modeling techniques bears the risk of generating geometrically implausible structures. Moreover, existing state-of-the-art approaches to constraint-aware molecular structure generation, such as torsional diffusion \cite{jing_torsional_2023}, specifically cite protein generation, particularly in the backbone fragments, to be challenging due to the dimensionality of proteins. We do not claim to achieve SOTA performance against the many highly-tailored generative models constructed for computational biology applications. Rather, we present this as a highly realistic testbed for our approach with a complex constraint manifold and large data dimensionality, illustrating the potential of our technique for large-scale scientific generative modeling problems.

A single sample (a \textit{fragment}) has size (\# residues, \# atoms, \# coordinates), where the second dimension indexes the atom (N, CA,C) and the third dimension indexes the coordinates $(x,y,z)$ in 3-D space for that atom. For each bond, we have a \textit{bond length} and a \textit{bond angle} constraint (the latter of which is taken over triplets of bonded atoms). 

Let the coordinates of atom $a$ in residue $i$ be denoted by $x_{i,a} \in \mathbb{R}^3$.
The full coordinate vector is then $ x \in \mathbb{R}^{D}$ where $D = 9L$
(three atoms per residue, each with three coordinates, where $L$ is the number of residues in the fragment). We use fragments with lengths of 10 residues extracted from the CASP12 dataset as presented in SideChainNet \cite{king_sidechainnet_2021, king_interpreting_2024}, a dataset expanding the SOTA ProteinNet \cite{alquraishi_proteinnet_2019} with full backbone geometry information. We train on a subset of 20K fragments. Metrics are evaluated on sets of 1K samples.

\begin{figure}[h]
\centering
\begin{tikzpicture}[scale=0.9, every node/.style={font=\small}, atom/.style={circle, fill=black, inner sep=1.5pt}]

\coordinate (N1) at (0,0);
\coordinate (CA1) at (1,0.5);
\coordinate (C1) at (2,0);
\coordinate (N2) at (3,0.3);
\coordinate (CA2) at (4,0.8);
\coordinate (C2) at (5,0.3);
\coordinate (N3) at (6,0.5);
\coordinate (CA3) at (7,1.0);
\coordinate (C3) at (8,0.5);

\draw[thick] (N1) -- (CA1) -- (C1) -- (N2) -- (CA2) -- (C2) -- (N3) -- (CA3) -- (C3);

\node[atom] at (N1) {};
\node[atom] at (CA1) {};
\node[atom] at (C1) {};
\node[atom] at (N2) {};
\node[atom] at (CA2) {};
\node[atom] at (C2) {};
\node[atom] at (N3) {};
\node[atom] at (CA3) {};
\node[atom] at (C3) {};

\node[left] at (N1) {N$_1$};
\node[above] at (CA1) {CA$_1$};
\node[below] at (C1) {C$_1$};
\node[above] at (N2) {N$_2$};
\node[above] at (CA2) {CA$_2$};
\node[below] at (C2) {C$_2$};
\node[above] at (N3) {N$_3$};
\node[above] at (CA3) {CA$_3$};
\node[below] at (C3) {C$_3$};



\end{tikzpicture}
\caption{Backbone fragment with atom coordinates as generated variables. In our setting, a single data sample consists of $L$ residues ($L=3$ in this figure) with each residue made up of 3 backbone atoms (N, CA, C), each represented by $(x,y,z)$-coordinates.}
\end{figure}
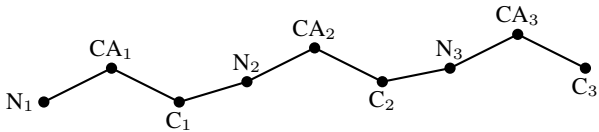

\begin{table}[h]
\scriptsize
\centering
\begin{tabular}{l     
>{\centering\arraybackslash}p{0.8cm}
    >{\centering\arraybackslash}p{0.8cm}
    >{\centering\arraybackslash}p{0.4cm}
    >{\centering\arraybackslash}p{0.4cm}
    >{\centering\arraybackslash}p{1.8cm}
    }
\toprule
Method & Train time & Sampling time & COV & Pairwise RMSD & MMD \\ 
\midrule
$p_{\sigma}$ (ours) & 0.0036 & 1.9759 & \textbf{0.9490} & \textbf{1.6458} & $3.37\times10^{-4}$ \\ 
DDPM & 0.0035 & 1.4766 & 0.2910 & 0.7256 & $\mathbf{3.36\times10^{-4}}$ \\ 
PDM & 0.0037 & 206.7306 & 0.6330 & 1.4503 & $4.15\times10^{-4}$ \\ 
DDPM (proj.) & 0.0035 & 5.7733 & 0.5160 & 0.8407 & $3.36\times10^{-4}$ \\ 
PIDM & 0.3914 & 1.4180 & 0.3810 & 0.7776 & $3.36\times10^{-4}$ \\ 
\bottomrule
\end{tabular}
\caption{Protein backbone fragment metrics at $\sigma = 0.001$.}
\label{tab:protein_metrics}
\end{table}

In Table \ref{tab:protein_metrics}, we see that our approach outperforms other approaches on coverage and competitively on MMD with a bandwidth of 1.0. Moreover, as indicated by the Pairwise RMSD score (a metric of diversity amongst samples), this is accomplished without collapse to memorization of a single sample. While DDPM performs slightly better on the MMD metric, it violates constraints, and the post-projected DDPM samples have larger MMD. In comparison, the $p_{\sigma}$ samples perform well on metrics of distributional fidelity and diversity with the added benefit of guaranteed constraint adherence (up to numerical optimization challenges) by construction of our approach.  This, combined with the reduction in score magnitudes, indicates that our modified distribution may be strongly suitable for biological and molecular applications where strict geometric constraints are present.

\section{Conclusions and Future Work}

In this work, we demonstrated how a strategic pre- and post-processing step, in which we modify the underlying distribution being learned by a generative model, can circumvent common pitfalls in the generative modeling of manifold-supported distributions. 
Our technique offers automatic guarantees of both manifold adherence in samples through post-projection, which is not true for other approaches of similar computational requirements, as well as accurate distribution recovery according to intrinsic metrics on the manifold.
We also present theoretical and experimental results which demonstrate that our approach either improves upon or is competitive against SOTA generative modeling techniques while also frequently offering drastically reduced training or sampling time compared to other tailored constraint-aware approaches. 
We also observed in all experiments  that our approach either outperforms other approaches according to evaluation metrics or performs competitively with drastically reduced sampling time, a known bottleneck in existing applications of generative models. Our examples on various data structures and task complexity demonstrate the utility of our approach in a wide variety of practical and scientific settings.

In this work, we experiment with learning $p_{\sigma}$ using multiple popular generative modeling techniques, specifically those falling under diffusion model and NF paradigms. However, our approach is highly flexible and not restricted to these training parameterizations; our results support the claim that learning and projecting $p_{\sigma}$ is a much more stable and reliable way to apply generative models to constrained tasks regardless of the chosen technique for distribution modeling. 

One can also consider our approach applied to diffusion models from an SDE perspective. So far, we have essentially modified the forward process to apply 1. anisotropic noise first to push points off $\mathcal{M}$ and then 2. the usual isotropic noising to allow standard training. One could extend our approach to replace the $\beta(t)$ noise variance schedule in the diffusion model forward process with a spatially-dependent $\beta(x,t)$ (or perhaps $\beta(x,t,\mathcal{M})$. Future work may consider a continuous interpolation between isotropic and our anisotropic noising in the diffusion model SDEs.

Our theoretical analyses demonstrate that the error between $p_{\sigma}(x)$ and $p_{0}(x)$ is dominated by manifold curvature via the reach. Future work may investigate defining a \textit{spatially-dependent} $\sigma_{x}$, which can be larger in linear regions and reduced in regions of greater, more irregular curvature. Expanding $p_{\sigma}$ to be aware of second-order manifold information through curvature is a natural next step for combating challenges in sampling from degenerate distributions.

\section*{Acknowledgements}

This material is based upon work supported by the U.S. Department of Energy, Office of Science, Office of Advanced Scientific Computing Research, Department of Energy Computational Science Graduate Fellowship under Award Number(s) DE-SC0023112. The authors also acknowledge support from NSF award DMS 2038118.

\section*{Impact Statement}

This paper presents work with the broader goal of aligning AI for domain science use cases, where ensuring mathematical guarantees and adherence with known constraints is particularly crucial. In many of these applications, ensuring synergy between known laws originating from domain sciences and data-driven techniques in generative AI is crucial for meaningful implementation in actual science workflows. As such, the potential societal consequences of our work match those of general AI4Science endeavors.


\bibliography{references, references-2}
\bibliographystyle{icml2026}

\newpage
\appendix
\onecolumn

\section{Notation Glossary}
\label{app:notation_glossary}

\begin{table}[H]
\centering
\label{tab:notation}
\begin{tabular}{cl}
\toprule
\textbf{Symbol} & \textbf{Description} \\
\midrule
\multicolumn{2}{l}{\textit{Spaces and Manifolds}} \\
$\mathbb{R}^d$ & Ambient space (dimension $d$) \\
$\mathcal{M}$ & $m$-dimensional constraint manifold embedded in $\mathbb{R}^d$ \\
$m$ & Intrinsic dimension of $\mathcal{M}$ \\
$k = d - m$ & Codimension of $\mathcal{M}$ \\
$T_x\mathcal{M}$ & Tangent space to $\mathcal{M}$ at point $x$ \\
$N_x\mathcal{M}$ & Normal space to $\mathcal{M}$ at point $x$ \\
$\text{reach}(\mathcal{M})$ & Global reach of manifold $\mathcal{M}$ \\
$\tau(z)$ & Pointwise reach function at $z \in \mathcal{M}$ \\
$r < \text{reach}(\mathcal{M})$ & Radius of reach tube $T_r(\mathcal{M})$ \\
\midrule
\multicolumn{2}{l}{\textit{Constraints}} \\
$h: \mathbb{R}^d \to \mathbb{R}^m$ & Constraint function defining $\mathcal{M} = \{x : h(x) = 0\}$ \\
$J_h(x)$ & Jacobian of constraint function at $x$ \\
\midrule
\multicolumn{2}{l}{\textit{Distributions and Measures}} \\
$p_0(x)$ & Original data distribution on $\mathcal{M}$ \\
$p_\sigma(x)$ & Perturbed (lifted) distribution in $\mathbb{R}^d$ \\
$\tilde{p}_\sigma(x)$ & Projected distribution $\Pi_\# p_\sigma$ on $\mathcal{M}$ \\
$p_T(x)$ & Terminal/latent distribution (typically $\mathcal{N}(0, I)$) \\
$\mu$ & Probability measure/law \\
$\lambda^d$ & $d$-dimensional Lebesgue measure \\
$\mathcal{H}^m$ & $m$-dimensional Hausdorff measure \\
\midrule
\multicolumn{2}{l}{\textit{Random Variables}} \\
$Z$ & Random variable with law $p_0$ on $\mathcal{M}$ \\
$N | Z=z$ & Conditional Gaussian noise in $N_z\mathcal{M}$ \\
$X = Z + N$ & Lifted random variable with law $p_\sigma$ \\
$Y = \Pi(X)$ & Projected random variable with law $\tilde{p}_\sigma$ \\
\midrule
\multicolumn{2}{l}{\textit{Operators and Functions}} \\
$\Pi: \mathbb{R}^d \to \mathcal{M}$ & Nearest-point projection onto $\mathcal{M}$ \\
\midrule
\multicolumn{2}{l}{\textit{Hyperparameters}} \\
$\sigma$ & Noise scale for perturbation \\
$t \in [0, T]$ & Virtual time for diffusion process \\
$T$ & Terminal time (typically $T=250$ in experiments) \\
\midrule
\multicolumn{2}{l}{\textit{Metrics}} \\
$\text{TV}(p, q)$ & Total variation distance: $\frac{1}{2}\int_{\mathcal{M}} |p(x) - q(x)| d\mathcal{H}^m(x)$ \\
$\text{JSD}(p \| q)$ & Jensen-Shannon divergence \\
$\text{COV}$ & Coverage metric\\
$\text{FID}$ & Fréchet Inception Distance.\\
$\text{RMSD}$ & Root mean square deviation \\
$\text{MMD}$ & Maximum Mean Discrepancy \\
\bottomrule
\end{tabular}
\caption{Summary of notation used throughout this paper.}
\end{table}

We note that while we use the term ``FID" to refer to the popular metric for image generation analysis, our implementation is an approximation using a LeNet classifier instead of the Inception-V3 architecture standard for FID. The precise details of our implementation are described in Appendix \ref{app:metrics}.

\newpage
\section{Theory}
\label{app:theory}

\begin{lemma}[$\mathcal{C}^{1}$-ness of $\Pi$ on $\mathbb{R}^{d} \setminus \Sigma(\mathcal{M})$]
    If $\mathcal{M}\in\mathcal{C}^{2}$, then $\Pi(x): \mathbb{R}^{d} \setminus \Sigma(\mathcal{M}) \to \mathcal{M} \in \mathcal{C}^{1}$. Moreover, as $\mathcal{C}^{1}$ implies being locally Lipschitz, $\Pi$ is locally Lipschitz on this domain.
\end{lemma}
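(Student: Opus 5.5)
The plan is to realize $\Pi$ locally as a component of the inverse of the normal-bundle (endpoint) map and then apply the inverse function theorem. Let $N\mathcal{M}=\{(z,n): z\in\mathcal{M},\, n\in N_z\mathcal{M}\}$. Since $\mathcal{M}$ is $\mathcal{C}^2$, $N\mathcal{M}$ is a $\mathcal{C}^1$ manifold of dimension $m+k=d$. Define $F:N\mathcal{M}\to\mathbb{R}^d$ by $F(z,n)=z+n$, which is $\mathcal{C}^1$. For $x\in\mathbb{R}^d\setminus\Sigma(\mathcal{M})$ the minimizer $z^\ast=\Pi(x)$ is unique, and the first-order optimality condition for $z\mapsto\|x-z\|^2$ on $\mathcal{M}$ gives $x-z^\ast\in N_{z^\ast}\mathcal{M}$. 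Hence $x=F(z^\ast,x-z^\ast)$ and $\Pi(x)=\mathrm{pr}_1\bigl(F^{-1}(x)\bigr)$ on any neighbourhood where a local inverse exists.

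The first step is to show that $\Pi$ is continuous at every $x\notin\Sigma(\mathcal{M})$, using only compactness of $\mathcal{M}$ and uniqueness of the minimizer. Take $x_j\to x$ and any choice $\Pi(x_j)$. By compactness some subsequence of $\Pi(x_j)$ converges to a point $z'$. Passing to the limit in $\|x_j-\Pi(x_j)\|\le\|x_j-z\|$ for all $z\in\mathcal{M}$ shows that $z'$ is a minimizer for $x$. Uniqueness then forces $z'=z^\ast$, so the whole sequence converges to $z^\ast$. Continuity guarantees that the local branch of $F^{-1}$ produced in the next step actually coincides with $\Pi$ near $x$, rather than with some other normal foot point.

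The second step computes $DF$ at $(z^\ast,n^\ast)$ with $n^\ast=x-z^\ast$. In adapted coordinates it is block triangular, with the normal block equal to the identity and the tangential block equal to $I_{T}-A_{n^\ast}$, where $A_{n^\ast}$ is the shape operator of $\mathcal{M}$ in direction $n^\ast$. So $DF$ is invertible exactly when $1$ is not an eigenvalue of $A_{n^\ast}$, i.e. when $x$ is not a focal point of $z^\ast$. Once this is established, the inverse function theorem gives a $\mathcal{C}^1$ local inverse $G$ of $F$ near $x$. By the continuity from the first step, $\Pi=\mathrm{pr}_1\circ G$ on a neighbourhood of $x$, so $\Pi$ is $\mathcal{C}^1$ there. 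The local Lipschitz claim follows from the mean value inequality on small convex balls where $\|D\Pi\|$ is bounded by continuity.

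The main obstacle is this nondegeneracy condition. The second-order necessary condition at the unique minimizer $z^\ast$ only gives $I-A_{n^\ast}\succeq 0$, not strict definiteness. My first attempt would be to exploit the uniqueness hypothesis: if $I-A_{n^\ast}$ were singular at $z^\ast$, I would try to perturb $x$ along the ray $z^\ast+t n^\ast$, $t>1$. Past a focal point, $z^\ast$ ceases to be even a local minimizer, which suggests that $x$ is a limit of medial-axis points, and I would try to feed this back into a contradiction using the continuity from the first step. I expect this argument to succeed cleanly only for $x$ in $\mathbb{R}^d\setminus\overline{\Sigma(\mathcal{M})}$, and in particular inside the reach tube $\{x:\mathrm{dist}(x,\mathcal{M})<\mathrm{reach}(\mathcal{M})\}$, where $|n^\ast|<\mathrm{reach}(\mathcal{M})\le 1/\|A\|$ makes $I-A_{n^\ast}$ positive definite outright. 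Focal points that are not themselves in $\Sigma(\mathcal{M})$ are the delicate boundary case, and this is exactly the step where the argument has to be checked most carefully. For the later use in Theorem \ref{thm:TVBound}, only the reach-tube case is needed.
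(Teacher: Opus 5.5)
There is a genuine gap, and it is the one you flag. Outside the reach tube you never show that $I-A_{n^*}$ is invertible, and the ray argument does not supply it. Knowing that $z^*$ is not a local minimizer for $z^*+tn^*$ with $t>1$ only gives $\Pi(z^*+tn^*)\neq z^*$. Those points can have unique foot points, converging to $z^*$ by your Step 1. So nothing places them in $\Sigma(\mathcal{M})$, and continuity yields no contradiction.

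Part of the gap cannot be closed at all, because the lemma as literally stated is false on $\overline{\Sigma(\mathcal{M})}\setminus\Sigma(\mathcal{M})$. Take the ellipse $x^2/a^2+y^2/b^2=1$ with $a>b$, which is compact, smooth and of positive reach. Its medial axis $\Sigma$ is the open segment $(-c,c)\times\{0\}$ with $c=a-b^2/a$. The point $(c,0)$ has the unique foot point $(a,0)$, but it is a focal point. For fixed $s\in(0,c)$, the points $(s,\pm\eta)$ are $2\eta$ apart, yet as $\eta\to0$ their projections converge to the two distinct foot points of $(s,0)$. Hence $\Pi$ is not even locally Lipschitz on $\mathbb{R}^2\setminus\Sigma$ near $(c,0)$, let alone $\mathcal{C}^1$.

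The paper's proof is only a citation of Leobacher--Steinicke. Their theorem is stated on the maximal \emph{open} set of unique projection, namely $\mathbb{R}^d\setminus\overline{\Sigma(\mathcal{M})}$. That is the domain on which the lemma actually holds, and it contains the reach tube.

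On $\mathbb{R}^d\setminus\overline{\Sigma(\mathcal{M})}$ your route can be completed with one extra idea: show that the minimizing normal segment extends past $x$.
\begin{itemize}
\item Take a ball $B\ni x$ inside this open set and define $G(y)=(\Pi(y),\,y-\Pi(y))\in N\mathcal{M}$.
\item $G$ is continuous by your Step 1, and injective because $F\circ G=\mathrm{id}_B$.
\item By invariance of domain, $G(B)$ is an open neighbourhood of $(z^*,n^*)$ in the $d$-manifold $N\mathcal{M}$.
\item Hence $(z^*,tn^*)\in G(B)$ for some $t>1$, which means $\Pi(z^*+tn^*)=z^*$.
\item The second-order condition at $z^*+tn^*$ then gives $I-tA_{n^*}\succeq 0$. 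Every eigenvalue of $A_{n^*}$ is at most $1/t<1$, so $I-A_{n^*}\succ 0$.
\end{itemize}
With this step, the rest of your argument is correct: the block form of $DF$, the inverse function theorem, identifying the branch with $\Pi$ via continuity, and the mean value inequality. Together they give a self-contained proof of what the paper obtains by citation, on the correct open domain.
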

\begin{proof}
    This is a direct consequence of the main result in \cite{leobacher_existence_2020}, which states that for $\mathcal{C}^{k}$-submanifolds with $k \geq 2$, $\Pi$ is $\mathcal{C}^{k-1}$ on the maximal open domain where $\Pi$ is uniquely valued. 
\end{proof}


\begin{theorem}[Difference of $\textup{Proj}_{\mathcal{M}}$ and $\Pi$]
    Under the assumptions of $\mathcal{M}$ in this paper, $\lambda^{d}\left(\Sigma(\mathcal{M})\right) = 0$.
\end{theorem}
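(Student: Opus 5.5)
The plan is to show that $\Sigma(\mathcal{M})$ is contained in the set of points where the distance function $d_{\mathcal{M}}(x) = \min_{z\in\mathcal{M}}\|x-z\|$ fails to be differentiable, and then to invoke Rademacher's theorem.

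First, $d_{\mathcal{M}}$ is $1$-Lipschitz on $\mathbb{R}^{d}$, since $\mathcal{M}$ is compact and nonempty. By Rademacher's theorem, $d_{\mathcal{M}}$ is therefore differentiable $\lambda^{d}$-almost everywhere. It thus suffices to show that at every $x \in \Sigma(\mathcal{M})$ the function $d_{\mathcal{M}}$ is not differentiable.

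Second, let $x\in\Sigma(\mathcal{M})$, so there are two distinct nearest points $z_{1}\neq z_{2}$, both at distance $\delta = d_{\mathcal{M}}(x)$. Note that $\delta>0$, because $\mathcal{M}$ is closed and points of $\mathcal{M}$ have the unique nearest point themselves. Set $u_{i} = (x-z_{i})/\delta$. Then $u_{1}\neq u_{2}$: both are unit vectors, and $u_1 = u_2$ would force $z_1 = z_2$.

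For any direction $v$ we have $d_{\mathcal{M}}(x+tv) \le \|x+tv-z_{i}\|$, which gives
\begin{align*}
d_{\mathcal{M}}(x+tv) - d_{\mathcal{M}}(x) \le t\langle u_{i}, v\rangle + o(t), \quad i=1,2 .
\end{align*}
Suppose $d_{\mathcal{M}}$ were differentiable at $x$ with gradient $g$. Taking $t\to 0^{\pm}$ yields $\langle g,v\rangle \le \langle u_{i},v\rangle$ for $t>0$ and $\langle g,v\rangle \ge \langle u_{i},v\rangle$ for $t<0$, for all $v$. Hence $g = u_{1} = u_{2}$, a contradiction. So $\Sigma(\mathcal{M})$ is contained in the non-differentiability set of $d_{\mathcal{M}}$, which is $\lambda^{d}$-null.

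A subtlety is that $\Sigma(\mathcal{M})$ need not be Borel a priori, so $\lambda^{d}(\Sigma(\mathcal{M}))=0$ should be read in the sense of Lebesgue (outer) measure. This is harmless, since a subset of a Lebesgue-null set is Lebesgue measurable with measure zero. I expect the main obstacle to be precisely this measurability bookkeeping, together with correctly justifying the one-sided upper bound at $t<0$. The latter is fine because the inequality $d_{\mathcal{M}}(x+tv)\le\|x+tv-z_i\|$ holds for all real $t$, and $\|x+tv-z_i\| = \delta + t\langle u_i,v\rangle + O(t^{2})$ for both signs of $t$.

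Notably, this argument uses only compactness (closedness suffices) and not the $\mathcal{C}^{2}$ or positive-reach hypotheses. Positive reach additionally places $\Sigma(\mathcal{M})$ at distance at least $\mathrm{reach}(\mathcal{M})$ from $\mathcal{M}$, consistent with the earlier lemma that $\Pi$ is $\mathcal{C}^{1}$ off $\Sigma(\mathcal{M})$. As a consequence, the choice of tie-breaking in $\Pi$ does not affect $\tilde{p}_{\sigma}$, since $p_{\sigma}\ll\lambda^{d}$.
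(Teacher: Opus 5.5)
Your proof is correct, and it takes a different, self-contained route from the paper's. The paper does not argue directly. It cites Chazal et al.\ (2010) for the fact that $\Sigma(\mathcal{M})$ is $\mathcal{H}^{d}$-null, then uses $\mathcal{H}^{d}=\lambda^{d}$ on $\mathbb{R}^{d}$ (Evans--Gariepy, Ch.~2.2) to conclude. You instead prove the null-set property from scratch. Two distinct nearest points $z_1\neq z_2$ produce distinct unit vectors $u_1\neq u_2$, and both would have to equal the gradient of $d_{\mathcal{M}}$. So $\Sigma(\mathcal{M})$ lies inside the non-differentiability set of a $1$-Lipschitz function, which is $\lambda^{d}$-null by Rademacher's theorem.

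Your route buys transparency and generality. It shows that only closedness of $\mathcal{M}$ is used; the $\mathcal{C}^{2}$ and positive-reach hypotheses play no role in this statement. It also avoids the detour through Hausdorff measure. The citation route is shorter, and the literature behind it gives stronger structure (e.g.\ countable $(d-1)$-rectifiability of the set of points with non-unique projection), but none of that is needed here.

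One minor simplification: you can drop the measurability caveat. Compactness of $\mathcal{M}$ makes $\Sigma(\mathcal{M})=\bigcup_{n\ge 1}\{x : \exists\, z_1,z_2\in\mathcal{M},\ \|z_1-z_2\|\ge 1/n,\ \|x-z_1\|=\|x-z_2\|=d_{\mathcal{M}}(x)\}$ a countable union of closed sets, hence Borel. Your appeal to completeness of Lebesgue measure is correct anyway.
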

\begin{proof}
    By \cite{chazal_boundary_2010}, $\Sigma(\mathcal{M})$ has $\mathcal{H}^{d}$-measure zero, and as $\mathcal{H}^{d}$ and $\lambda^{d}$ agree by Chapter 2.2 in \cite{evans_measure_2015}, $\lambda^{d}\left(\Sigma(\mathcal{M})\right) = 0$. 
\end{proof}

\begin{lemma}[Gaussian Tails of $p_{\sigma}$ Outside of $T_{r}(\mathcal{M})$]
\label{lemma:Gaussian_Tails}
    We have
    \begin{align*}
        \int_{\mathbb{R}^{d}\setminus T_{r}(\mathcal{M})} p_{\sigma}(x)dx \leq C_{1}e^{-C_{2} \frac{r^{2}}{\sigma^{2}}}, C_{1} = 2^{\frac{(d-m)}{2}}, C_{2} = \frac{1}{4}.
    \end{align*}
\end{lemma}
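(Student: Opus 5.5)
The plan is to reduce the statement to a tail bound for a $k$-dimensional Gaussian norm, with $k=d-m$, and then control that tail with a Chernoff bound on the moment generating function of a $\chi^2_k$ variable. Here $T_r(\mathcal{M})=\{x:\textup{dist}(x,\mathcal{M})<r\}$ is the tube of radius $r<\textup{reach}(\mathcal{M})$.

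First, rewrite the integral as a probability using the construction in Definition \ref{def:p_sigma}:
\begin{align*}
\int_{\mathbb{R}^{d}\setminus T_{r}(\mathcal{M})} p_{\sigma}(x)\,dx = \mathbb{P}\big(X\notin T_r(\mathcal{M})\big), \qquad X=Z+N .
\end{align*}
Since $Z\in\mathcal{M}$, we have $\textup{dist}(X,\mathcal{M})\le \|X-Z\|=\|N\|$. So the event $X\notin T_r(\mathcal{M})$ forces $\|N\|\ge r$. This step needs no reach or projection properties, only the triangle-type inequality for distance to a set. It therefore suffices to show $\mathbb{P}(\|N\|\ge r)\le 2^{k/2}e^{-r^2/(4\sigma^2)}$.

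Next, condition on $Z=z$. Given $z$, $N$ is a standard Gaussian of variance $\sigma^2$ on the $k$-dimensional subspace $N_z\mathcal{M}$. Choosing an orthonormal basis of $N_z\mathcal{M}$ gives $\|N\|^2/\sigma^2\sim\chi^2_k$, independently of $z$. Hence the conditional tail is the same for every $z$, and integrating over $p_0$ changes nothing; the bound is uniform in $z$. One measurability detail needs checking: a measurable choice of normal frame $z\mapsto$ basis. Because only $\|N\|$ enters, frame-independence makes this harmless.

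Finally, apply Markov's inequality to $e^{\lambda\|N\|^2/\sigma^2}$ with $\lambda=\tfrac14$:
\begin{align*}
\mathbb{P}(\|N\|\ge r)\le e^{-r^2/(4\sigma^2)}\,\mathbb{E}\big[e^{\chi^2_k/4}\big] = e^{-r^2/(4\sigma^2)}(1-2\cdot\tfrac14)^{-k/2} = 2^{k/2}e^{-r^2/(4\sigma^2)}.
\end{align*}
This is exactly the claimed bound with $C_1=2^{(d-m)/2}$ and $C_2=\tfrac14$. The only step needing care is the MGF computation, namely that $\lambda=\tfrac14<\tfrac12$ lies in its domain. I expect the main point of attention to be not analytic difficulty but confirming that $p_\sigma$ has a Lebesgue density, so that the integral notation is meaningful. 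If it does not, the statement should be read as the $p_\sigma$-measure of the complement of the tube, and the argument above works at the level of measures regardless.
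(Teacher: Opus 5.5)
Your proof is correct and follows essentially the same route as the paper. You rewrite the integral as $\mathbb{P}(X\notin T_r(\mathcal{M}))$, reduce to $\mathbb{P}(\|N\|\ge r)$ with $\|N\|^2/\sigma^2\sim\chi^2_k$, and apply Markov's inequality to $e^{U/4}$ using the $\chi^2_k$ moment generating function to obtain $2^{k/2}e^{-r^2/(4\sigma^2)}$. Your reduction via $\textup{dist}(X,\mathcal{M})\le\|N\|$ gives only the inclusion $\{X\notin T_r(\mathcal{M})\}\subseteq\{\|N\|\ge r\}$, and this is slightly cleaner than the paper's asserted set equality. That equality can fail when $\|N\|$ exceeds the reach: on the unit sphere, $n=-\frac{3}{2}z$ gives $z+n=-z/2\in T_{0.9}(\mathcal{M})$ although $\|n\|\ge 0.9$. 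The equality is in any case not needed for the upper bound.
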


\begin{proof}
    We have
    \begin{align*}
        \int_{\mathbb{R}^{d}\setminus T_{r}(\mathcal{M})} p_{\sigma}(x)dx = \mathbb{P}\left(X \in \mathbb{R}^{d} \setminus T_{r}(\mathcal{M})\right).
    \end{align*}
    Recall that the procedure for forming $p_{\sigma}$ yields $X = Z + N$ with $N$ conditional on $Z = z$ being sampled from $\mathcal{N}(0, \sigma^{2}I_{N_{z}\mathcal{M}})$, note that for any $(Z,N) = (z,n)$, we have 
    \begin{align*}
        z + n \in T_{r}(\mathcal{M}) \Leftrightarrow \|n\| < r.
    \end{align*}
    Thus, for any particular $X = Z + N$, we have
    \begin{align*}
        \{ X \not\in T_{r}(\mathcal{M})\} = \{ \|N\| \geq r\}.
    \end{align*}
    Therefore, the remainder of the proof amounts to simply identifying an upper bound on the tail probability of the $k$-dimensional Gaussian $N$. 

    It is known that the squared norm of $N$ follows a $\chi^{2}$ distribution with $k$ degrees of freedom:
    \begin{align*}
        \frac{\|N\|^{2}}{\sigma^{2}} \sim \chi_{k}^{2}.
    \end{align*}
    This gives
    \begin{align*}
        \mathbb{P}(\|N\| \geq r) = \mathbb{P} \left( \frac{\|N\|^{2}}{\sigma^{2}} \geq \frac{r^{2}}{\sigma^{2}} \right) = \mathbb{P} (U \geq u),
    \end{align*}
    in which $U \sim \chi_{k}^{2}$ and $u := \frac{r^{2}}{\sigma^{2}}$.

    The moment generating function for a random variable following a $\chi_{k}^{2}$ distribution is
    \begin{align*}
        \mathbb{E}\left [ e^{\lambda U} \right] = ( 1 - 2 \lambda )^{- \frac{k}{2}}.
    \end{align*}
    Let $\lambda = \frac{1}{4} \in (0, \frac{1}{2} )$. Then
    \begin{align*}
        ( 1 - 2 \lambda )^{- \frac{k}{2}} = \left(1 - \frac{1}{2}\right)^{- \frac{k}{2}} = 2^{\frac{k}{2}}
    \end{align*}
    and
    \begin{align*}
        \mathbb{P} (U \geq u) &= \mathbb{P} (e^{\lambda U} \geq e^{\lambda u})\\
        &\leq e^{-\lambda u} \mathbb{E}\left[ e^{\lambda U} \right] \textup{ by Markov's inequality}\\
        &\leq e^{-\frac{1}{4} u} 2^{\frac{k}{2}}\\
        &=2^{\frac{k}{2}}e^{-\frac{1}{4}\frac{r^{2}}{\sigma^{2}}}.
    \end{align*}
\end{proof}

\begin{proof}[Proof of Theorem \ref{thm:TVBound}]

Recall that $p_{0}$ is a density on $(\mathcal{M}, \mathcal{H}^{m})$, and that $p_{\sigma}$ is obtained by adding Gaussian noise in the normal bundle. We first construct an explicit coupling \cite{daskalakis2011probability, roch2023coupling} between $p_{0}\mathcal{H}^{m}$ and $\tilde{p}_{\sigma} \mathcal{H}^{m}$. 

Let $Z$ be a $\mathcal{M}$-supported random variable with law $p_{0} \mathcal{H}^{m}$, i.e. 

\begin{align*}
    \mathbb{P}(Z \in A) = \int_{A} p_{0}(z) d\mathcal{H}^{m}(z), A \subset \mathcal{M}.
\end{align*}

Conditionally on $Z = z$, draw Gaussian noise vector $N \in N_{z}\mathcal{M}$, $N \mid Z = z \sim \mathcal{N}(0, \sigma^{2} I_{N_{z}\mathcal{M}})$, and set

\begin{align*}
    X:= Z + N \in \mathbb{R}^{d}, Y:= \Pi(x) \in \mathcal{M}.
\end{align*}

By construction, the law of $Z$ is exactly $p_{0}\mathcal{H}^{m}$. Moreover, the law of $X$ is precisely the lifted distribution $p_{\sigma}(x)dx$: for any Borel set $B \subset \mathbb{R}^{d}$,

\begin{align*}
    \mathbb{P}(X \in B) = \int_{\mathcal{M}} \mathbb{P} (Z + N \in B \mid Z = z) p_{0}(z)d\mathcal{H}^{m}(z) = \int_{B} p_{\sigma}(x)dx.
\end{align*}

Since $Y = \Pi(x)$, its law is the pushforward of $p_{\sigma}(x)dx$ under $\Pi$, i.e. 

\begin{align*}
    \mathbb{P}(Y \in A) &= \mathbb{P}(X \in \Pi^{-1}(A))\\
    &= \int_{\Pi^{-1}(A)} p_{\sigma}(x) dx\\
    &= (\Pi_{\#}p_{\sigma})(A)\\
    &= \int_{A} \tilde{p}_{\sigma}(y) \mathcal{H}^{m}(y),
\end{align*}

so $Y$ has law $\tilde{p}_{\sigma} \mathcal{H}^{m}$,

Therefore, the joint law of $(Z,Y)$, is a coupling of the probability measures $p_{0}\mathcal{H}^{m}$ and $\tilde{p}_{\sigma}\mathcal{H}^{m}$ on $\mathcal{M}$.

It is standard \cite{daskalakis2011probability, roch2023coupling} that for any coupling $(U,V)$ of $\mu$ and $\nu$, 

\begin{align*}
    \| \mu - \nu \|_{\textup{TV}} \leq \mathbb{P}(U \neq V).
\end{align*}

In our setting,

\begin{align*}
    \textup{TV}(\tilde{p}_{\sigma}, p_{0}) = \| \tilde{p}_{\sigma}\mathcal{H}^{m} - p_{0}\mathcal{H}^{m} \|_{\textup{TV}} \leq \mathbb{P}(Z \neq Y).
\end{align*}

We now bound $\mathbb{P}(Z \neq Y)$ using the reach of $\mathcal{M}$. Fix any radius $ 0  < r < \textup{reach}(\mathcal{M})$ and recall that for $\mathcal{C}^{2}$ embedded manifolds, an equivalent characterization of the reach is that for all $z \in \mathcal{M}$ and all $n \in N_{z}\mathcal{M}$ with $\|n\| < r$, the point $x = z + n$ has $z$ as its unique nearest point on $\mathcal{M}$, hence $\Pi(z + n) = z$. In particular, if $\|N \| < r$ then $X = Z + N$ lies in the reach tube and 

\begin{align*}
    Y = \Pi(X) = \Pi(Z+N) = Z.
\end{align*}

Therefore,

\begin{align*}
    \{ Z \neq Y\} \subseteq \{ \|N\| \geq r \},
\end{align*}

and hence

\begin{align*}
    \mathbb{P} (Z \neq Y) \leq \mathbb{P}(\|N\| \geq r).
\end{align*}

By construction, $N$ is a $k$-dimensional Gaussian in the normal space with law $\mathcal{N}(0, \sigma^{2} I_{k})$. The event $\{ \|N\| \geq r\}$ is exactly the event that $X$ lies outside the tubular neighborhood of radius $r$. In particular,

\begin{align*}
    \mathbb{P}(\|N\| \geq r) = \int_{\mathbb{R}^{d} \setminus T_{r}(\mathcal{M})} p_{\sigma}(x)dx.
\end{align*}

Applying Lemma \ref{lemma:Gaussian_Tails} gives

\begin{equation}
\label{eq:bound}
    \mathbb{P}(\|N\| \geq r) \leq C_{1}e^{-C_{2} \frac{r^{2}}{\sigma^{2}}},
\end{equation}

with $C_{1} = 2^{\frac{(d-m)}{2}}$ and $C_{2} = \frac{1}{4}$.

Combining the above inequalities gives

\begin{align*}
    \textup{TV}_{\mathcal{M}}(\tilde{p}_{\sigma}, p_{0}) \leq C_{1}e^{-C_{2} \frac{r^{2}}{\sigma^{2}}}.
\end{align*}
    
\end{proof}

\begin{proof}[Proof of Corollary \ref{cor:local_reach_tube_bound}]
By definition of the pointwise reach, for any $n \in N_{z}\mathcal{M}$ with $\|n\| < \tau(z)$, we have $\Pi(z+n) = z$. Remark that the pointwise reach tune is larger than or equal to the global reach tube. Hence, 
\begin{align*}
    \textup{dist}\left(z, \Pi(z+n) \right) \mathbbm{1}_{\{\|N\| < \tau(z)\}} = 0.
\end{align*}
Therefore, we may define an expected pointwise displacement $D_{\sigma}(z)$ as
\begin{equation}
    D_{\sigma}(z) = \mathbb{E} \left [ \textup{dist}\left(z, \Pi(z+n) \right) \right] \leq 2 \mathbb{E}\left[ \|N\| \mathbbm{1}_{\{\|N\| \geq \tau(z)\}} \right ],
    \label{eq:local-reach-D}
\end{equation}
where the inequality follows from the triangle inequality and $z$ being a candidate point on $\mathcal{M}$:

\begin{align*}
    \textup{dist}\left(z, \Pi(z+n) \right) = \| \Pi(z) - \Pi(z+N) \| \leq \| z - (z+N) \| + \| (z+N) - \Pi(z+N) \| \leq 2 \|N\|.
\end{align*}

To bound the tail expectation, we again use the Gaussian tail estimate from Lemma \ref{lemma:Gaussian_Tails}. We have $\mathbb{E} \|N \|^{2} = \sigma^{2} k$, and by Cauchy-Schwarz,
\begin{equation}
\label{eq:CauchySchwartz}
    \mathbb{E} \left[ \|N\| \mathbbm{1}_{\{ \|N\| \geq \tau(z) \}} \right] \leq \left( \mathbb{E}\|N\|^2 \right)^{\frac{1}{2}} \mathbb{P}(\|N\|\geq \tau(z))^{\frac{1}{2}} \leq \sigma \sqrt{k}\mathbb{P}(\|N\|\geq \tau(z))^{\frac{1}{2}}.
\end{equation}
 By Equation \eqref{eq:bound}, applied with $r = \tau(z)$, Lemma \ref{lemma:Gaussian_Tails} gives

 \begin{align*}
     \mathbb{P}(\|N\|\geq \tau(z)) \leq C_{1}e^{-C_{2} \frac{\tau(z)^{2}}{\sigma^{2}}}.
 \end{align*}

Substituting this into Equation \eqref{eq:CauchySchwartz}, we obtain 

\begin{align*}
    \mathbb{E}\left[ \|N\| \mathbbm{1}_{\{ \|N\| \ge \tau(z)\}} \right] \leq \sigma \sqrt{k} \left[ C_{1}e^{-C_{2} \frac{\tau(z)^{2}}{\sigma^{2}}} \right]^{\frac{1}{2}}.
\end{align*}
Combining this with \eqref{eq:local-reach-D} yields the pointwise bound

\begin{align*}
    D_{\sigma}(z) \leq 2 \sigma \sqrt{k} \left[ C_{1}e^{-C_{2} \frac{\tau(z)^{2}}{\sigma^{2}}} \right]^{\frac{1}{2}}.
\end{align*}

To obtain the integrated inequality, note that the nearest-point projection is $L$-Lipschitz on the pointwise reach tube, and therefore a pointwise displacement $D_{\sigma}$ induces an $LD_{\sigma}(z)$ deviation in densities after pushforward along $\Pi$. Integrating over $\mathcal{M}$ gives

\begin{align*}
    \int_{\mathcal{M}} \left | \tilde{p}_{\sigma}(y) - p_{0}(y) \right| d \mathcal{H}^{m}(y) \leq 2 L \sigma C_{1} \int_{\mathcal{M}} e^{-C_{2} \frac{\tau(z)^{2}}{\sigma^{2}}} d \mathcal{H}^{m}(z).
\end{align*}
\end{proof}

\newpage
\section{Experimental Setup}
\label{app:experimentalsetup}

In this section, we provide more rigorous experimental details. 

For all tasks, we use an Adam optimizer with a learning rate of $10^{-4}$. Gradients are clipped at maximum norms of 1.0, as is common in diffusion model training. We also uniformly set $T = 250$ across all diffusion model approaches. All experiments are done on NVIDIA A100 GPUs. We use a simple linear noise schedule with variances increasing from $0.0001$ to $0.02$, as in \cite{ho_denoising_2020}. For all tasks, a fixed number of samples are generated, and invalid samples such as those with NaN/Inf values are discarded before evaluation metrics are computed. In general, discarded samples make up usually no more than 10 out of $\mathcal{O}(10^{4})$ samples, minimally affecting the reported results.

For PDM, the paper introducing the approach uses projections of scores. We project scores as approximated through the learned $\varepsilon$ outputs. For PIDM, we estimate intermediate samples during training as $\hat{x}_{0} = \mathbb{E}\left[ x_{0} | x_{t} \right]$ using standard DDPM sampling.

We encountered several challenges and roadblocks implementing and accurately reproducing the results in RSGMs \cite{bortoli_riemannian_2022} due to out-of-date package versions and irreproducible environments. Reproducibility of the RSGM code is a documented issue. A concerted, good-faith effort was made to accurately implement the RSGM code, but ultimately yielded poor results with incompatibilities that would have been a perhaps unfair comparison. In any case, one can infer from the RSGM paper how many nontrivial computations are involved in both training and sampling. This may certainly be desirable for mathematical reasons, but can be non-trivially computationally prohibitive, as is noted by the RSGM authors themselves: a large part of the RSGM paper is focused on efficient techniques for reducing the computational cost of the many manifold-aware steps involved during both training and sampling. Due to these known reproducibility challenges and the unfairness of a comparison against a tailored intrinsic technique with high computational needs, we elect not to include it, or Riemannian Continuous Normalizing Flows \cite{huang_riemannian_2022}, as one of the comparison techniques in this work. 

For the time embedding, we use concatenation of input samples with a predetermined time embedding. We use a simple normalization of the input time for the simpler tasks: in other words, the timestep $t$ is normalized to be between $[0,1]$ by taking $t_{\textup{normalized}} = \frac{t}{T}$, and concatenating the noised sample $x_{t}$ with $t_{\textup{normalized}}$ upon input to the diffusion model.

For each task, unless otherwise noted, the model parameterization (diffusion model or NF) and training-related hyperparameters are kept constant across all techniques to ensure a fair comparison. All NF approaches are run with learning rates of $10^{-4}$, batch sizes of 128, and a total number of 40 epochs during training. We use hidden dimensions of 64 for Glow along with the default architecture and [64, 64, 64] for the RealNVP architecture. For better insight into the neural network parameterized models used in each approach, we also report total parameter counts for each technique.

For the generation of uncertainty bands in Figures \ref{fig:SpherePlaneCombinedMetrics} and \ref{fig:ComplexCombinedMetrics}, we use 10 sampling trials per $\sigma$ for all experiments. In Figure \ref{fig:MNIST_Combined_Metrics}, we use 3 trials per $(\sigma, \# \textup{training samples})$ combination.

We note that the intrinsic metrics for the sphere task in Table \ref{tab:PlaneSphereMetrics} are computed with a Lambert azimuthal equal-area projection centered at the unit normal pole facing the top of the sphere. 

For the mesh task, outliers with norm greater than 2.0 (only present for PIDM and DDPM) were filtered out for histogrammed metrics and visualization. The Gaussian is approximated through heat kernel sampling from a mean node along the geodesics induced by the edges of the shape mesh. The mesh (and thus points on the mesh) are centered at the origin and normalized in an aspect-ratio preserving way such that the maximum distance of a mesh point from the origin is 1.0.

For the protein task, we present results for the PIDM task trained on 300 epochs instead of 1000 epochs for the other techniques due to computational reasons.

\begin{table}[H]
\centering
\small
\setlength{\tabcolsep}{4pt}
\renewcommand{\arraystretch}{1.1}
\begin{tabular}{lccccc}
\toprule
& \textbf{Plane} & \textbf{Sphere} & \textbf{Mesh} & \textbf{Images} & \textbf{Protein} \\
\midrule
Time embedding & Normalized & Normalized & Normalized & MLP & MLP \\
Time emb.\ dim. & 1 & 1 & 1 & 64 & 16 \\
\# Training samples & 100{,}000 & 100{,}000 & 100{,}000 & 10{,}000 & 20{,}000 \\
Batch size & 64 & 64 & 64 & 32 & 32 \\
Epochs & 200 & 200 & 200 & 1000 & 1000 \\
Hidden dimension & 64 & 64 & 128 & 1024 & 1024 \\
\bottomrule
\end{tabular}
\caption{Combined diffusion model training configurations for all tasks.}
\end{table}

\begin{table}[H]
\centering
\small
\setlength{\tabcolsep}{4pt}
\renewcommand{\arraystretch}{1.15}
\begin{tabular}{lccccc}
\toprule
\textbf{\# Parameters} & \textbf{Plane} & \textbf{Sphere} & \textbf{Mesh} & \textbf{Images} & \textbf{Protein} \\
\midrule
RealNVP & 42{,}738 & 42{,}738 & -- & -- & -- \\
Glow & 53{,}010 & 53{,}010 & -- & -- & -- \\
DDPM & 4{,}675 & 4{,}675 & 17{,}539 & 1{,}088{,}769 & 112{,}353 \\
\bottomrule
\end{tabular}
\caption{Total number of trainable parameters for each model across all tasks.}
\end{table}

\newpage
\section{Evaluation Metrics}
\label{app:metrics}

\subsection{Point Cloud Evaluation Metrics}

We present definitions of all metrics used in Section \ref{sec:NumericalExperiments}.

\begin{definition}[Coverage]
    Let $\{ x_{i} \}_{i=1}^{n}$ denote the set of real samples and $\{y_{j}^{n}\}_{j=1}^{m}$ denote the set of generated samples. For each $x_{i}$, let $r_{i}$ be the distance to its $k$-th nearest neighbor among real samples $\{x_{l}\}_{l=1}^{n}$ and compute the distance to the nearest $y_{j}$:
    \begin{align*}
        d_{i} = \min_{1 \leq j \leq m} \|x_{i} - y_{j}\|_{2}.
    \end{align*}
    We say that a real sample $x_{j}$ is \emph{covered} if $d_{i} \leq r_{i}$. Then, we define the coverage metric as the fraction of real samples that are covered:
    \begin{align*}
        \textup{COV}(X, Y) = \frac{1}{n} \sum_{i=1}^{n} \mathbbm{1}_{d_{i} \leq r_{i}}.
    \end{align*}
\end{definition}

For the empirical JSD and TVD metrics, we bin samples into histograms, where the number of bins in each axis is fixed for each approach. For the plane and sphere tasks, we use 25 bins in each direction. For the mesh task, we use 50 bins.

\subsection{Image Evaluation Metrics}

Successfully evaluating distributions of structured images in such a way that avoids the pitfalls of high dimensionality and respects invariances is a known challenge \cite{jayasumana_rethinking_2024, benny_evaluation_2021}. To combat this, many works in image generation employ embedding-based distances, wherein a secondary classification model is trained (or a benchmark model is reloaded) and latent embeddings are extracted from intermediate layers to be used as lower-dimensional embeddings of the input data samples. The Frechét Inception Distance (FID) score \cite{heusel_gans_2017} is one example of this, where distances between image distributions are evaluated by studying statistics of the embeddings from the popular Inception-V3 architecture. The idea is that a well-trained model will have learned embeddings which capture image invariances and essential structure, enabling the statistics of such embeddings to act as useful proxies for the data distribution.

For the MNIST benchmark dataset, the issue of image classification is widely considered to be a solved problem in modern machine learning. To that end, we train a straightforward LeNet-style classifier \cite{lecun_gradient-based_1998} with 50,190 learnable parameters using a 90\%/10\% training/validation split over the 60K MNIST training dataset images. After 10 epochs, the model with the best validation loss achieves 98.93\% accuracy on the heldout testing dataset of 10K images. The penultimate embeddings are used to compute an FID-like embedding score as follows:

\begin{definition}
    Define $f_{\varphi}: \mathbb{R}^{28 \times 28}$ as the embedding map, where we let $d = 84$ and define the embedding as the latent vector of the neural network classifier at the penultimate layer. Let $\{x_{i}\}_{i=1}^{N} \sim P$ denote true samples and $\{y_{j}\}_{j=1}^{M} \sim Q$ denote generated samples. Compute their embeddings $z_{i} := f_{\varphi}(x_{i})$ and $w_{j} := f_{\varphi}(y_{j})$ and obtain empirical means and covariances of each collection of embedded samples as $(\hat{\mu}_{P}, \hat{\Sigma}_{P})$ for the true samples and $(\hat{\mu}_{Q}, \hat{\Sigma}_{Q})$ for the generated samples. 

    From these empirical means and covariances, we compute an FID-like score by computing the Fréchet distance between the empirical Gaussians:
    \begin{align}
        \textup{FID}_{f}(P,Q) := \|\hat{\mu}_{P} - \hat{\mu}_{Q} \|_{2}^{2} + \textup{Tr}\left( \hat{\Sigma}_{P} - \hat{\Sigma}_{Q} - 2 \left( \hat{\Sigma}_{P}\hat{\Sigma}_{Q}\right)^{\frac{1}{2}} \right).
    \end{align}
    The metrics we report in Section \ref{sec:MNIST} are using the $\textup{FID}_{f}$ score between the heldout MNIST test set and 1,000 generated samples.
\end{definition}

\begin{figure}[H]
    \centering
    \begin{subfigure}[t]{0.32\textwidth}
        \centering
        \includegraphics[width=\linewidth]{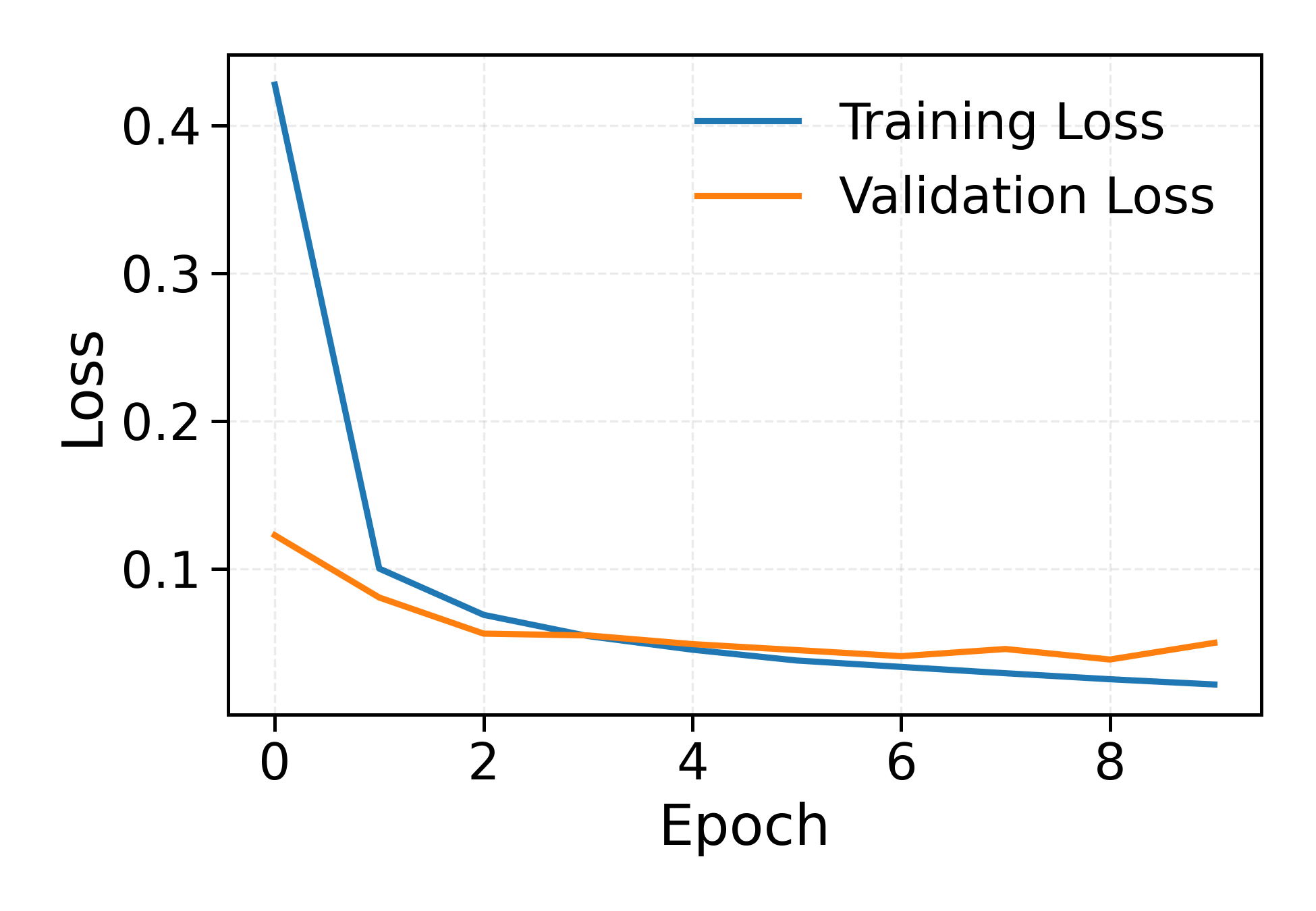}
        \caption{Training and validation losses.}
    \end{subfigure}
    \begin{subfigure}[t]{0.32\textwidth}
        \centering
        \includegraphics[width=\linewidth]{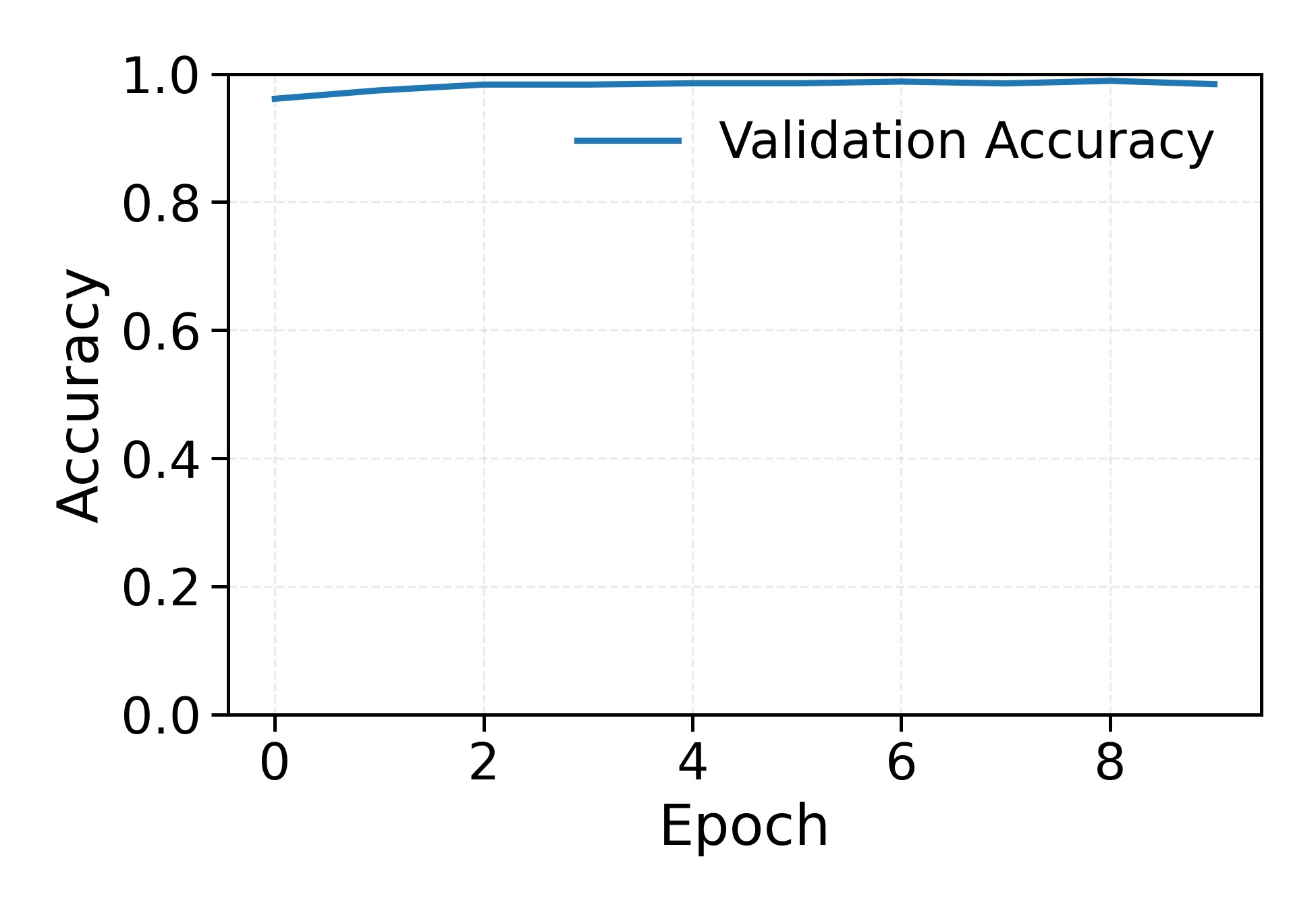}
        \caption{Training accuracy.}
    \end{subfigure}
    \caption{Training and validation losses and accuracy for MNIST classifier.}
    \label{fig:MNIST_Classifier_Training_Validation_Losses}
\end{figure}


An advantage of the image generation task under fixed total flux case is its easy visual interpretability. We present several samples from each generated image distribution in Appendix \ref{app:examples} to be analyzed in conjunction with the learned embedding metrics. 

The standard MNIST dataset also provides balanced classes, with roughly 10\% of all data samples allocated to each digit class. Therefore, one would naturally expect that a well-trained generative model would also have similar class distribution. Again, we utilize the above trained classification model to identify class labels for all generated samples from each generated distribution, and interpret the class distributions of the generated samples against those of the underlying distribution as a measure of distributional accuracy.

\begin{definition}[Class JSD]

Define the trained MNIST classifier as $g : \mathbb{R}^{784} \to \{0, 1, 2, 3, 4, 5, 6, 7, 8, 9 \}$. Compute predicted class labels for a set of generated samples $\{x_{i}\}_{i=1}^{N}$ as $\hat{y}_{i} = g(x_{i})$. This forms an empirical class distribution $\hat{p}_{k}^{\textup{emp. class}} := \frac{1}{N} \sum_{i=1}^{N} \mathbbm{1}_{\{\hat{y}_{i} = k\}}$ for $k \in \{0, 1, 2, 3, 4, 5, 6, 7, 8, 9 \}$. Let the class distribution of the data be $p^{\textup{data class}}$. Then, we report the \emph{class JSD} as the JSD between $\hat{p}^{\textup{emp. class}}$ and $p^{\textup{data class}}$.
\end{definition}

We note, however, that this evaluation metric is inherently hindered by the classification model's inability to determine out-of-distribution samples, as illegible generated samples will still receive a class label. We again recommend the reader to look at these metrics in conjunction with the image examples in Appendix \ref{app:examples}. 

\subsection{Protein Evaluation Metrics}

Each protein sample has dimensionality $x \in \mathbb{R}^{90}$. As in the image case, this dimensionality hinders the effective usage of histogramming-based evaluation metrics. We present alternative metrics intended to measure the diversity 

\begin{definition}[Pairwise RMSD]
    Denote $\{x_{i}\}_{i}^{n}$ as a set of generated protein backbone samples, for which each sample $x_{i} = \textup{vec}(X_{i})$ where $X_{i} \in \mathbb{R}^{3N}$ contains the coordinates of $N$ atoms, and pick any pair $(i,j)$ with $i \neq j$. Let the root-mean-square deviation (RMSD) between samples $x_{i}$ and $x_{j}$ after optimal rigid alignment be
    \begin{equation}
        \textup{RMSD}(x_{i}, x_{j}) := \min_{R \in \textup{SO}(3), K \in \mathbb{R}^{3}} \left( \frac{1}{N} \sum_{n = 1}^{N} \left \| X_{i}^{n} - (RX_{j}^{(n)} + K) \right \|_{2}^{2} \right),
    \end{equation}
    where $R$ and $K$ are a rotation and a translation, respectively, which are obtained using the Kabsch algorithm. We define the \emph{pairwise RMSD distribution} of the set of generated samples as the collection of all RMSD values over all possible unordered pairs. For the pairwise RMSD metrics presented in Section \ref{sec:protein}, we report the median over the pairwise RMSD distribution. Higher values suggest greater diversity of structures within the collection of generated samples. Lower values suggest possible mode collapse.
\end{definition}

\begin{definition}[Maximum Mean Discrepancy]
    We use the Maximum Mean Discrepancy (MMD) metric, which compares distributions using kernel mean embeddings in a Reproducing Kernel Hilbert Space (RKHS). We present the squared MMD with a Radial Basis Function (RBF) kernel:
    \begin{align*}
        \hat{\textup{MMD}} = \frac{1}{N^{2}} \sum_{i,i'} k(x_{i}, x_{i'}) + \frac{1}{M^{2}} \sum_{j,j'} k(y_{j}, y_{j'}) - \frac{2}{NM} \sum_{i,j} k(x_{i}, y_{j}).
    \end{align*}
\end{definition}

In our results, we use a bandwidth of 1.0.

\newpage
\section{Sampling Stability}
\label{app:sampling_stability}

In this section, we present the same results as in Figure \ref{fig:PlaneSphereSamplingStability} for the remaining tasks (mesh, image, and protein backbone). This empirically demonstrates the known issue of score explosion for diffusion model sampling on manifolds \cite{bortoli_convergence_2023} and log-determinant explosion for normalizing flows and demonstrates how our proposed approach combats these issues. We remark that samples leading to NaN outputs were removed from tracking. However, they are included in the following section regarding the number of observed NaN/Inf samples associated with each method.

In particular, we frequently observe a continuous reduction in the median magnitude of scores as $\sigma$ increases.\footnote{We do not claim that larger scores are inherently ``bad." However, larger scores run the risk of numerical instability or  overflow on low-precision training or sampling, the latter of which is growing increasingly common in generative AI applications. Therefore, obtaining competitive performance with bounded scores (as is offered by our proposed $p_{\sigma}$ approach) is highly desirable.} This is expected, as $\sigma$ increasing means that the nonzero region of $p_{\sigma}$ occupies an increasingly larger space in $\mathbb{R}^{d}$. We use median aggregation of scores instead of average aggregation of scores in order to avoid outliers biasing the results.

\begin{figure}[H]
        \centering
        \includegraphics[width=0.3\textwidth]{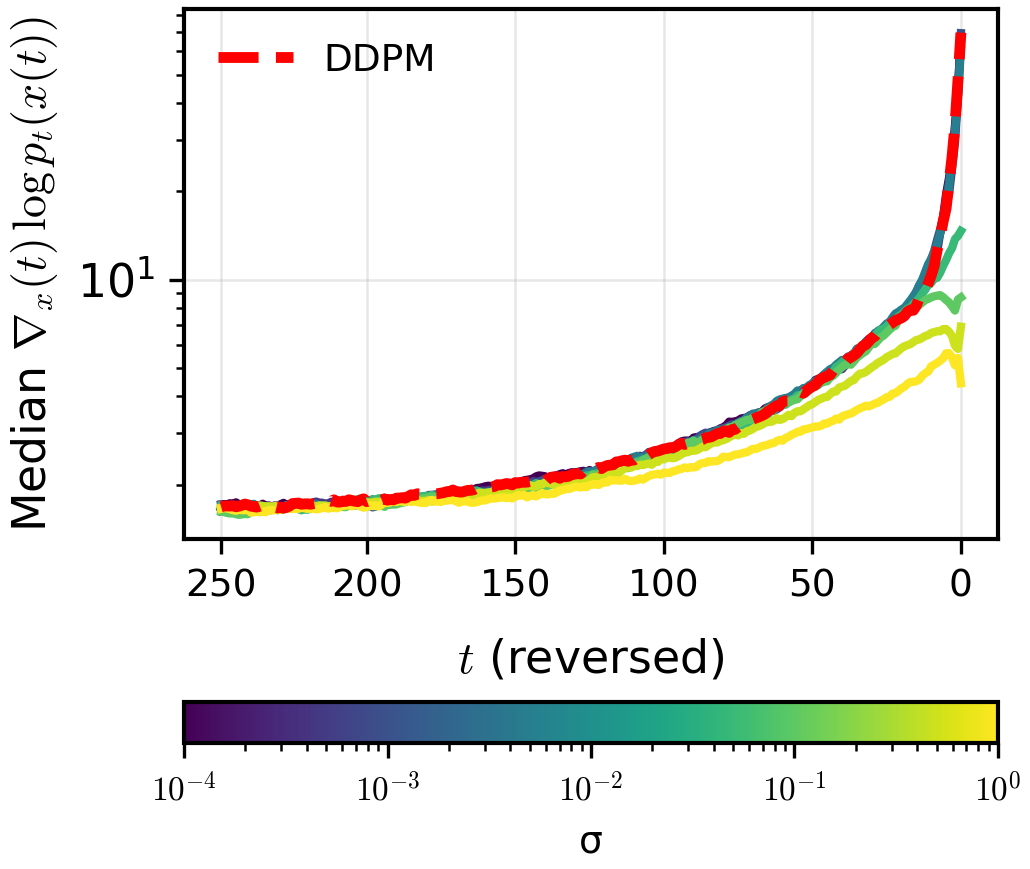}
        \caption{Mesh}
    \label{fig:SamplingStability_Mesh}
\end{figure}

\begin{figure}[H]
    \centering
        \includegraphics[width=0.3\textwidth]{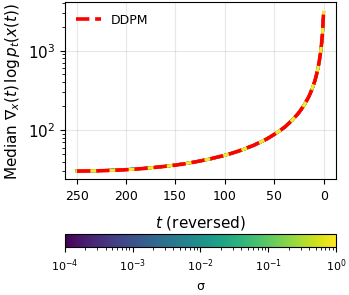}
        \caption{Image}
    \label{fig:SamplingStability_Image}
\end{figure}

\begin{figure}[H]
    \centering
        \includegraphics[width=0.3\textwidth]{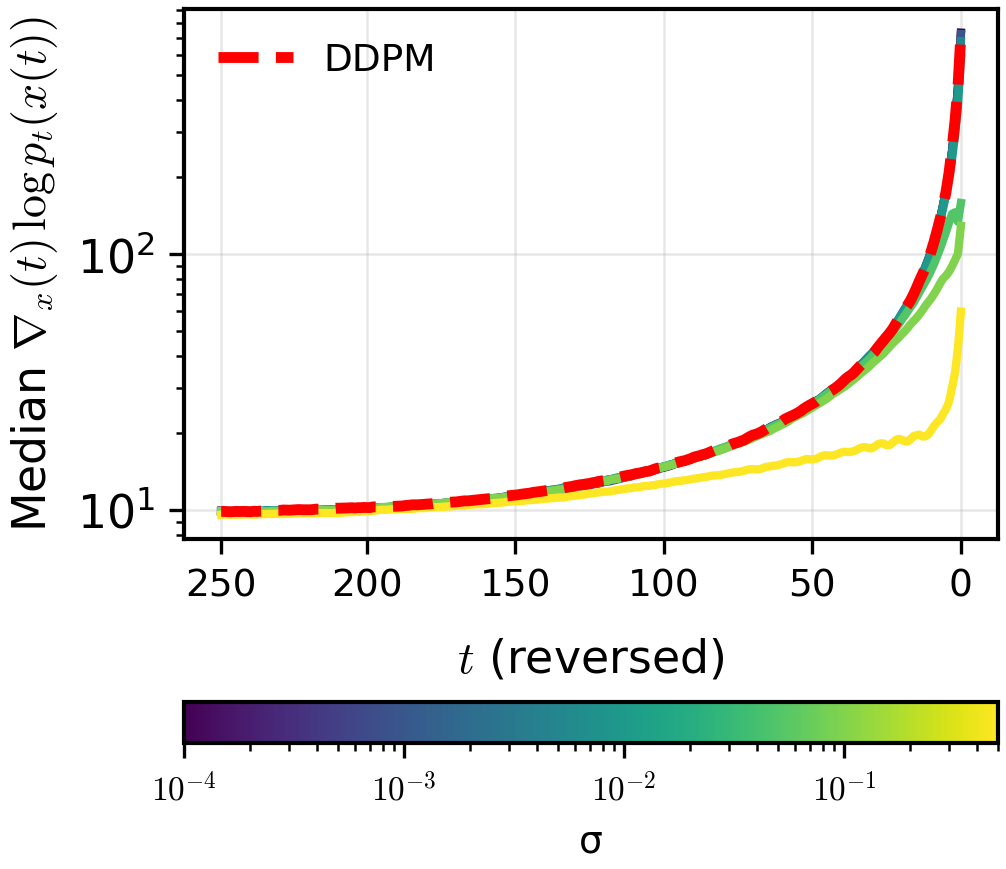}
        \caption{Protein}
    \label{fig:SamplingStability_Protein}
\end{figure}

\newpage
\section{Manifold Implementations}
\label{app:manifold_implementations}
\subsection{Plane Constraint}

For the plane task, the manifold is simply an affine subspace
$$
\mathcal{M}_{\textup{plane}} = \{ x \in \mathbb{R}^{3} : A x = b \},
$$
where $A \in \mathbb{R}^{1 \times 3}$ is a row vector normal to the plane and $b \in \mathbb{R}$ is an offset. 
The nearest--point projection map $\Pi(x)$ has a closed form:

\begin{algorithm}
    \caption{Nearest-Point Projection onto $\mathcal{M}_{\textup{plane}}$}
    \begin{algorithmic}[1]
        \REQUIRE Query point $x \in \mathbb{R}^{3}$, row vector $A$, offset $b$
        \STATE Compute projection matrix $P = A^{\top} (AA^{\top})^{-1}A$
        \STATE Compute offset $o = A^{\top}(AA^{\top})^{-1}b$\\
        \STATE \textbf{Return:} $\Pi(x) = (I-P)x+o$
    \end{algorithmic}
\end{algorithm}

This expression is exact and requires only a pseudoinverse of $AA^{\top}$. 

\subsection{Sphere Constraint}

For the sphere task, the manifold is a quadratic surface
$$
\mathcal{M}_{\text{sphere}} = \{ x \in \mathbb{R}^{3} : \|x - c\|_{2} = r \},
$$
where $c \in \mathbb{R}^{3}$ is the center and $r > 0$ the radius. 
The projection map $\Pi(x)$ rescales the radial direction:

\begin{algorithm}
    \caption{Nearest-Point Projection onto $\mathcal{M}_{\textup{plane}}$}
    \begin{algorithmic}[1]
        \REQUIRE Query point $x \in \mathbb{R}^{3}$, sphere center $c$, radius $r$
        \STATE Compute direction $d = x - c$
        \STATE Normalize direction $u = \frac{d}{\|d\|_{2}}$
        \STATE \textbf{Return:} $\Pi(x) = c + ru$
    \end{algorithmic}
\end{algorithm}

This projection is also exact, as each point is simply snapped radially onto the sphere surface.

\subsection{Mesh}

Define the manifold as

\begin{align*}
    \mathcal{M}_{\textup{mesh}} = \{ x \in \mathbb{R}^{3} : h(x) = 0 \},
\end{align*}

where $h(x)$ is the level set defining the surface of the mesh. We implement an approximate nearest-point projection map $\Pi(x)$:

\begin{algorithm}[H]
    \caption{Nearest-Point Projection onto $\mathcal{M}_{\textup{mesh}}$}
    \begin{algorithmic}[1]
        \REQUIRE Query point $x \in \mathbb{R}^{3}$, mesh $\mathcal{T}$ with vertices $V$ and faces $F$
        \STATE Compute $k$ nearest vertices $v_{1},..., v_{k} \in V$
        \STATE Gather incident faces for each $v_{i}$
        \STATE Rank candidate faces by distance from $x$ to their centroids
        \STATE keep closest $m$ faces
        \FOR{each candidate triangle $\tau_{j}$ with vertices $(a,b,c)$}
        \STATE Compute closest point $y_{j}$ on $\tau_{j}$ using an exact point-triangle projection
        \STATE Record $d_{j}^{2} = \|x-y_{j}\|^{2}$
        \ENDFOR
        \STATE Let $j^{*} = \arg \min_{j} d_{j}^{2}$
        \STATE \textbf{Return:} $\Pi(x) = y_{j^{*}}$
    \end{algorithmic}
\end{algorithm}

This procedure guarantees that $\Pi(x)$ lies on $\mathcal{M}_{\textup{mesh}}$. The parameters $k$ and $m$ trade off accuracy and cost: larger values increase the candidate pool size and robustness, while smaller values reduce the computational needs. In all of our experiments, we use $k = 4$ and $m = 32$.

\subsection{Images}

We construct a variant of the MNIST dataset constrained by total image flux by embedding each image into the manifold

\begin{align*}
    \mathcal{M}_{\textup{flux}} = \{ x \in \mathbb{R}^{784} : \mathbf{1}^{\top}x = s \},
\end{align*}

where $\mathbf{1} \in \mathbb{R}^{784}$ is a vector of all ones and $s \in \mathbb{R}$ is the total pixel sum. The constraint Jacobian may be written as 

\begin{align*}
    J_{g}(x) = \nabla h(x)^{\top} = \mathbf{1}^{\top} \in \mathbb{R}^{1 \times 784}, h(x) = \mathbf{1}^{\top}x - s,
\end{align*}
constructing tangent and normal spaces
\begin{align*}
    T{x}\mathcal{M} = \{v \in \mathbb{R}^{784} : \mathbf{1}^{\top} v = 0 \}, N_{x}\mathcal{M} = \textup{span}\{ \mathbf{1} \},
\end{align*}

with the unit normal vector $\hat{n} = \frac{1}{\sqrt{784}} = \frac{1}{28}$. The orthogonal projector onto $T_{x}\mathcal{M}$ may be written as 

\begin{align*}
    P_{T} = I - \frac{1}{784} \mathbf{1} \mathbf{1}^{\top}
\end{align*}

and the orthogonal projector onto $N_{x}\mathcal{M}$ is

\begin{align*}
    P_{N} = \frac{1}{784} \mathbf{1}\mathbf{1}^{\top}.
\end{align*}

Samples from $p_{\sigma}$ may be formed by perturbing each sample from $p_{0}$ with Gaussian noise along the normal direction:

\begin{align*}
    x_{\sigma}^{i} = x_{0}^{i} + \frac{\varepsilon}{28}\mathbf{1}.
\end{align*}

We may write the nearest-point projection onto $\mathcal{M}_{\textup{flux}}$ as

\begin{align*}
    \Pi(x) = \arg \min_{y \in \mathcal{M}} \frac{1}{2} \|y - x\|^{2} = x + \frac{s - \sum_{i=1}^{784}x[i]}{784}\mathbf{1} = \left( I - \frac{1}{784}\mathbf{1}\mathbf{1}^{\top} \right)x + \frac{s}{784}\mathbf{1}.
\end{align*}

\subsection{Protein Generation Problem}

Let $x_{i,a} \in \mathbb{R}^{3}$ denote the coordinates of atom $a \in \{ \textup{N}, \textup{CA}, \textup{C} \}$ in residue $i$. 

We impose two types of local geometric constraints for the purposes of this task: bond length constraints and bond angle constraints. 

Let $\mathcal{E}_{\textup{len}}$ and $\mathcal{E}_{\textup{ang}}$ denote the index sets for bond length (pairs of atoms) and bond angle (triplets of atoms) constraints, respectively. For each $e = ((i,a), (j,b)) \in \mathcal{E}_{\textup{len}}$ construct the bond length residual as
\begin{align*}
    r_{e}(x) = \| x_{i,a} - x_{j,b} \|_{2} - \ell_{ab},
\end{align*}
where $\ell_{ab}$ is the bond length constraint for the atoms indexed by $a$ and $b$. For each $e = ((i,a), (j,b), (k,c)) \in \mathcal{E}_{\textup{ang}}$, construct the bond angle residual as

\begin{align*}
    r_{e}(x) = \cos \theta_{e}(x) - \cos \theta^{*}_{e}, \cos \theta_{e}(x) = \frac{(x_{i,a} - x_{j,b}) \cdot (x_{k,c} - x_{j,b})}{\| x_{i,a} - x_{j,b}\|_{2} \| x_{k,c} - x_{j,b}\|_{2}}.
\end{align*}

Collecting all such bond length and bond angle residuals gives 

\begin{align*}
    c(x) = (r_{e}(x))_{e \in \mathcal{E}_{\textup{len}}\cup\mathcal{E}_{\textup{ang}}}.
\end{align*}

The protein backbone constraint manifold is thus the zero level set

\begin{align*}
    \mathcal{M}_{\textup{protein}} = \{ x \in \mathbb{R}^{D} : c(x) = 0\}.
\end{align*}

To project samples to $\mathcal{M}_{\textup{protein}}$, we use a Gauss-Newton optimization algorithm with a maximum number of 5 iterations, a step size of 1.0, a damping factor of $10^{-4}$, and a tolerance maximum residual of $10^{-4}$.

\FloatBarrier
\clearpage
\section{Images with Total Flux Samples}
\label{app:examples}

We present additional sample images from each generated distribution at various training data sample set sizes and $\sigma$ to provide improved qualitative insight into the sample quality from each learned distribution, complementing the quantitative metrics presented in Section \ref{sec:MNIST}.

\begin{figure}[H]
    \centering

    \begin{subfigure}{0.25\linewidth}
        \centering
        \includegraphics[width=\linewidth]{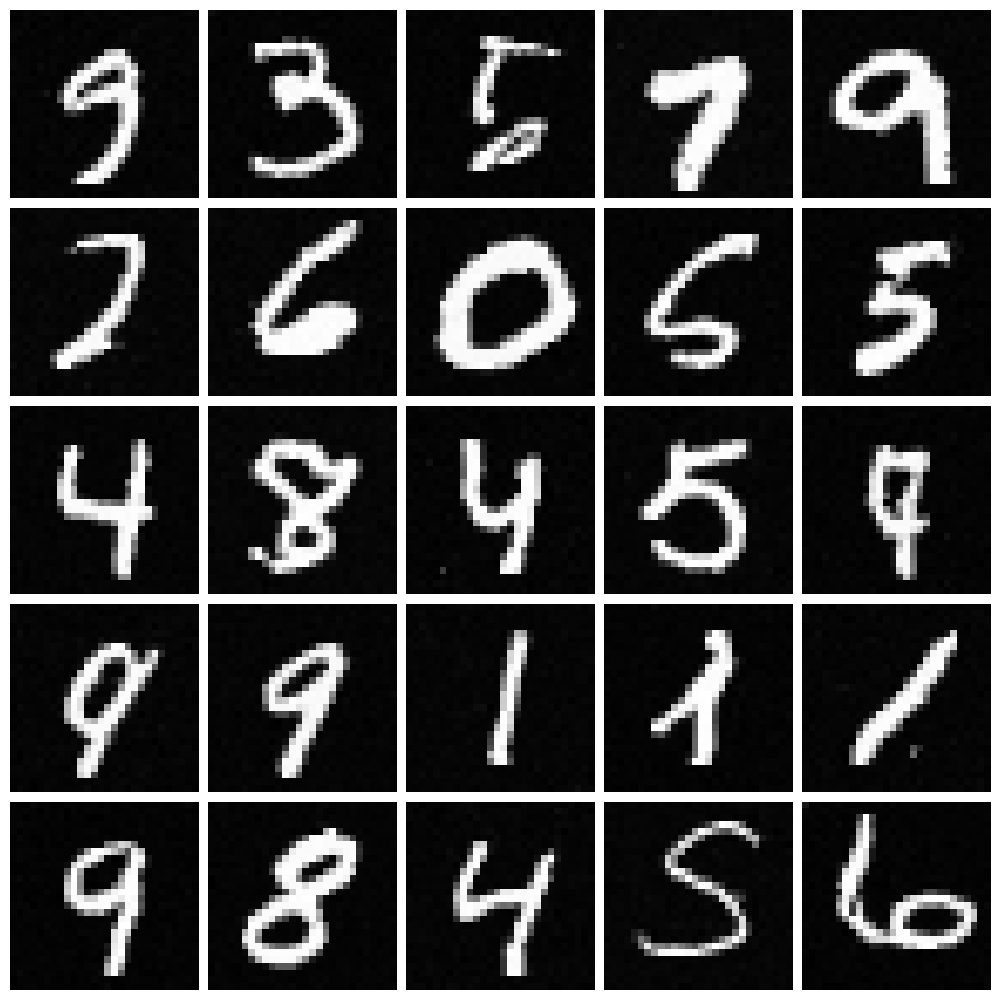}
        \caption{$\sigma = 0.0001$}
    \end{subfigure}
    \hfill
    \begin{subfigure}{0.25\linewidth}
        \centering
        \includegraphics[width=\linewidth]{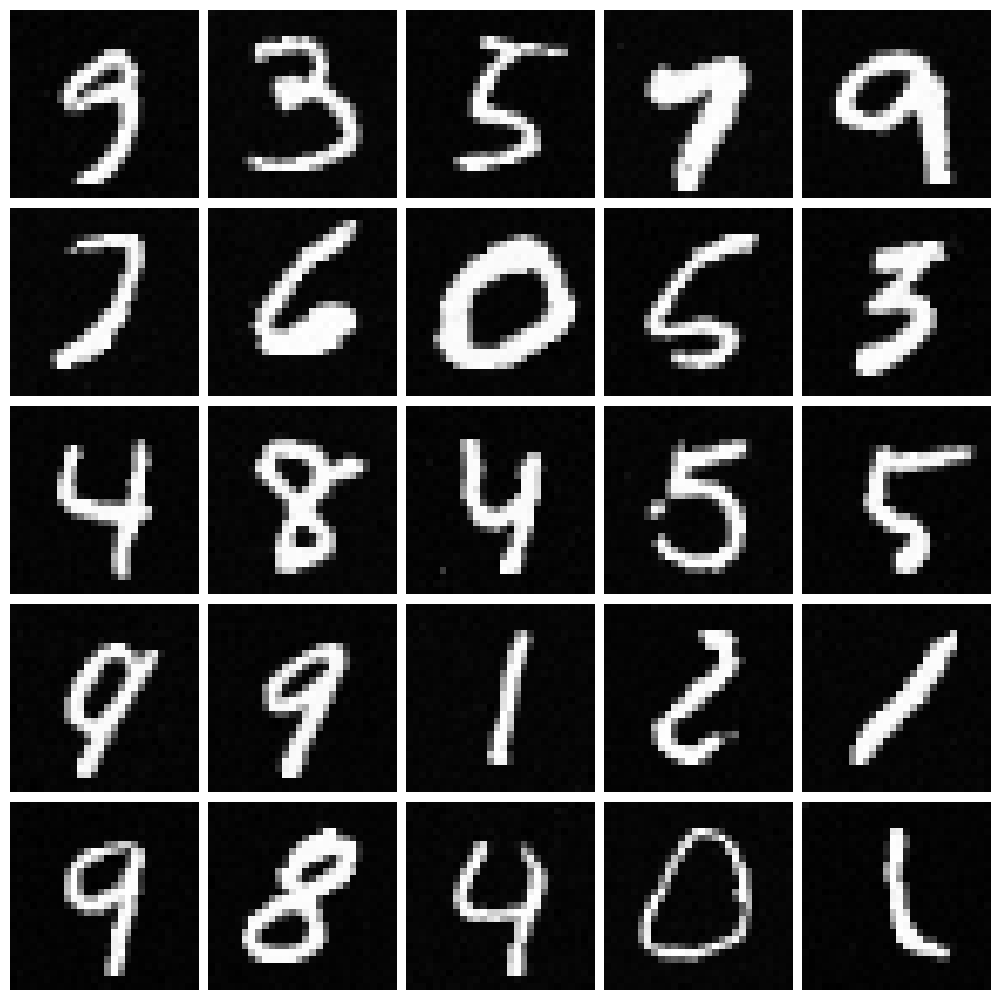}
        \caption{$\sigma = 0.0005$}
    \end{subfigure}
    \hfill
    \begin{subfigure}{0.25\linewidth}
        \centering
        \includegraphics[width=\linewidth]{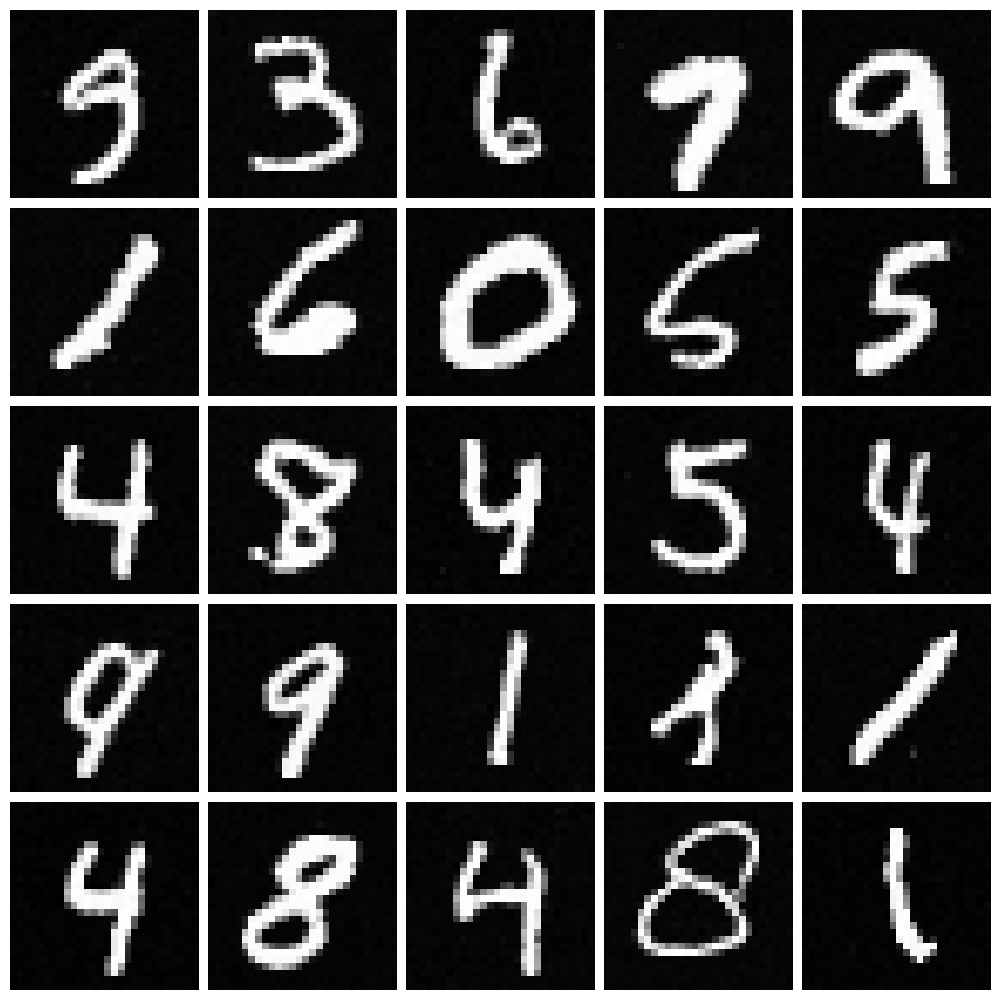}
        \caption{$\sigma = 0.001$}
    \end{subfigure}

    \begin{subfigure}{0.25\linewidth}
        \centering
        \includegraphics[width=\linewidth]{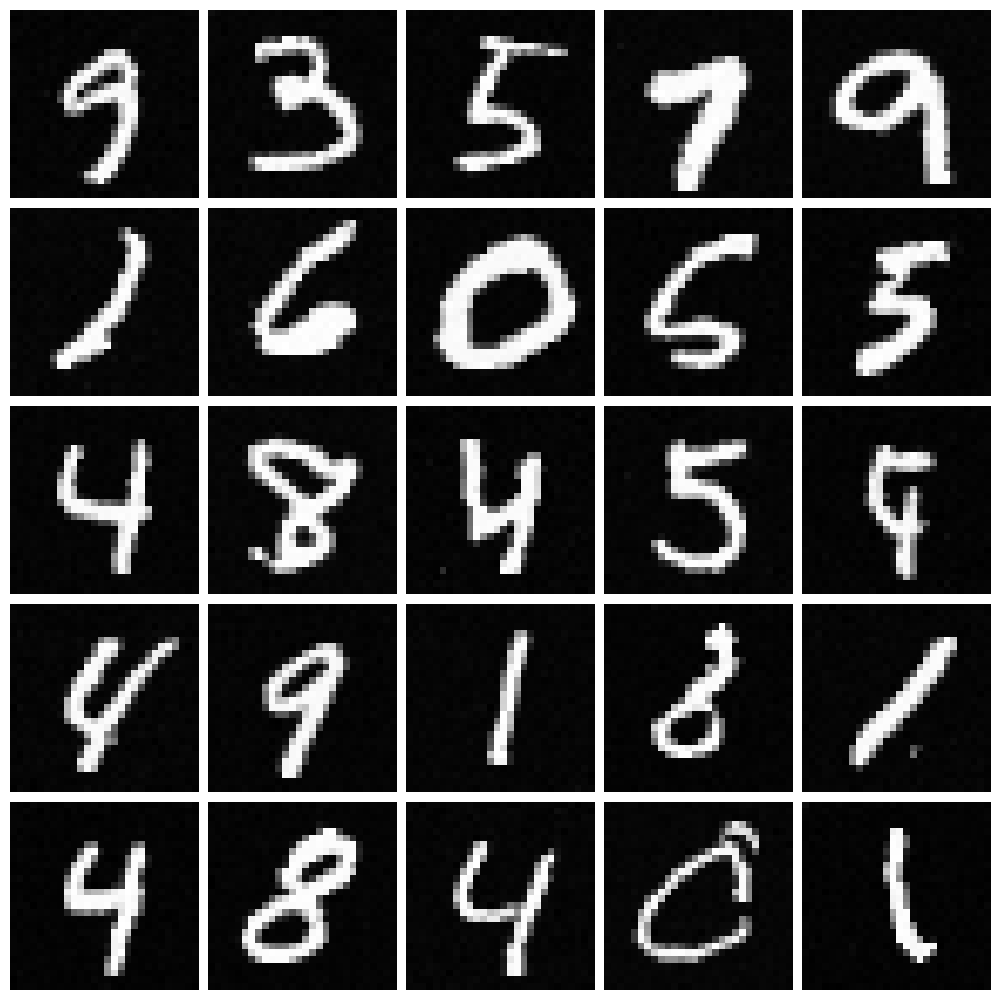}
        \caption{$\sigma = 0.005$}
    \end{subfigure}
    \hfill
    \begin{subfigure}{0.25\linewidth}
        \centering
        \includegraphics[width=\linewidth]{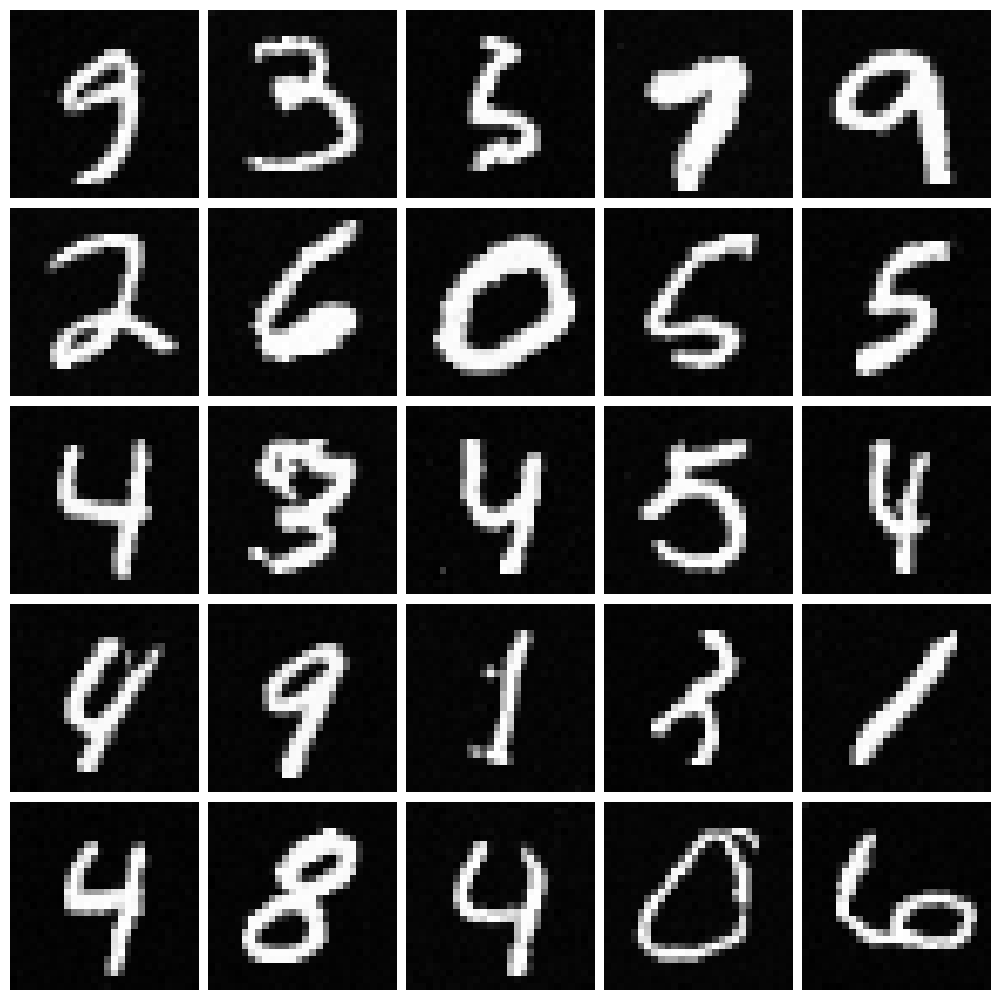}
        \caption{$\sigma = 0.01$}
    \end{subfigure}
    \hfill
    \begin{subfigure}{0.25\linewidth}
        \centering
        \includegraphics[width=\linewidth]{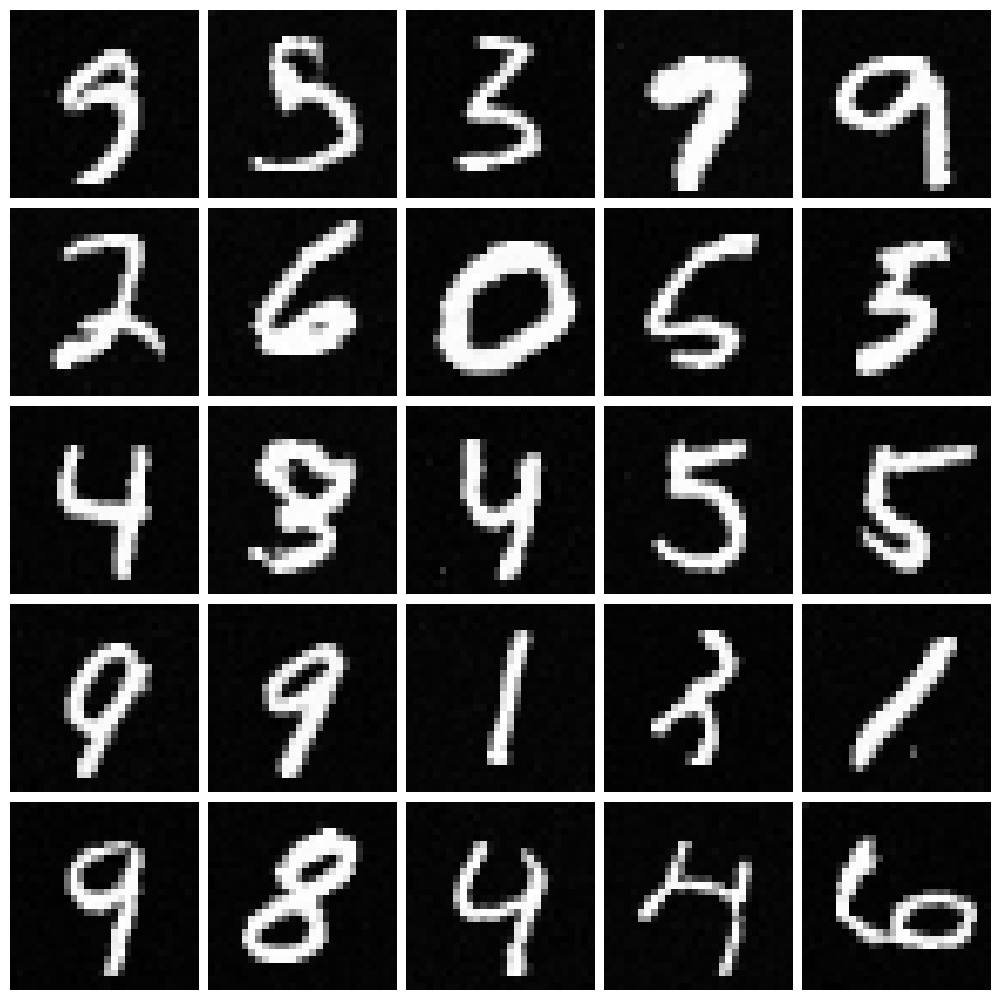}
        \caption{$\sigma = 0.05$}
    \end{subfigure}

    \begin{subfigure}{0.25\linewidth}
        \centering
        \includegraphics[width=\linewidth]{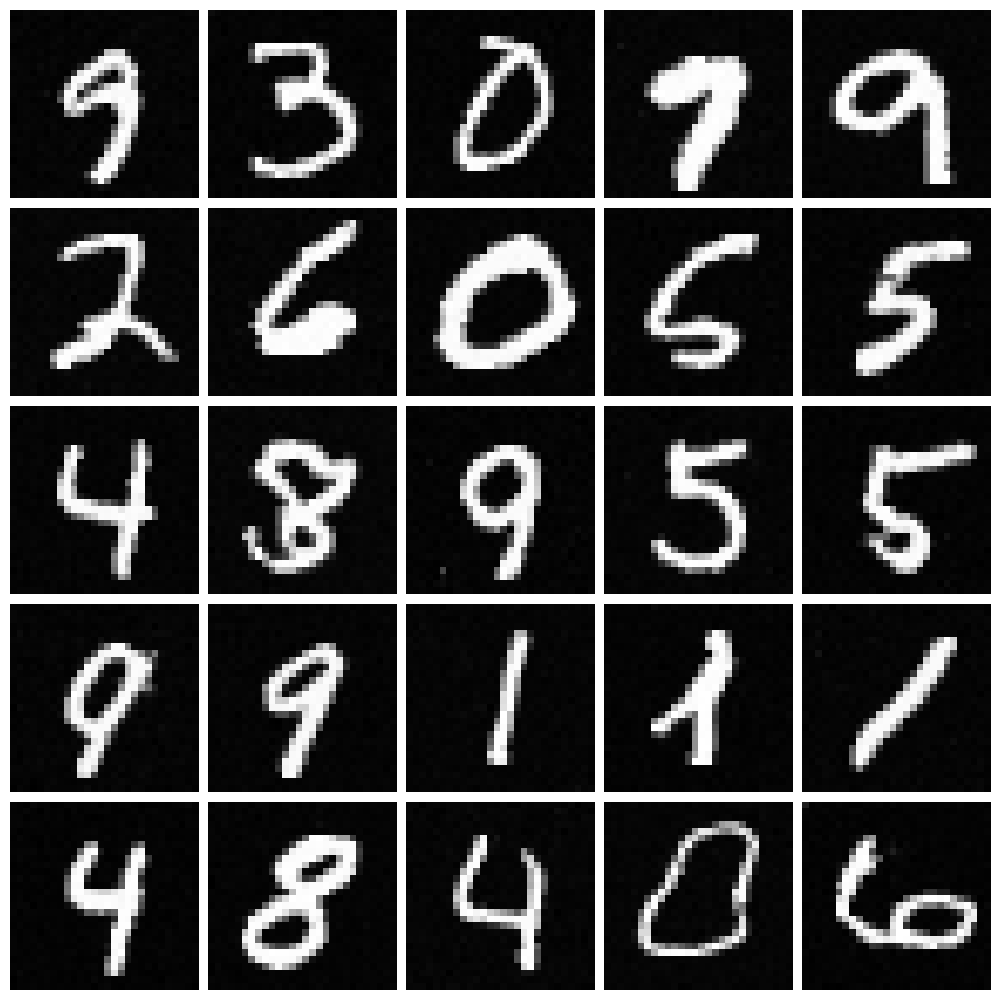}
        \caption{$\sigma = 0.1$}
    \end{subfigure}
    \hfill
    \begin{subfigure}{0.25\linewidth}
        \centering
        \includegraphics[width=\linewidth]{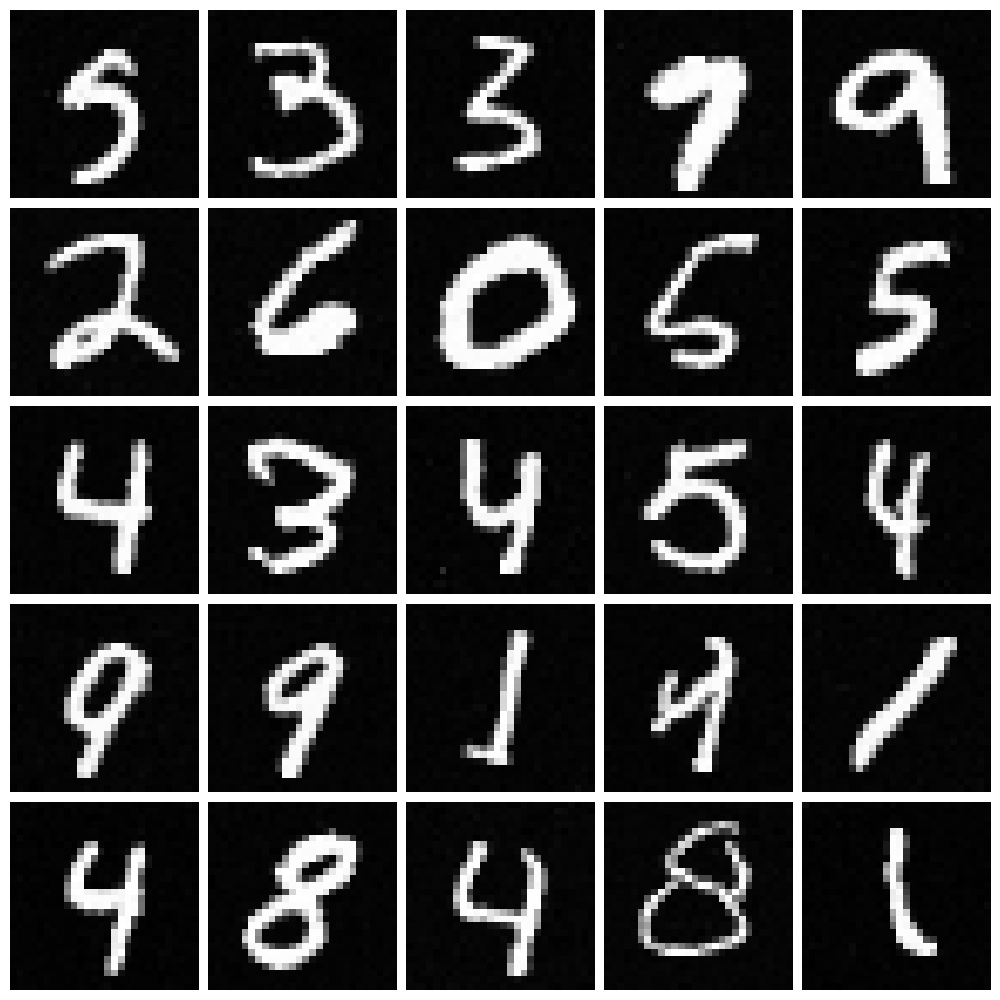}
        \caption{$\sigma = 0.5$}
    \end{subfigure}
    \hfill
    \begin{subfigure}{0.25\linewidth}
        \centering
        \includegraphics[width=\linewidth]{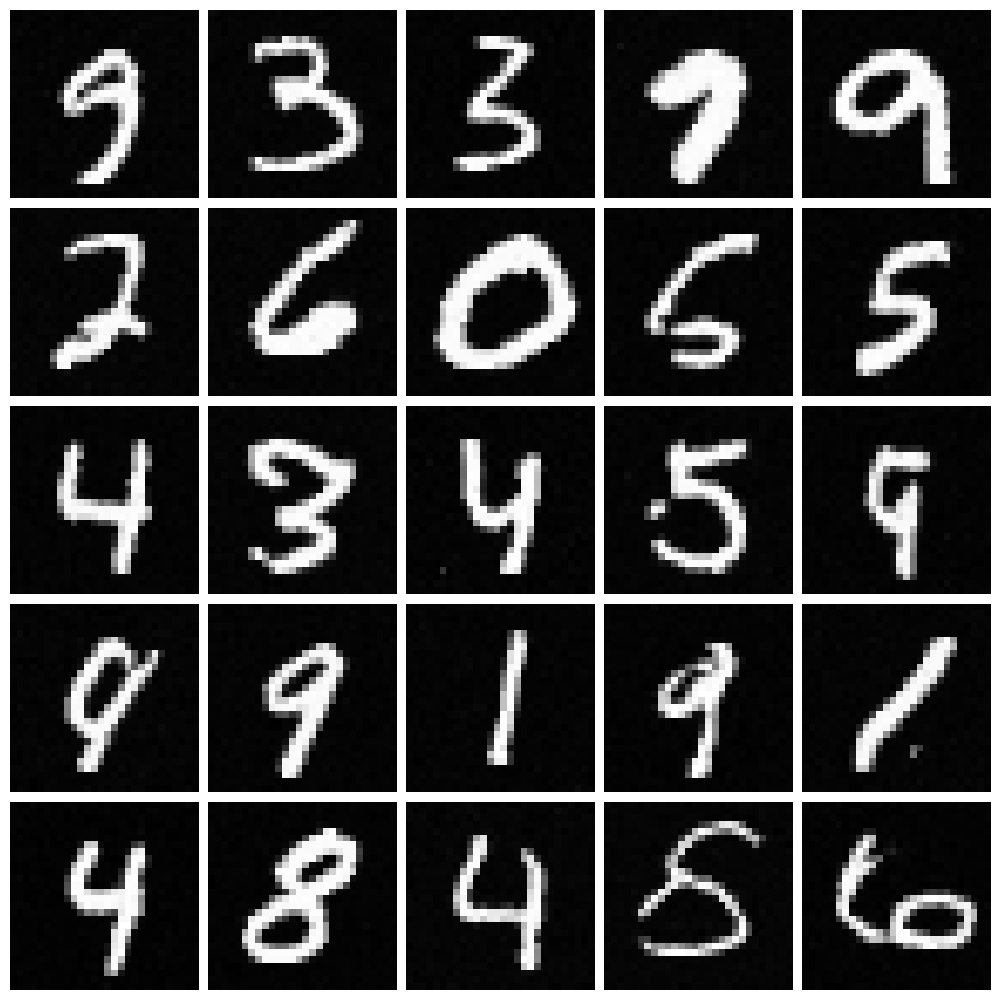}
        \caption{$\sigma = 1.0$}
    \end{subfigure}
    \begin{subfigure}{0.25\linewidth}
        \centering
        \includegraphics[width=\linewidth]{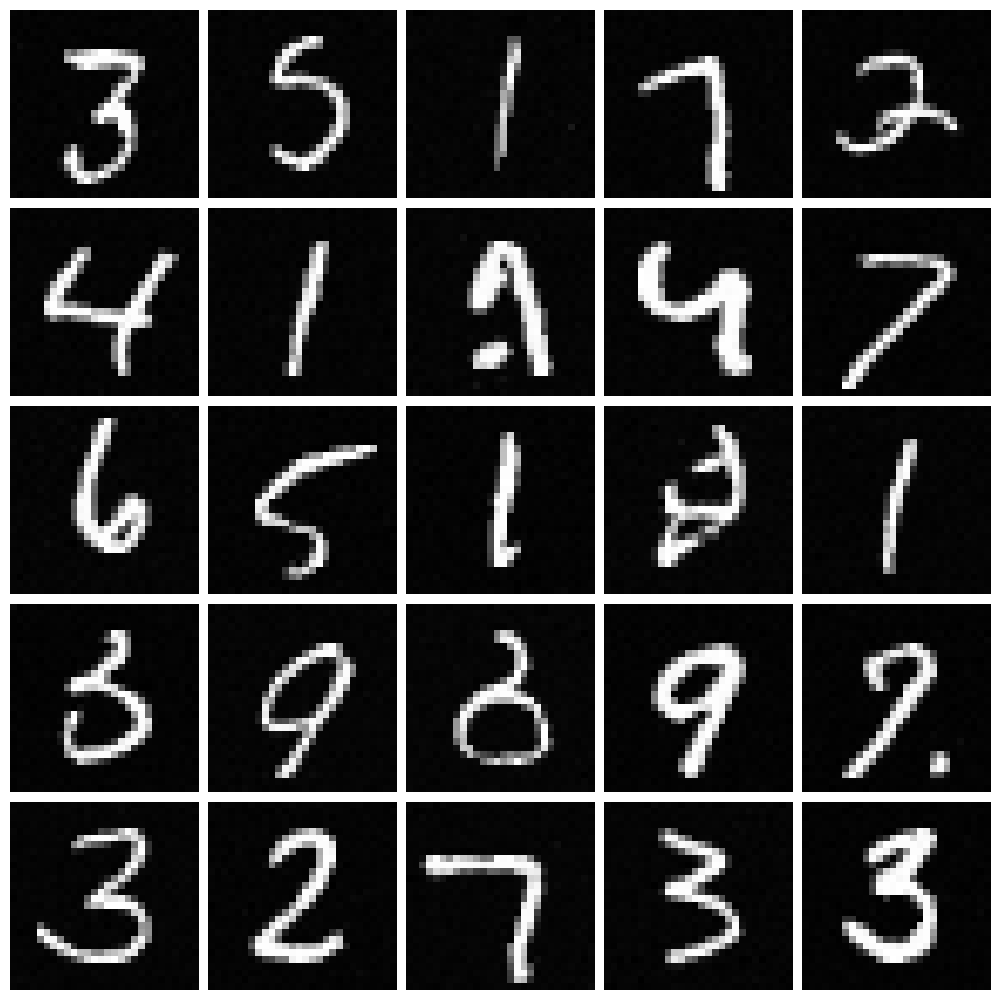}
        \caption{DDPM}
    \end{subfigure}
    \hfill
    \begin{subfigure}{0.25\linewidth}
        \centering
        \includegraphics[width=\linewidth]{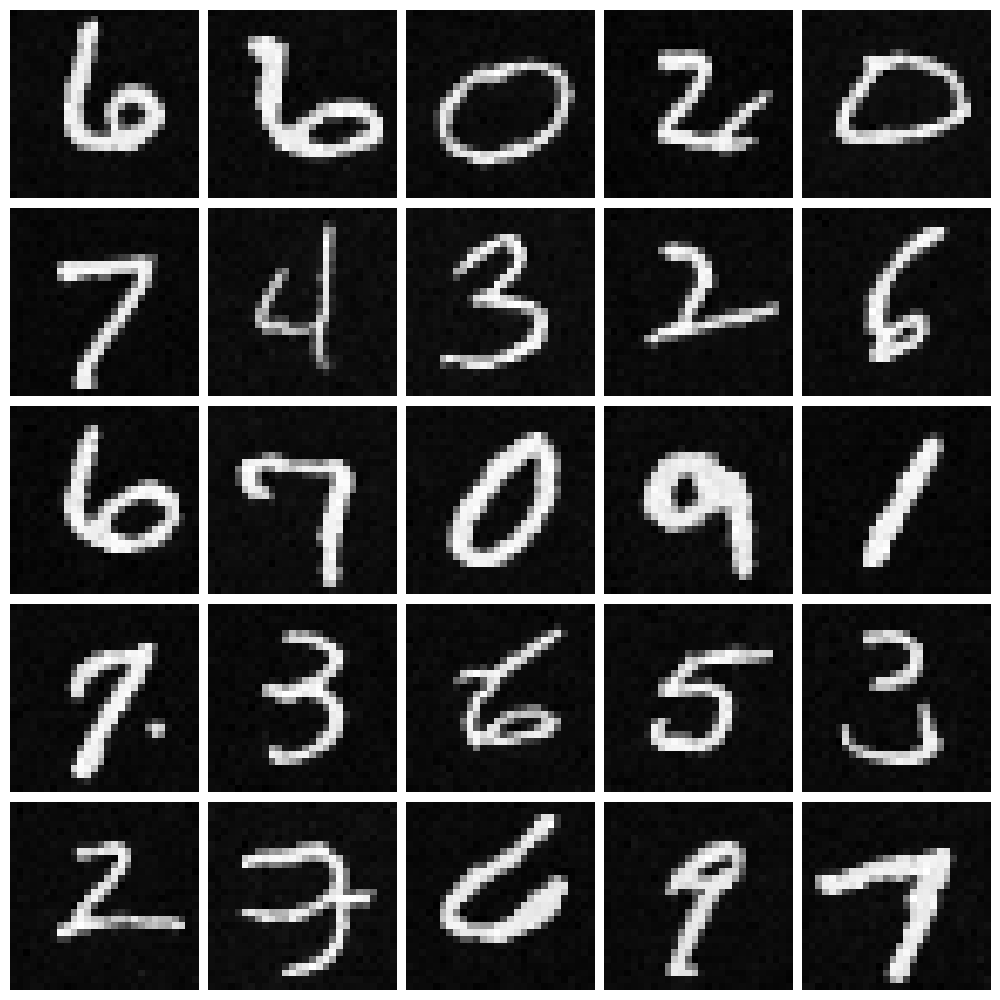}
        \caption{PIDM}
    \end{subfigure}
    \hfill
    \begin{subfigure}{0.25\linewidth}
        \centering
        \includegraphics[width=\linewidth]{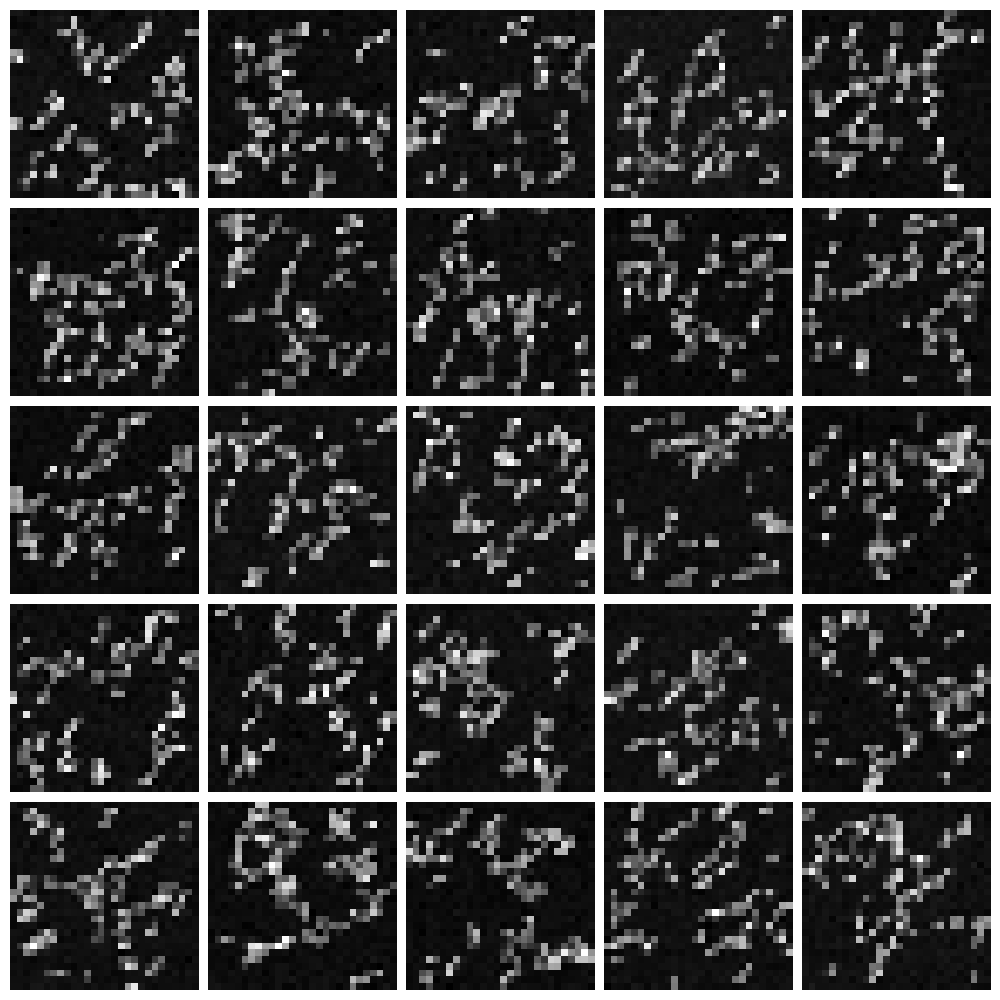}
        \caption{PDM}
    \end{subfigure}

    \caption{Examples of effect of $\sigma$ with 100 training data samples.}
\end{figure}

\begin{figure}[H]
    \centering

    \begin{subfigure}{0.25\linewidth}
        \centering
        \includegraphics[width=\linewidth]{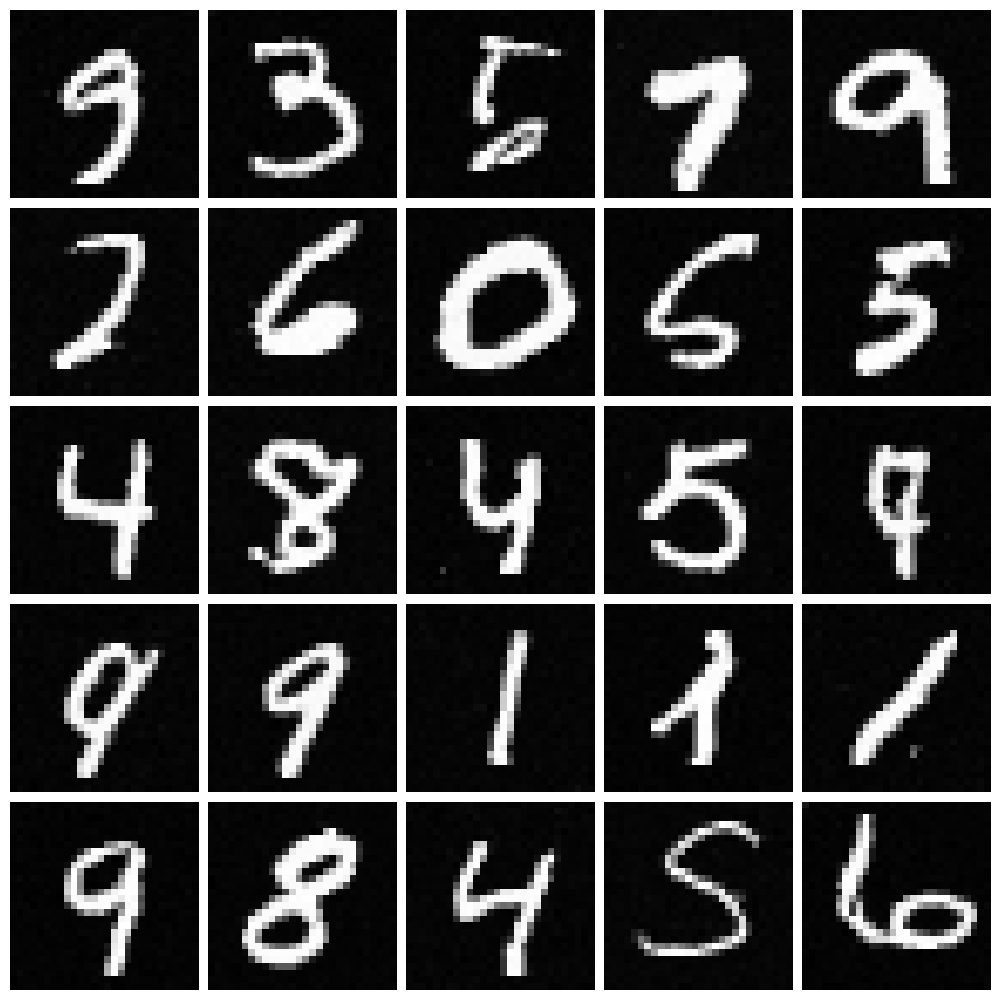}
        \caption{$\sigma = 0.0001$}
    \end{subfigure}
    \hfill
    \begin{subfigure}{0.25\linewidth}
        \centering
        \includegraphics[width=\linewidth]{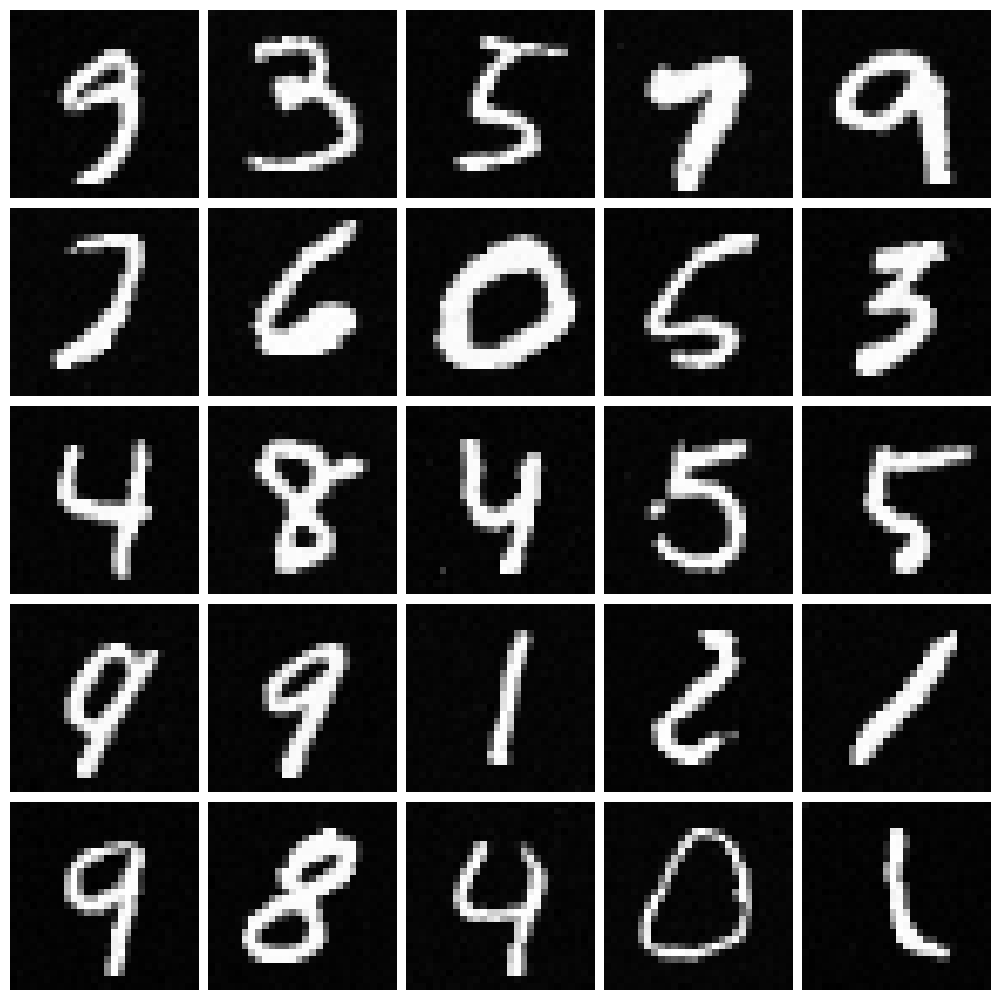}
        \caption{$\sigma = 0.0005$}
    \end{subfigure}
    \hfill
    \begin{subfigure}{0.25\linewidth}
        \centering
        \includegraphics[width=\linewidth]{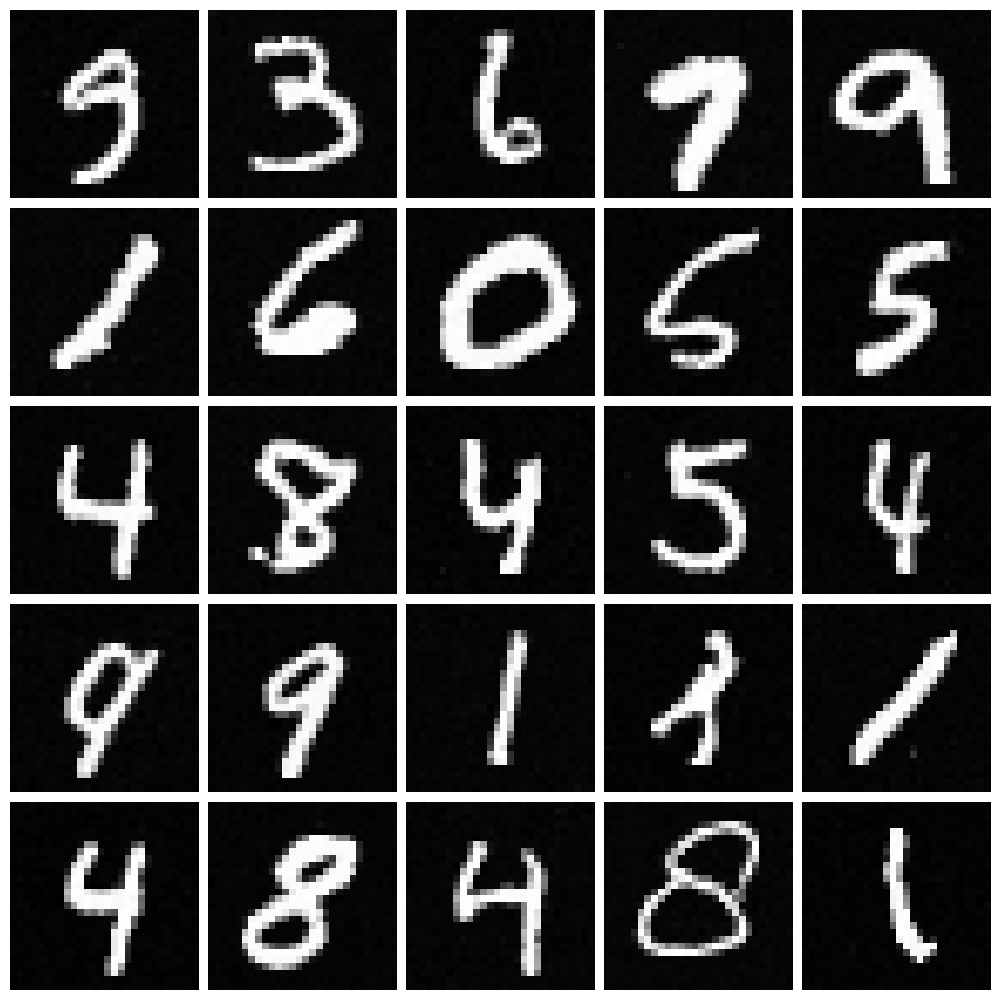}
        \caption{$\sigma = 0.001$}
    \end{subfigure}

    \begin{subfigure}{0.25\linewidth}
        \centering
        \includegraphics[width=\linewidth]{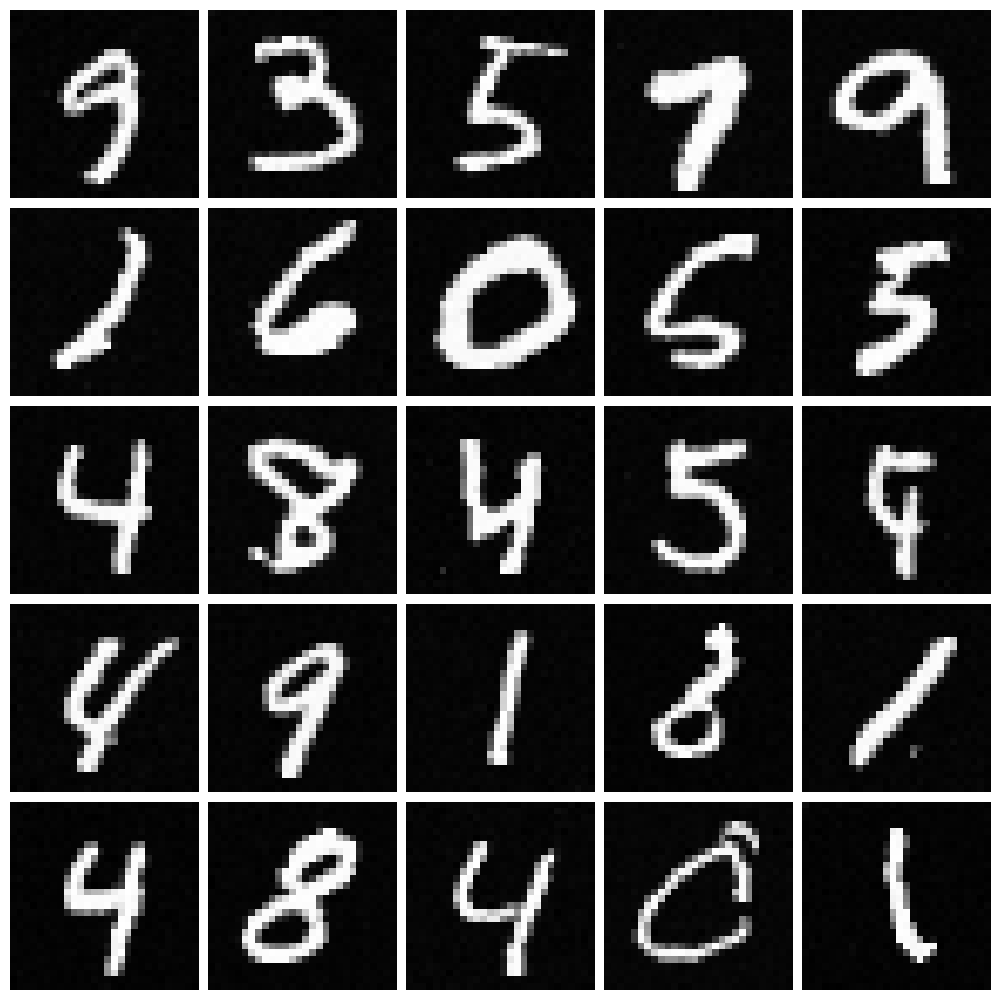}
        \caption{$\sigma = 0.005$}
    \end{subfigure}
    \hfill
    \begin{subfigure}{0.25\linewidth}
        \centering
        \includegraphics[width=\linewidth]{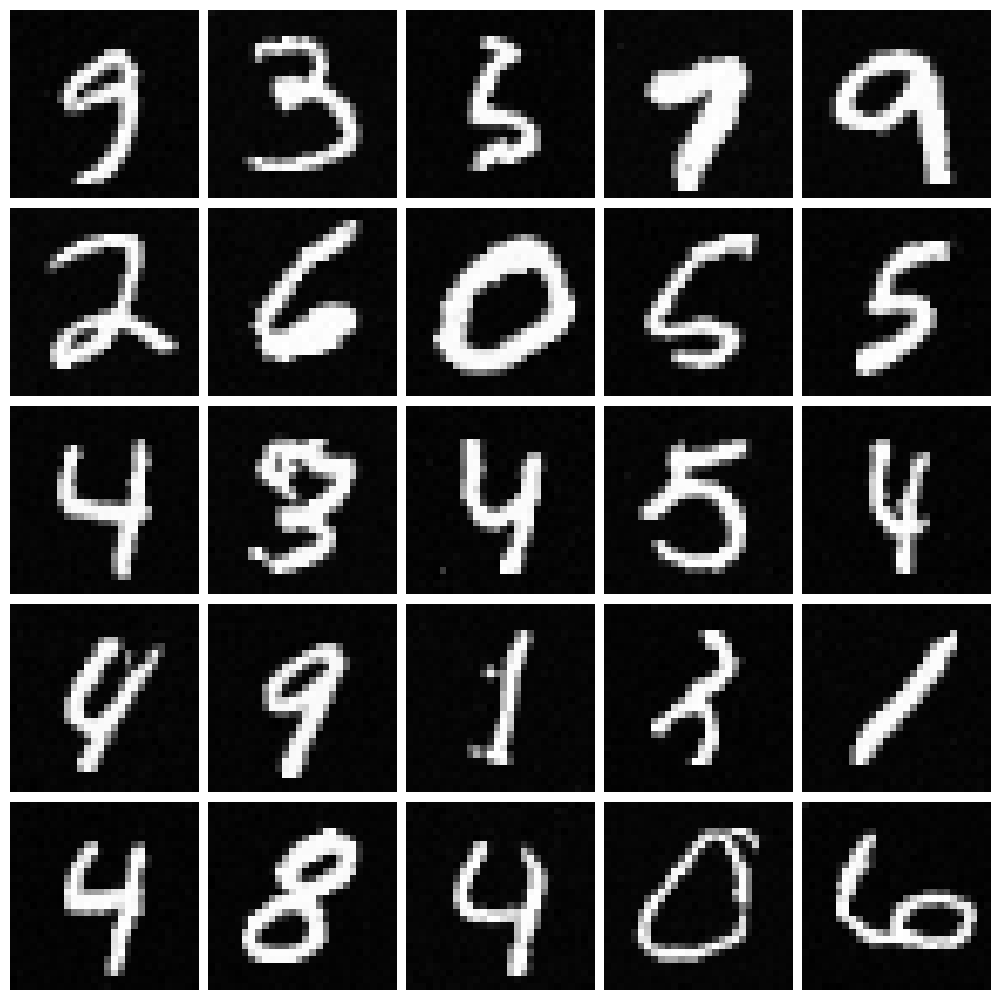}
        \caption{$\sigma = 0.01$}
    \end{subfigure}
    \hfill
    \begin{subfigure}{0.25\linewidth}
        \centering
        \includegraphics[width=\linewidth]{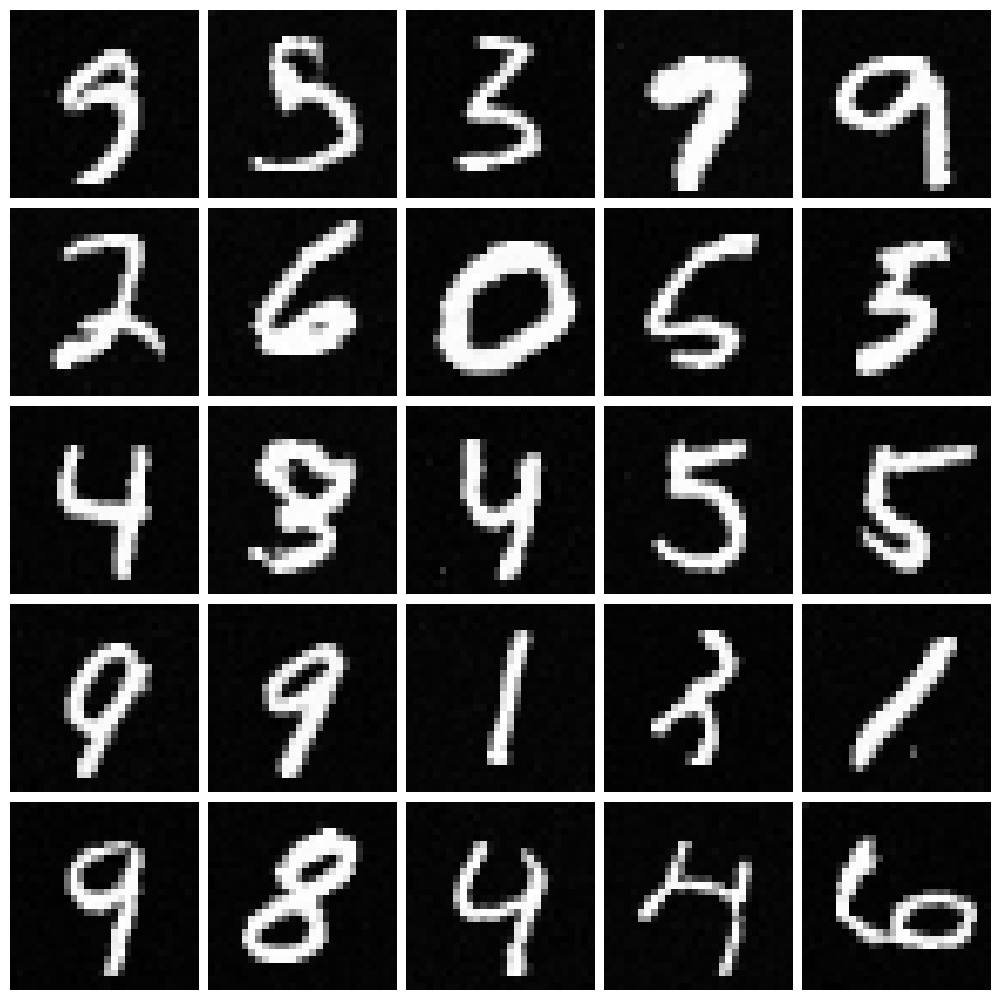}
        \caption{$\sigma = 0.05$}
    \end{subfigure}

    \begin{subfigure}{0.25\linewidth}
        \centering
        \includegraphics[width=\linewidth]{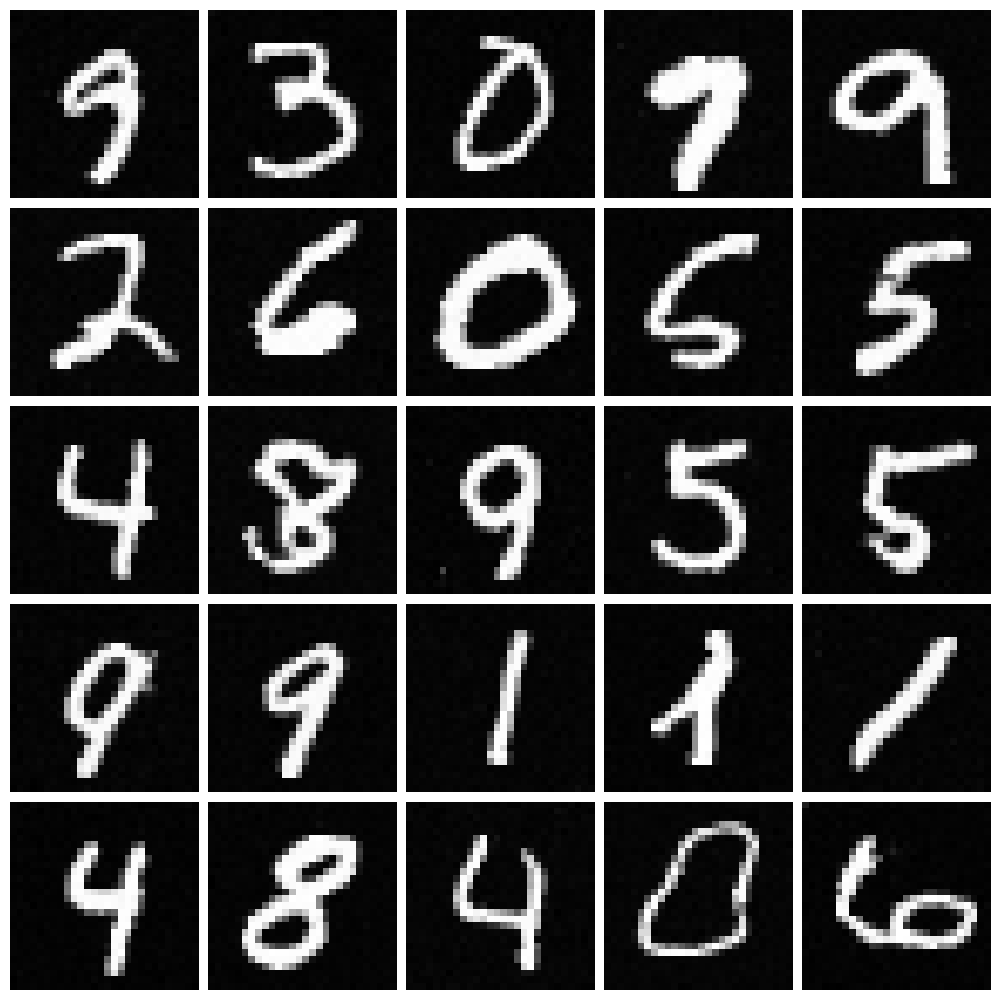}
        \caption{$\sigma = 0.1$}
    \end{subfigure}
    \hfill
    \begin{subfigure}{0.25\linewidth}
        \centering
        \includegraphics[width=\linewidth]{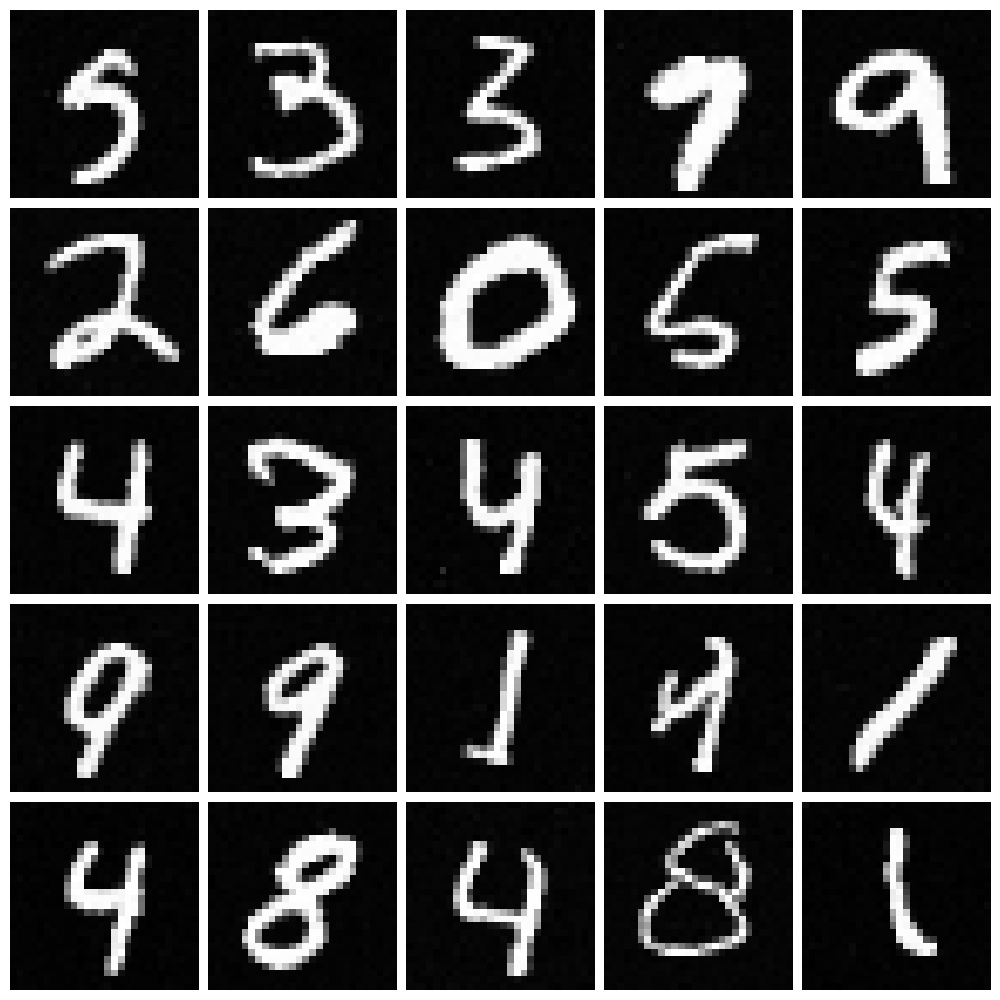}
        \caption{$\sigma = 0.5$}
    \end{subfigure}
    \hfill
    \begin{subfigure}{0.25\linewidth}
        \centering
        \includegraphics[width=\linewidth]{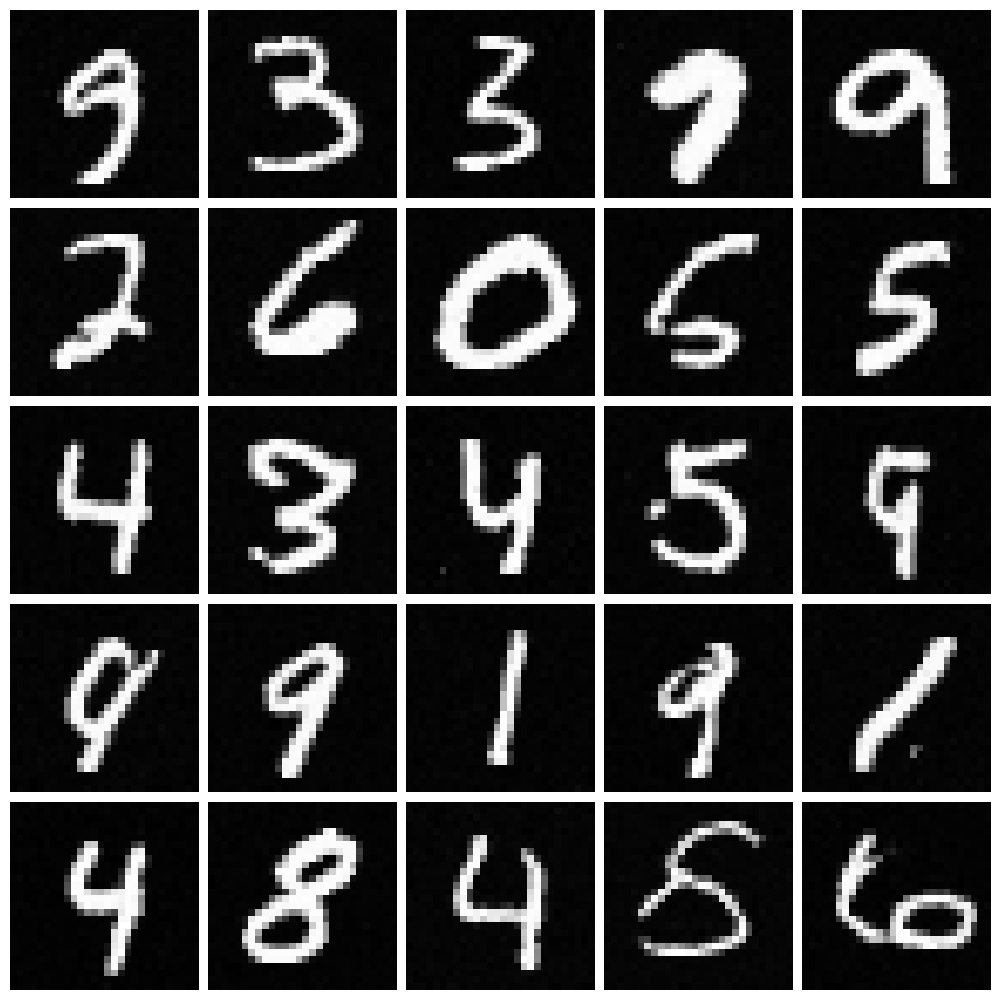}
        \caption{$\sigma = 1.0$}
    \end{subfigure}
    \begin{subfigure}{0.25\linewidth}
        \centering
        \includegraphics[width=\linewidth]{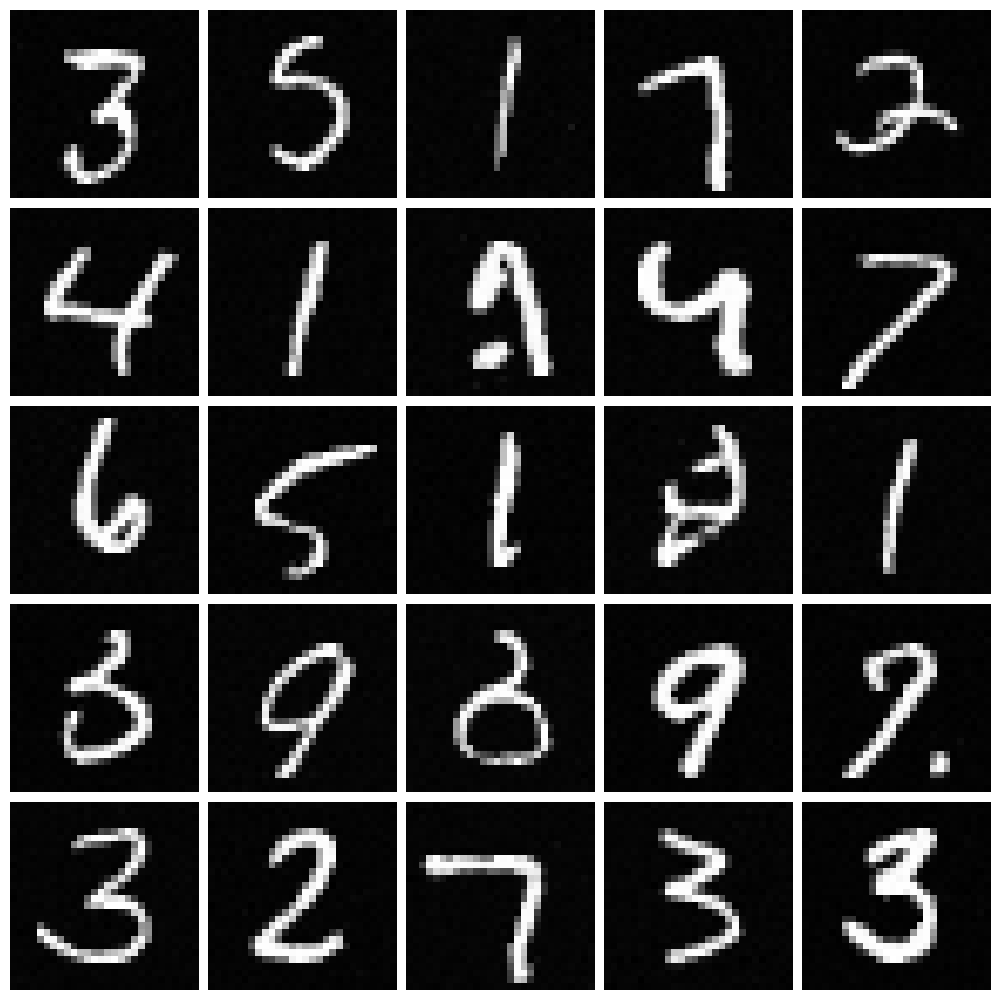}
        \caption{DDPM}
    \end{subfigure}
    \hfill
    \begin{subfigure}{0.25\linewidth}
        \centering
        \includegraphics[width=\linewidth]{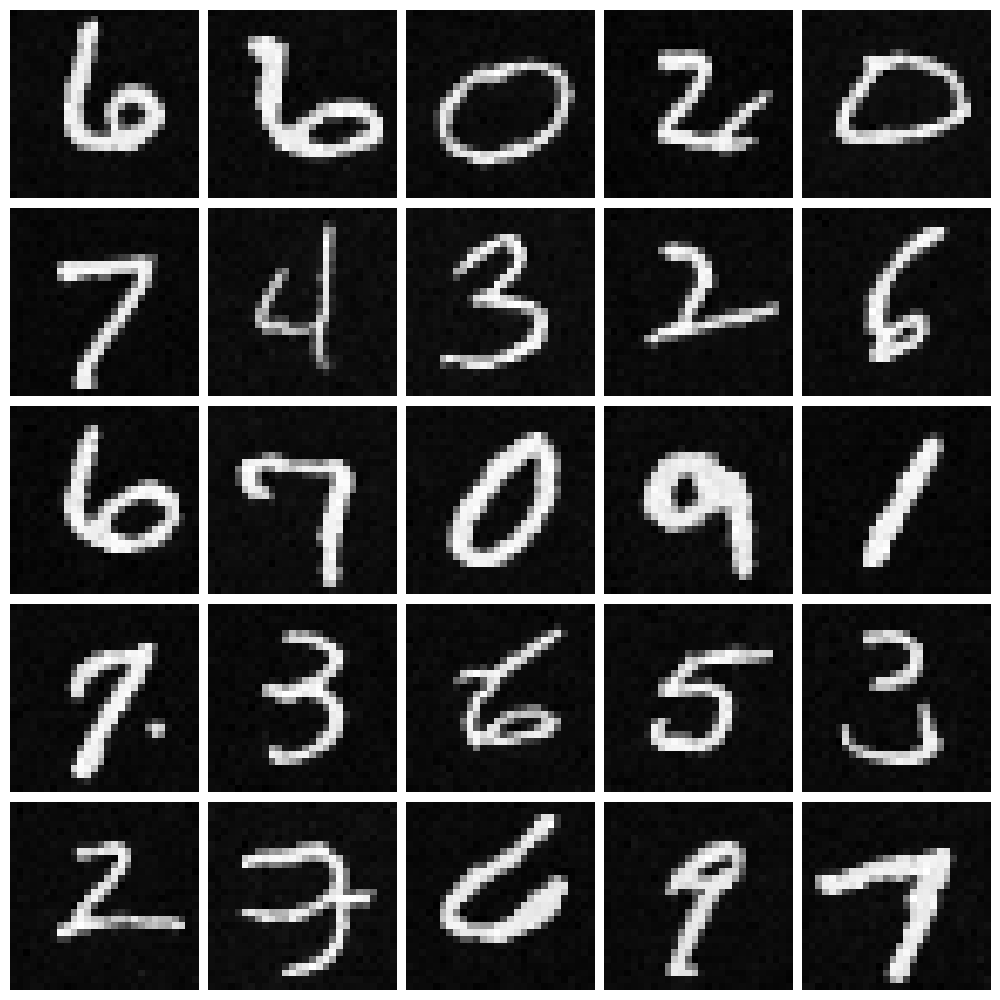}
        \caption{PIDM}
    \end{subfigure}
    \hfill
    \begin{subfigure}{0.25\linewidth}
        \centering
        \includegraphics[width=\linewidth]{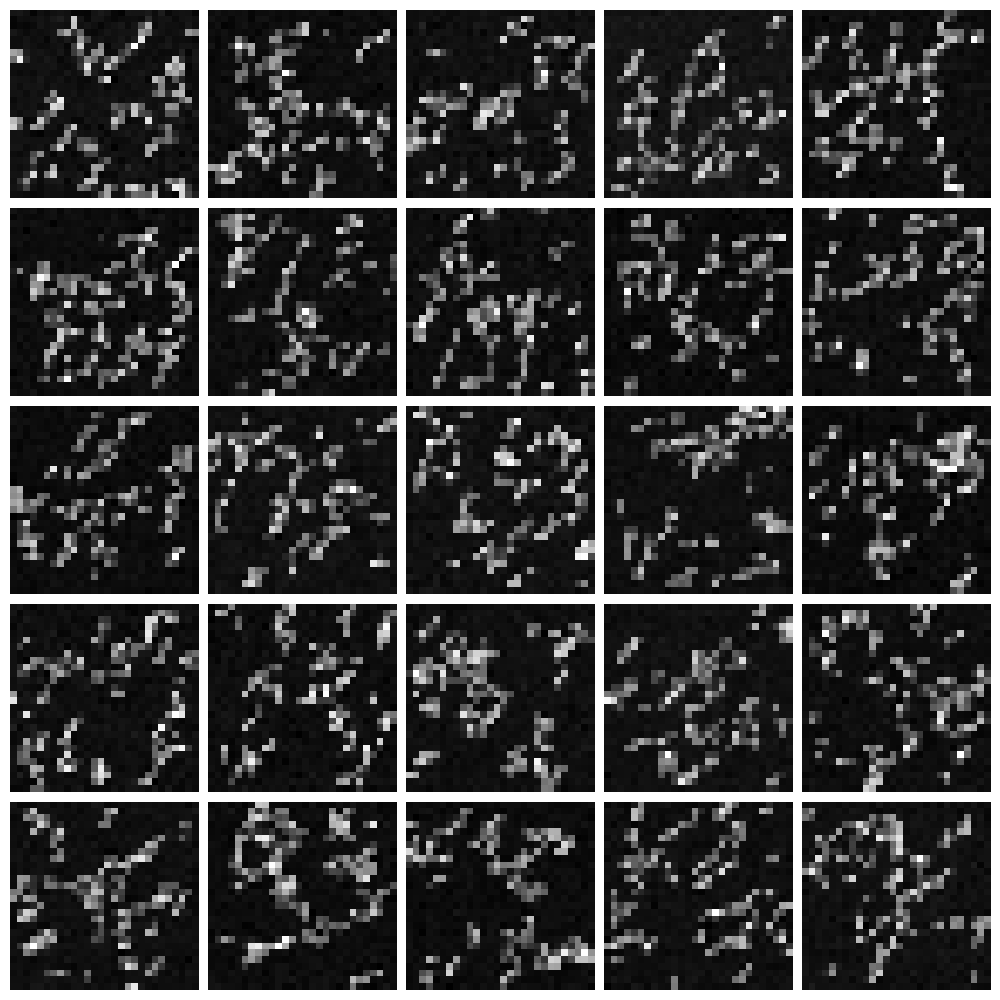}
        \caption{PDM}
    \end{subfigure}

    \caption{Examples of effect of $\sigma$ with 1,000 training data samples.}
\end{figure}

\begin{figure}[H]
    \centering

    \begin{subfigure}{0.25\linewidth}
        \centering
        \includegraphics[width=\linewidth]{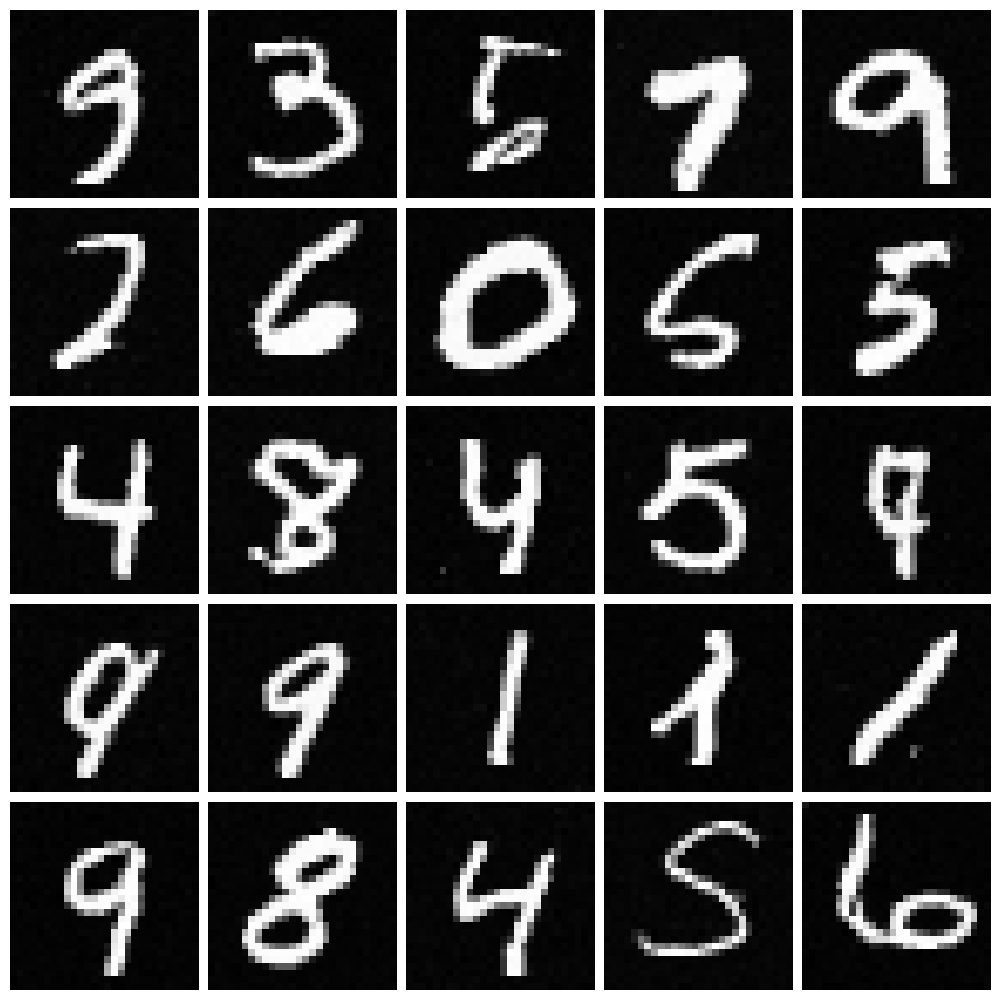}
        \caption{$\sigma = 0.0001$}
    \end{subfigure}
    \hfill
    \begin{subfigure}{0.25\linewidth}
        \centering
        \includegraphics[width=\linewidth]{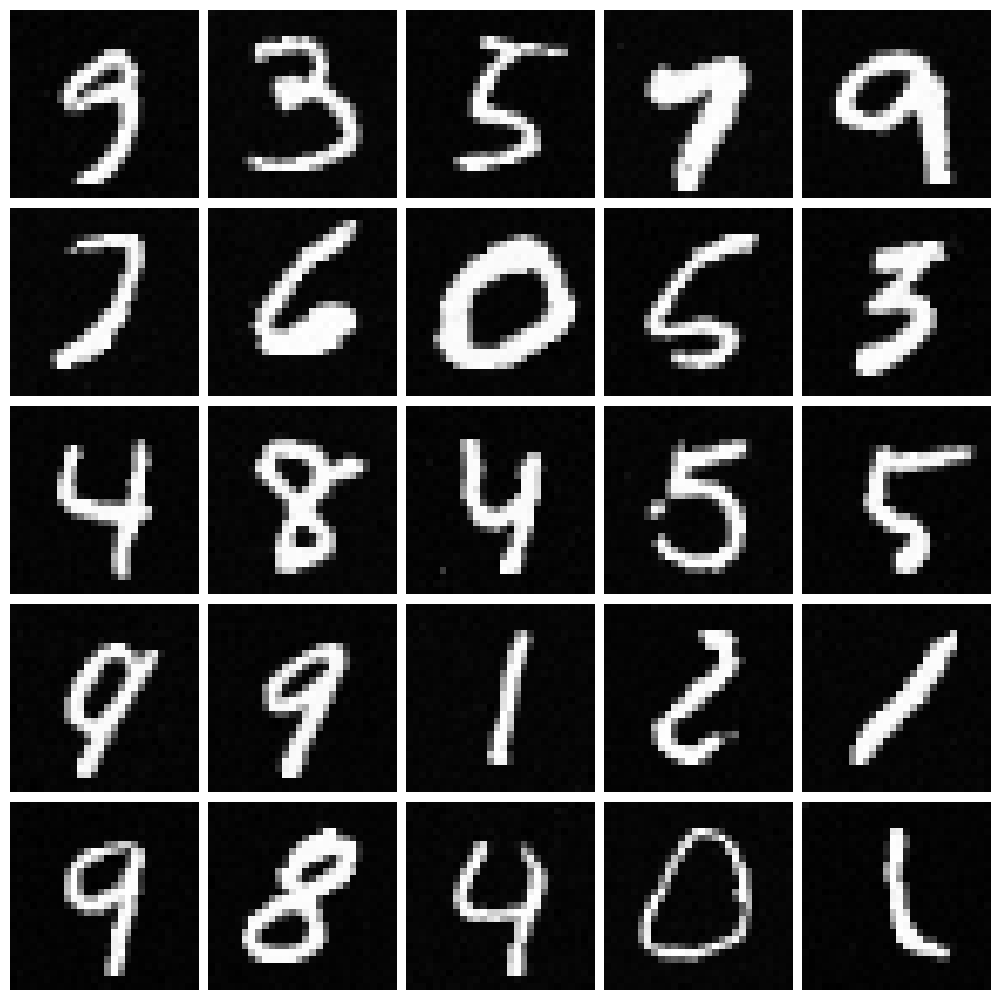}
        \caption{$\sigma = 0.0005$}
    \end{subfigure}
    \hfill
    \begin{subfigure}{0.25\linewidth}
        \centering
        \includegraphics[width=\linewidth]{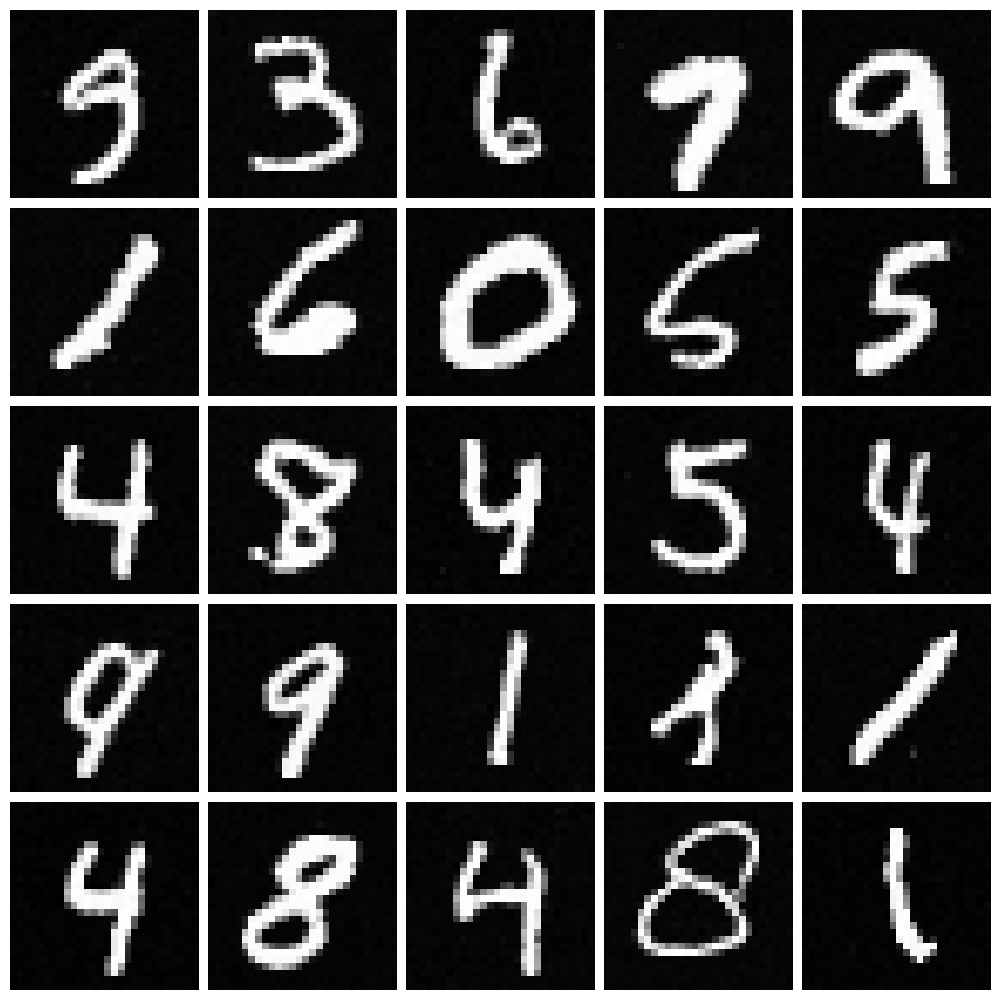}
        \caption{$\sigma = 0.001$}
    \end{subfigure}

    \begin{subfigure}{0.25\linewidth}
        \centering
        \includegraphics[width=\linewidth]{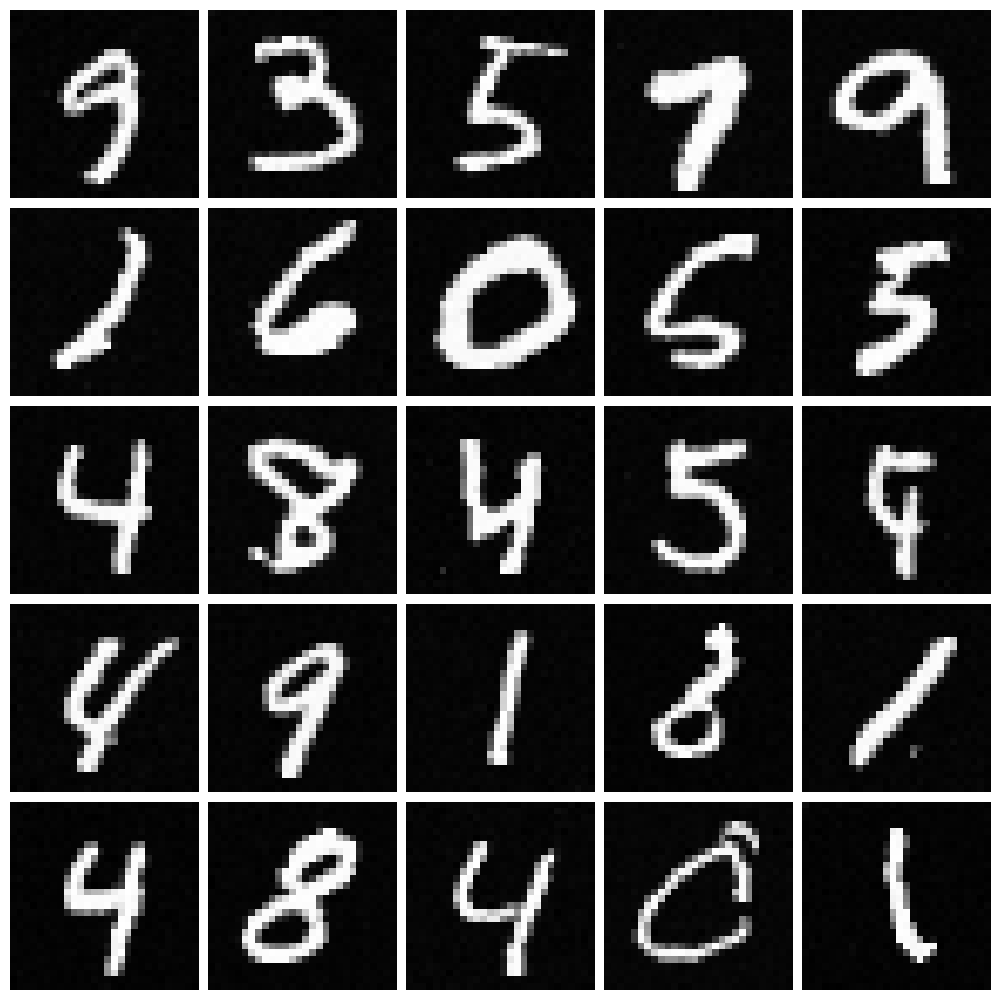}
        \caption{$\sigma = 0.005$}
    \end{subfigure}
    \hfill
    \begin{subfigure}{0.25\linewidth}
        \centering
        \includegraphics[width=\linewidth]{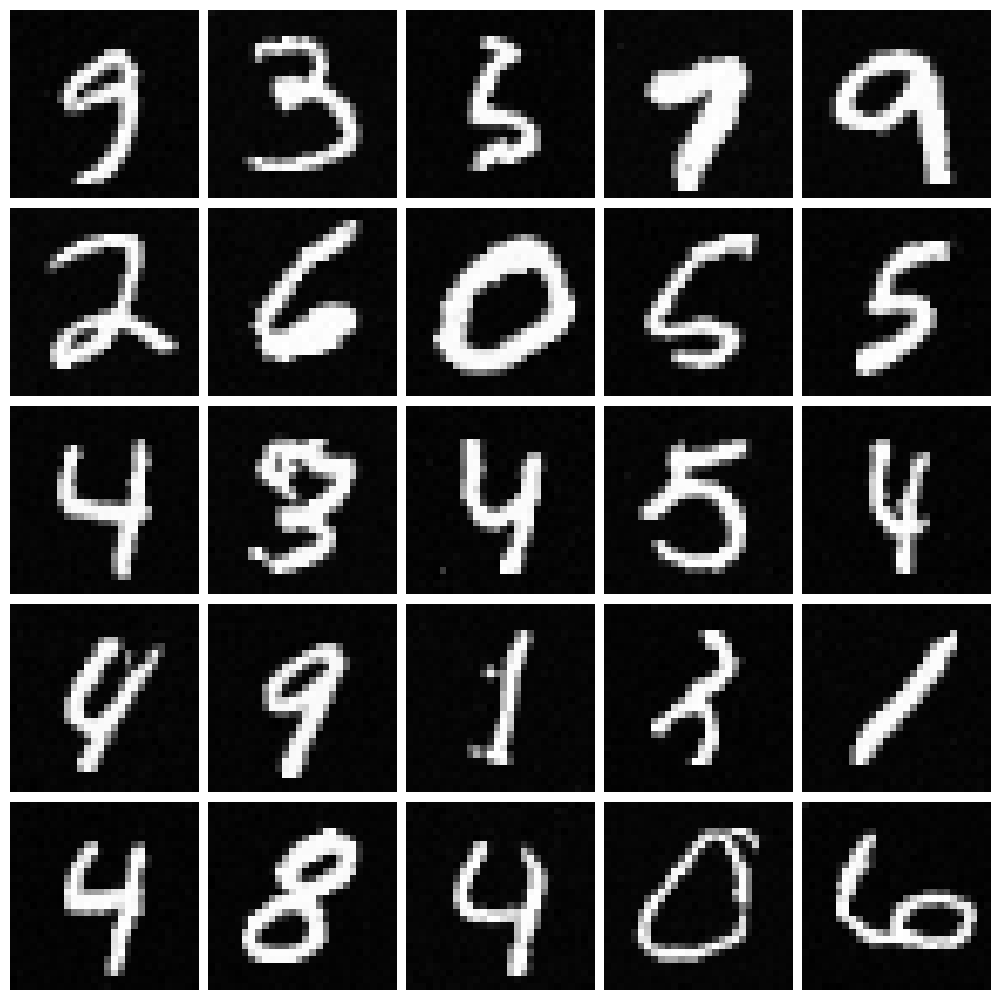}
        \caption{$\sigma = 0.01$}
    \end{subfigure}
    \hfill
    \begin{subfigure}{0.25\linewidth}
        \centering
        \includegraphics[width=\linewidth]{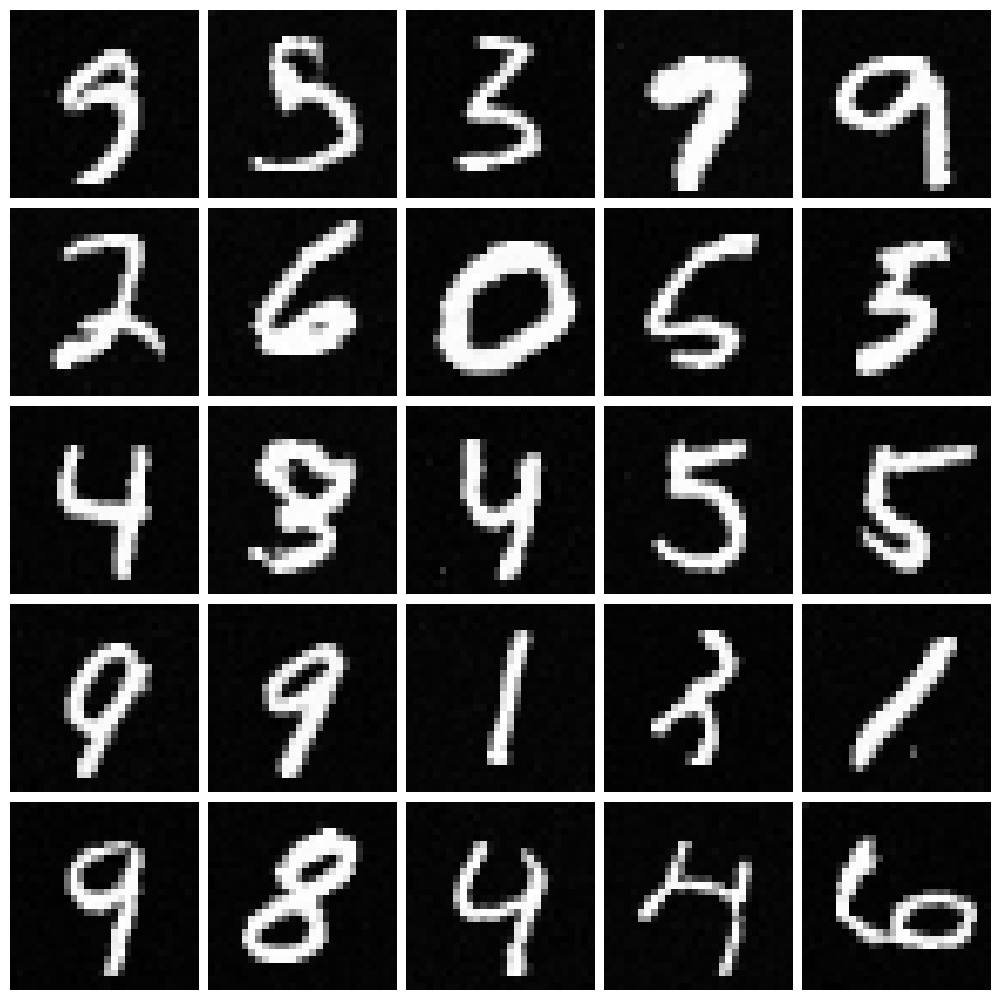}
        \caption{$\sigma = 0.05$}
    \end{subfigure}

    \begin{subfigure}{0.25\linewidth}
        \centering
        \includegraphics[width=\linewidth]{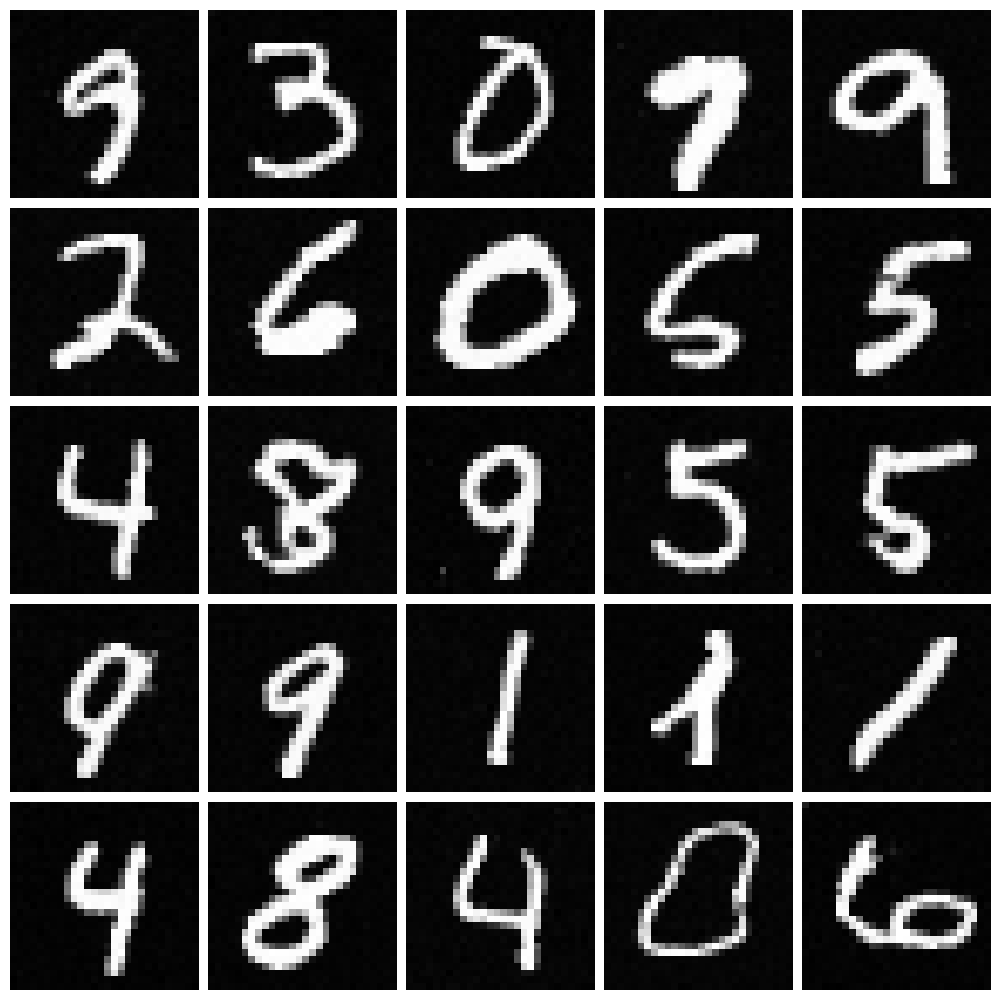}
        \caption{$\sigma = 0.1$}
    \end{subfigure}
    \hfill
    \begin{subfigure}{0.25\linewidth}
        \centering
        \includegraphics[width=\linewidth]{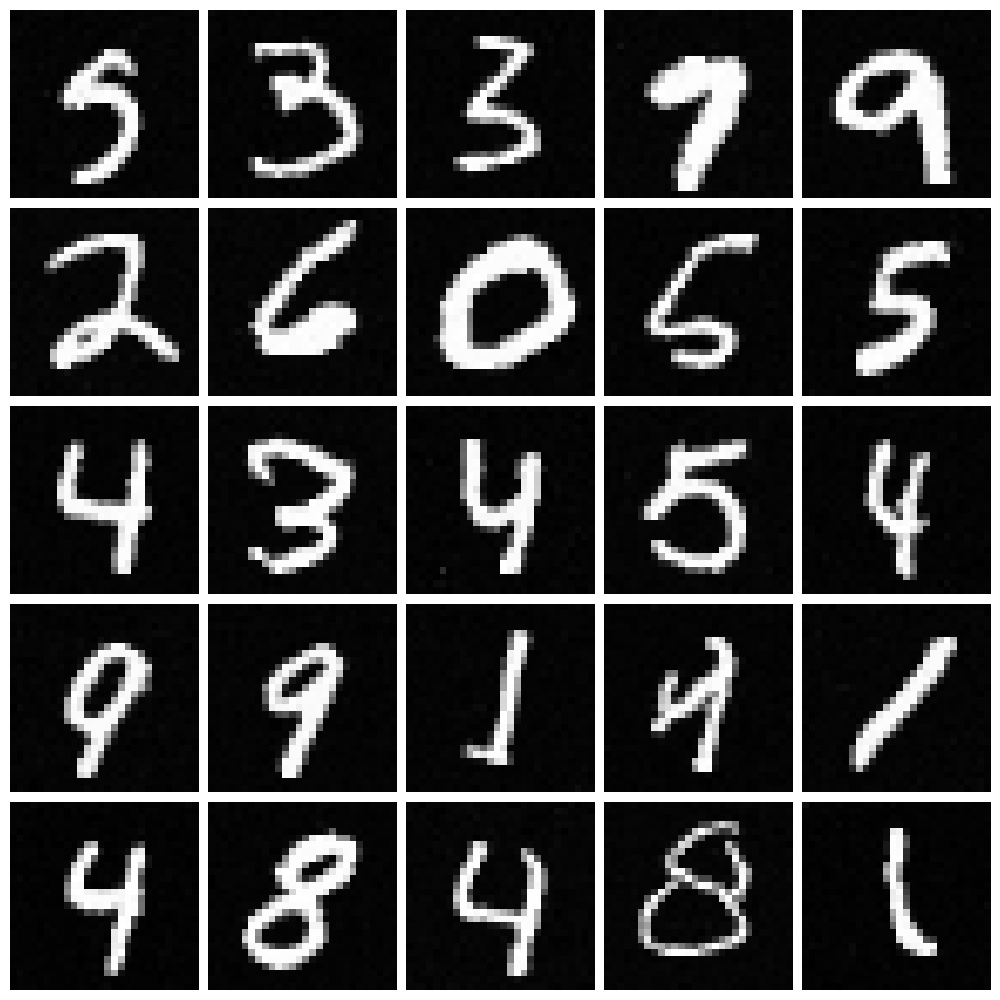}
        \caption{$\sigma = 0.5$}
    \end{subfigure}
    \hfill
    \begin{subfigure}{0.25\linewidth}
        \centering
        \includegraphics[width=\linewidth]{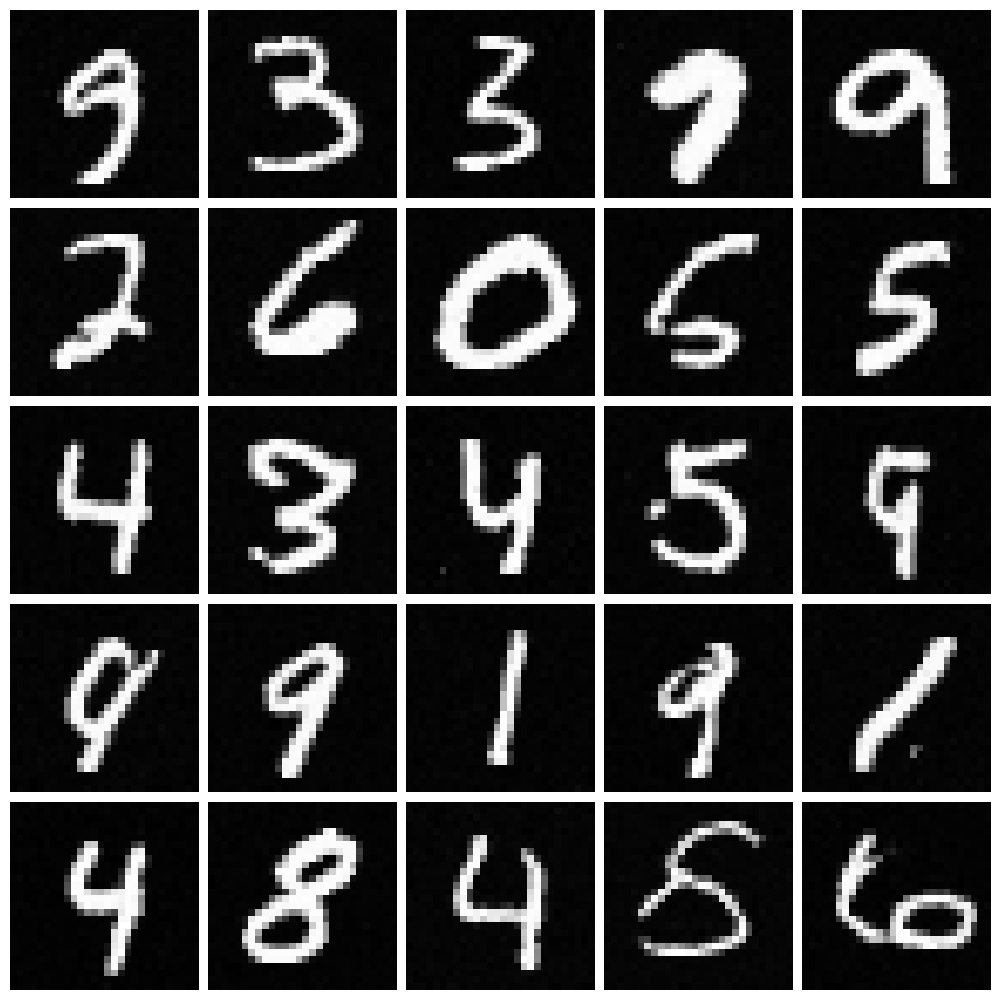}
        \caption{$\sigma = 1.0$}
    \end{subfigure}
    \begin{subfigure}{0.25\linewidth}
        \centering
        \includegraphics[width=\linewidth]{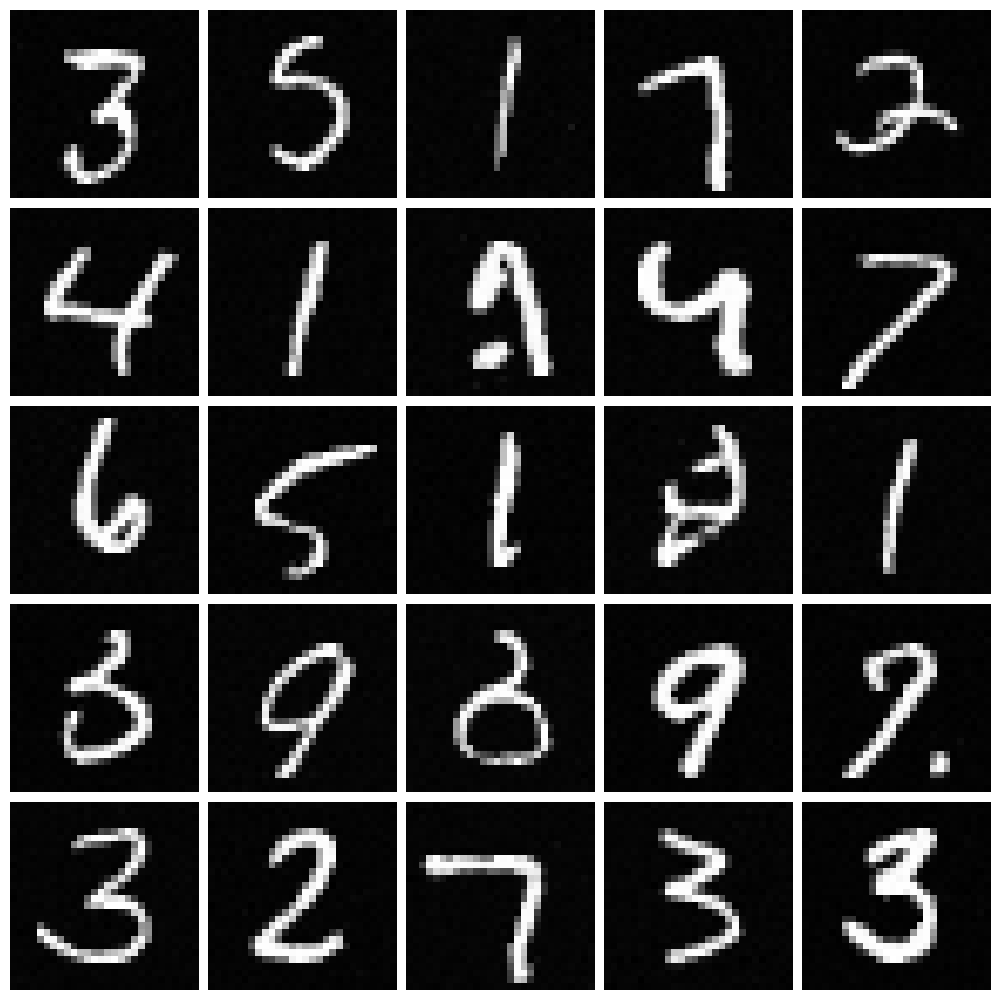}
        \caption{DDPM}
    \end{subfigure}
    \hfill
    \begin{subfigure}{0.25\linewidth}
        \centering
        \includegraphics[width=\linewidth]{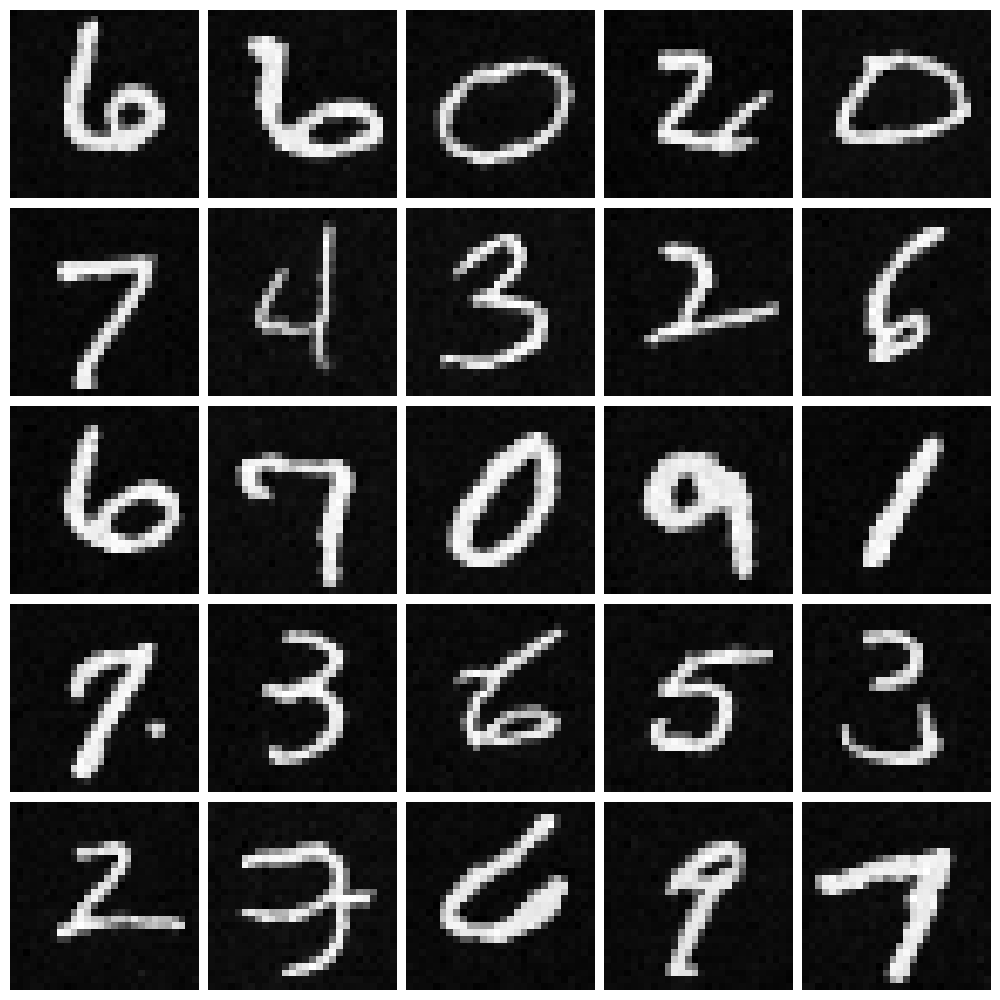}
        \caption{PIDM}
    \end{subfigure}
    \hfill
    \begin{subfigure}{0.25\linewidth}
        \centering
        \includegraphics[width=\linewidth]{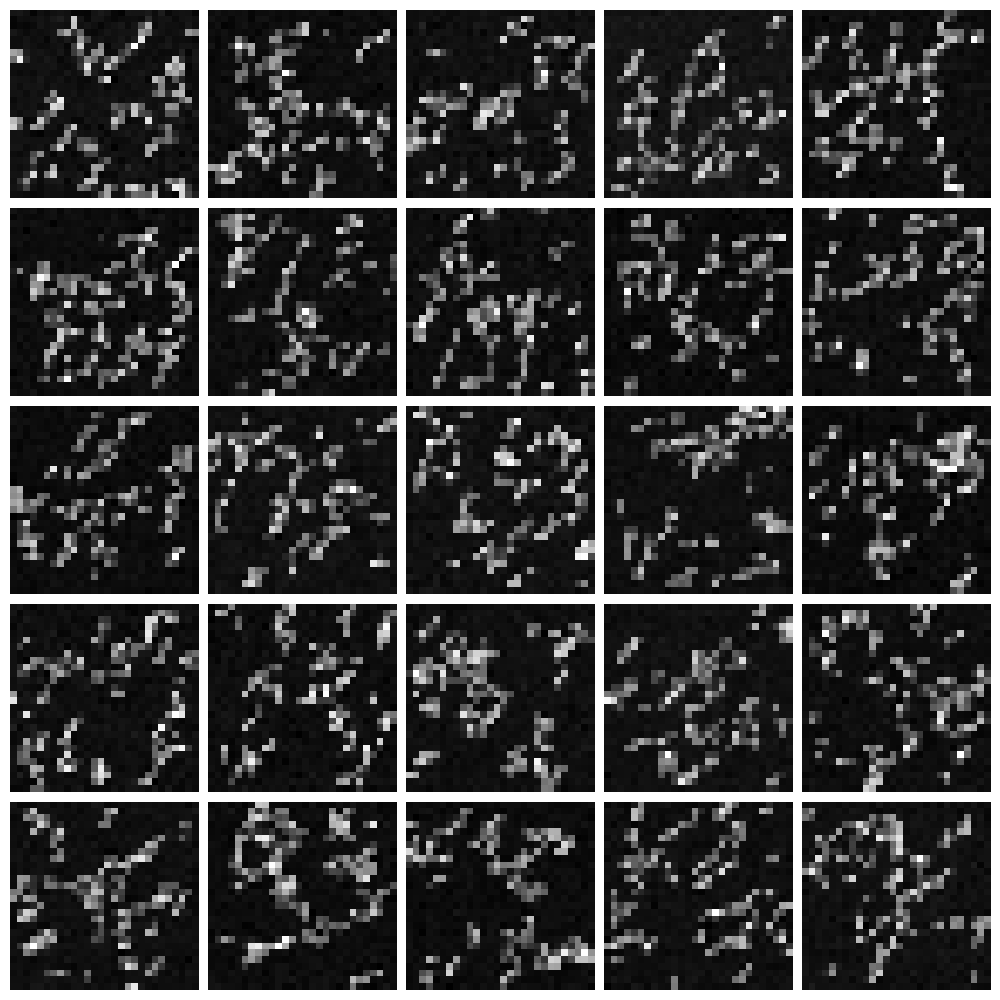}
        \caption{PDM}
    \end{subfigure}

    \caption{Examples of effect of $\sigma$ with 10,000 training data samples.}
\end{figure}

\end{document}